\theoremstyle{plain}
\newtheorem{theorem}{Theorem}[section]
\newtheorem{proposition}[theorem]{Proposition}
\newtheorem{lemma}[theorem]{Lemma}
\newtheorem{corollary}[theorem]{Corollary}
\theoremstyle{definition}
\newtheorem{definition}[theorem]{Definition}
\newtheorem{assumption}[theorem]{Assumption}
\newtheorem{example}[theorem]{Example}
\theoremstyle{remark}
\newtheorem{remark}[theorem]{Remark}
\newcommand\RG[1]{%\textcolor{black}{
#1}
\newcommand{\id}{\mathtt{id}} %identify function
\newcommand\eq[1]{\begin{equation}#1\end{equation}}
\newcommand{\RR}{\mathbb{R}}
\newcommand{\RD}{\mathbb{R}^D} % grande dimension
\newcommand{\Rd}{\mathbb{R}^d} % petite dimension
\newcommand{\ouv}{\Omega} % ouvert ambient
\newcommand{\vdim}{\mathrm{dim}}
\newcommand{\linspan}{\mathrm{span}}
\newcommand{\range}{\mathrm{range}}
\newcommand{\lie}{\mathrm{Lie}}
\newcommand{\R}{\mathbb{R}}
\newcommand\SM[1]{\textcolor{black}{#1}}
\newcommand{\x}{\theta}
\newcommand{\loss}{\ell}
\newcommand{\Loss}{L}
\newcommand{\Xspace}{\mathcal{X}_\x}
\newcommand\Wspace[2]{\mathcal{W}^{#1}_{#2}}
\newcommand{\Wgx}{\Wspace{g,\RG{\ell}}{\x}}
\newcommand{\Wfp}{\Wspace{f,\RG{\ell}}{\phi(\x)}}
\newcommand{\LossSpace}{\mathcal{V}_\loss}
\newcommand\Proj{P_\loss}
\newcommand\Wfspace[1]{\mathbb{W}^{#1}}
\newcommand{\liedim}{k}
\newcommand{\z}{\alpha}
\newcommand{\smallvec}[1]{ \left(\begin{smallmatrix}#1\end{smallmatrix}\right) }
\newcommand{\diag}{\mathrm{diag}}
\icmltitlerunning{Conservation laws for ResNets and Transformers}
\begin{document}

\twocolumn[
\icmltitle{Transformative or Conservative? \\ Conservation laws for ResNets and Transformers}

% It is OKAY to include author information, even for blind
% submissions: the style file will automatically remove it for you
% unless you've provided the [accepted] option to the icml2025
% package.

% List of affiliations: The first argument should be a (short)
% identifier you will use later to specify author affiliations
% Academic affiliations should list Department, University, City, Region, Country
% Industry affiliations should list Company, City, Region, Country

% You can specify symbols, otherwise they are numbered in order.
% Ideally, you should not use this facility. Affiliations will be numbered
% in order of appearance and this is the preferred way.
%\icmlsetsymbol{equal}{*}

\begin{icmlauthorlist}
\icmlauthor{Sibylle Marcotte}{yyy}
\icmlauthor{Rémi Gribonval}{Inria}
\icmlauthor{Gabriel Peyré}{yyy,sch}
\end{icmlauthorlist}

\icmlaffiliation{yyy}{ENS-PSL Univ.}
\icmlaffiliation{Inria}{Inria, CNRS, ENS de Lyon, Université Claude Bernard Lyon 1, LIP, UMR 5668, 69342, Lyon cedex 07, France}
\icmlaffiliation{sch}{CNRS}

\icmlcorrespondingauthor{Sibylle Marcotte}{sibylle.marcotte@ens.fr}
% \icmlcorrespondingauthor{Firstname2 Lastname2}{first2.last2@www.uk}

% You may provide any keywords that you
% find helpful for describing your paper; these are used to populate
% the "keywords" metadata in the PDF but will not be shown in the document
\icmlkeywords{Machine Learning, ICML}

\vskip 0.3in
]

% this must go after the closing bracket ] following \twocolumn[ ...

% This command actually creates the footnote in the first column
% listing the affiliations and the copyright notice.
% The command takes one argument, which is text to display at the start of the footnote.
% The \icmlEqualContribution command is standard text for equal contribution.
% Remove it (just {}) if you do not need this facility.

\printAffiliationsAndNotice{}  % leave blank if no need to mention equal contribution
%\printAffiliationsAndNotice{\icmlEqualContribution} % otherwise use the standard text.

\begin{abstract}
While conservation laws in gradient flow training dynamics are well understood for (mostly shallow) ReLU and linear networks, their study remains largely unexplored for more practical architectures. This paper bridges this gap by deriving and analyzing conservation laws for modern architectures, with a focus on convolutional ResNets and Transformer networks.
\RG{For this, we first show that basic building blocks such as ReLU (or linear) shallow networks,  with or without convolution, have easily expressed conservation laws, and no more than the known ones. In the case of a single attention layer, we also completely describe all conservation laws, and we show that residual blocks have the same conservation laws as the same block without skip connection. We then introduce the notion of conservation laws that depend only on {\em a subset} of parameters (corresponding e.g. to a pair of consecutive layers, to a residual block, or to an attention layer). We demonstrate that the characterization of such laws can be reduced to the analysis of the corresponding building block in isolation.}
Finally, we examine how these newly discovered conservation principles, initially established in the continuous gradient flow regime, persist under discrete optimization dynamics, particularly in the context of Stochastic Gradient Descent (SGD).
\end{abstract}

\section{Introduction}
Understanding the behavior of neural networks during training remains a fundamental challenge in deep learning. A particularly insightful approach to this challenge involves studying conserved functions - quantities that remain invariant throughout the training process. These conserved functions reveal important geometric properties of the training dynamics and serve a dual purpose. First, they provide valuable insights into the implicit bias induced by both the training algorithm and network architecture, by revealing properties that persist from initialization to the final solution \cite{Saxe, Bah, Arora18b, Tarmoun, Min}. Second, they have emerged as crucial tools in theoretical analyses, playing a key role in convergence studies \cite{Du, Arora18a, Bah, chizat, Ji, Min}. Understanding these conservation laws can also be applied to designing new optimization schemes, which no longer preserve these laws but instead enforce them to reach a desired value (e.g., a balanced condition for ReLU networks) in order to potentially accelerate convergence \cite{saulweight, stock2018equinormalization}.

\paragraph{Conservation laws.} In the context of Euclidean gradient flow training dynamics, conservation laws in the form of ``balancedness conditions'' have been established for ReLU and linear networks \cite{Saxe, Du, Arora18a}. Subsequently, \cite{marcotte2023abide} demonstrated {the ``completeness'' of these laws:} %that
no additional conservation laws exist for these architectures {in the shallow case} under Euclidean gradient flows. For these network architectures, \cite{marcottekeep} unveiled novel conservation laws when considering alternative optimization algorithms -- particularly non-Euclidean gradient flows, as employed in ICNN or NMF, or momentum-based dynamics, {also} demonstrating their completeness. Furthermore, \cite{marcottekeep} revealed that conservation laws under momentum dynamics exhibit fundamentally different characteristics compared to simple gradient flows: these laws are \emph{time and velocity-dependent}, and are generally fewer in number than in the gradient flow case. For feed-forward networks with single-channel convolutions, conservation laws were also identified  under gradient flow dynamics \cite{Du}. While the investigation of conservation laws for more sophisticated neural architectures has remained largely unexplored, this paper addresses this gap by extending the analysis to more complex network architectures.

\paragraph{Residual networks (ResNets)} constitute a fundamental class of deep learning architectures that revolutionized the field of computer vision through their groundbreaking performance \cite{he2016deep}. The distinguishing feature of these networks—the incorporation of skip connections—has since become a cornerstone principle in {other} deep learning architectures, most notably exemplified in Transformer models \cite{vaswani2017attention}. In \cite{marion2023implicit}, under specific initialization assumptions and incorporating a rescaling operation, the authors demonstrate that the solution reached during traing (i.e. the trained neural network) corresponds to a discretization of a Neural ODE \cite{chen2018neural}, thus revealing an implicit bias. This enables leveraging ODE (ordinary differential equation) theory to analyze the trained network.

\paragraph{Transformers.} Since their introduction \cite{vaswani2017attention}, {Transformers} and their multi-head attention mechanism \cite{bahdanau2014neural} have achieved unprecedented performance across domains from natural language processing \cite{brown2020language} to computer vision \cite{dosovitskiy2020image}.
In \cite{vasudeva2024implicit}, the authors shows that when training self-attention layers with gradient descent, the key-query matrix naturally converges to a hard-margin SVM solution, revealing an implicit bias similar to that observed in linear logistic regression on separable data \cite{soudry,ji2019implicit}.

\paragraph{Our contributions.}
After proving that conservation laws of gradient flows \emph{with weight decay} are determined by their equivalent without weight decay (\Cref{thm:structure}), we discover new conservation laws and show that these new laws are complete for several basic building blocks of modern networks with skip connections: shallow multi-channel convolution layers (\Cref{thm:convolutivelaws}), self-attention layers (\Cref{coro:clattention}, \Cref{coro:multihead}), cross-entropy classification layer (\Cref{prop:crossentropy}), and plain MLP with skip connections (\Cref{prop:skipconnCL}). We then explain how these results can be used to analyze deeper networks, via a generic analysis under the lens of the new taylored analysis of conservation laws that only depend on a given subset of parameters (\Cref{prop:caractprojloi}). Notable results (\Cref{thm:blocksharing}) include the formal proof that such laws exactly match the laws of the classical blocks considered in isolation. Besides, in the case of residual convolutional networks we show (\Cref{thm:nocl}) the absence of conservation law associated to consecutive linear layers that ``overlap'' two residual blocks.
To complete the theoretical analysis we finally show that the conservation laws of gradient flows are also approximately preserved during actual (discrete) SGD dynamics, under appropriate assumptions (\Cref{prop:errorbound}), with an error bound \eqref{eq:boundsgd} scaling as $O(\text{step-size}^2)$ that we showcase in our numerical experiments, \Cref{fig:resnet} and \Cref{fig:transformer}.

\section{Conservation laws for Gradient Flows}

In this paper, we consider learning problems where the features are represented as $x_i \in \mathbb{R}^m$ and the targets or labels are denoted by $y_i \in \mathcal{Y}$. In regression tasks, $\mathcal{Y}$ is typically defined as $\mathbb{R}^n$, while in classification contexts, $y_i$ represents categorical labels. In scenarios involving unsupervised or self-supervised learning, $y_i$ can be treated as a constant. We denote $z_i \coloneqq (x_i, y_i)$ and $Z = (x_i, y_i)_i$.

Predictions are generated through a parametric function $g(\theta, \cdot): \mathbb{R}^m \to \mathbb{R}^n$ (such as a neural network). This function is trained by empirically minimizing a \textbf{cost} function with respect to the parameters $\theta \in \Theta \subseteq \mathbb{R}^D$.

\begin{equation}\label{eq:erm}
        %\min_{\theta \in \Theta} 
        \Loss_{Z}(\theta) \coloneqq \frac{1}{N}\textstyle\sum_i \loss(g({\x},x_i),y_i),     
\end{equation} 
with $\loss$ a \textbf{loss} function.
In practice, the training dynamics are realized using a gradient descent algorithm with weight decay:

\begin{equation} 
\label{eq:gd}
\x_{k+1} = \x_{k} - \tau \nabla \Loss_Z(\x_k) - \lambda_k \tau \x_{k}.
\end{equation}

To facilitate mathematical analysis, the training dynamics \RG{is simplified} by considering a gradient flow (\textbf{GF}) with weight decay (\textbf{WD}). This approach represents the continuous counterpart of Equation \eqref{eq:gd} as \(\tau\) approaches zero and can be expressed as the first-order ODE where  $\lambda(t) \geq 0$:
\begin{equation}
\label{gradientflow}
\dot{\x}(t) + \underbrace{\lambda(t) \x(t)}_{\textcolor{black}{\text{weight-decay}}} = -\nabla \Loss_Z(\x(t)),
\end{equation}

We aim to understand what quantities are preserved during the dynamic \eqref{gradientflow} for a variety of neural networks $g$.
The mathematical study of what happens in discrete dynamics is done in \Cref{sec:discrete}.
{The analogue of the transition from \eqref{eq:gd} to \eqref{gradientflow} for a simplified version of Adam algorithm is analyzed in \Cref{sec:adamoptimizer}.}

\subsection{Conservation laws of neural networks} \label{sec:clgf}
A function $h(t, \x)$ is conserved if for each solution $\x(t)$ of the ODE \eqref{gradientflow} with any initialization and any data-set, one has $h(t, \x(t))$ that remains constant over time. 

\paragraph{Time-dependency: with \emph{vs} without weight-decay.}

Our first contribution is the following theorem, which clarifies a fundamental distinction in the temporal dependence of conserved quantities of a dynamic system, whether it includes WD or not. It also demonstrates how the conservation laws with WD can be readily derived from those from the system without WD.
See \Cref{appendix:structurethm} for a proof. 
\begin{restatable}[Structure theorem]{theorem}{structurethm} \label{thm:structure}
Let $h(t, \x)$ be a conserved function for the ODE \eqref{gradientflow}. 
If for every %any
$\x$, there exists a data-set $Z$ such that $\nabla \Loss_Z (\x) = 0$, then the function $H(a) \coloneqq h(0,a)$ satisfies
$
h(t, \x) = {H}\big(\x \exp(\textstyle\int_0^t \lambda(s) \mathrm{d}s)\big), \; \forall t,\x. %, one has $
$
Thus, conserved functions can be expressed with $D$ variables (instead of $D +1$).
Moreover, $\tilde{h}(t, \x) \coloneqq H(\x)$ is a conservation law of  \eqref{gradientflow} {\em without WD} (i.e. with $\lambda(t) \equiv 0$).
\end{restatable}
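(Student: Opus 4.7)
The plan is to turn the conservation property along trajectories into a first-order linear PDE for $h$, use the assumption to reduce it to a pure transport equation, solve it by the method of characteristics to obtain the explicit formula, and finally recover the second statement by substituting the formula back into the original identity.

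First, I would differentiate $t \mapsto h(t, \x(t))$ along any solution of \eqref{gradientflow} and use $\dot h = 0$ together with the ODE to obtain the identity
\[
\partial_t h(t, \x) \;=\; \nabla_\x h(t, \x) \cdot \nabla \Loss_Z(\x) \;+\; \lambda(t)\, \nabla_\x h(t, \x) \cdot \x .
\]
Since one can start a trajectory of \eqref{gradientflow} at arbitrary initial data $(t_0, \x_0)$, this identity must hold pointwise on the whole $(t, \x)$-domain and for every admissible dataset $Z$. I would then invoke the assumption to pick, at each $\x$, a dataset $Z$ with $\nabla \Loss_Z(\x) = 0$; this kills the first term on the right-hand side and leaves the transport equation $\partial_t h = \lambda(t)\, \nabla_\x h \cdot \x$.

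Next, I would integrate this transport equation by the method of characteristics. The characteristics satisfy $\dot \x = -\lambda(t) \x$, so $\x(t) = \x(0)\, \exp\!\bigl(-\int_0^t \lambda(s)\,\mathrm{d}s\bigr)$, and $h$ is constant along each characteristic. Inverting gives $h(t, \x) = h\bigl(0, \x \exp(\int_0^t \lambda(s)\,\mathrm{d}s)\bigr) = H\bigl(\x \exp(\int_0^t \lambda(s)\,\mathrm{d}s)\bigr)$, which is the first claim and immediately yields the reduction from $D+1$ to $D$ variables.

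For the second statement, I would re-inject this explicit form into the \emph{full} identity derived in the first step. By construction of $H$, the terms $\partial_t h$ and $\lambda(t)\, \nabla_\x h \cdot \x$ cancel, so what remains is $\nabla_\x h(t, \x) \cdot \nabla \Loss_Z(\x) = 0$ for every $(t, \x)$ and every $Z$. Specializing at $t = 0$ yields $\nabla H(\x) \cdot \nabla \Loss_Z(\x) = 0$ for all $\x$ and $Z$, so along every solution of the weight-decay-free flow $\dot \x = -\nabla \Loss_Z(\x)$ the quantity $H(\x(t))$ is constant, which is exactly the claim for $\tilde h$.

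The main subtlety, rather than a genuine obstacle, is the first step: promoting the conservation equality from ``along one trajectory'' to a pointwise identity on $(t, \x)$ relies on local existence of a solution of \eqref{gradientflow} starting at arbitrary initial data, which holds under standard mild regularity of $\nabla \Loss_Z$ and $\lambda$. The characteristics computation and the final substitution are routine.
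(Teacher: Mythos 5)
Your proposal is correct and follows essentially the same route as the paper: both isolate the pure weight-decay flow by choosing, at each $\x$, a dataset with $\nabla \Loss_Z(\x)=0$, integrate $\dot\x = -\lambda(t)\x$ to get the explicit form of $h$, and then substitute back into the orthogonality identity (evaluated at $t=0$) to conclude that $H$ is conserved for the flow without weight decay. The only difference is presentational — you phrase the argument as a transport PDE solved by characteristics, while the paper uses its $\nabla h(\z) \perp \mathcal{W}_\z^g$ characterization — and your noted subtlety (pointwise validity via local existence of solutions from arbitrary initial data) is exactly what the paper's \Cref{prop:orthogonality} encapsulates.
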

\begin{remark}
 In particular, when considering \eqref{gradientflow} with $\lambda(t) \equiv 0$, one has $h(t, \x) = h(0, \x)$ for all $t$ and $\x$.
\end{remark}
Given this established correspondence between conserved functions with and without weight-decay, we can now restrict our analysis to time-independent conserved functions $h(\x)$ in the case of gradient descent without WD:
\eq{\label{gradientflowwithout}
        \dot{\x}(t)=  -\nabla \Loss_Z (\x(t)). 
}

\paragraph{Definition and characterization of conservation laws.}
Here we recall the main ingredients of the framework for conservation laws from \cite{marcotte2023abide}, introducing some formal definitions of intermediate objects and results that hopefully streamline the corresponding reasoning. A  function $h(\x)$ is a conservation law for $g$ if for each solution $\x(t)$ of the ODE \eqref{gradientflowwithout} with any initialization and any data-set, one has $h( \x(t)) = h( \x(0)), \ \forall t$. The formal definition of a conservation law in that case is given in \citep[Definition 2.4]{marcotte2023abide} (corresponds to the notion of being locally conserved on $\Theta$ for any dataset), and we recall an orthogonal characterization of a smooth conservation law \citep[Corollary 2.6,  Proposition 2.7]{marcotte2023abide}:

\begin{proposition} \label{prop:newcharacter}
Assume that for each $y \in \mathcal{Y}$, the loss $\loss(z, y)$ is $\mathcal{C}^2$-differentiable with respect to $z \in \R^n$. 
    A function $h \in \mathcal{C}^1( \Theta, \R)$ is a conservation law for $g$ \RG{with respect to the loss $\ell$} if and only if $\nabla h(\x) \perp \Wgx$ for all $\x \in  \Theta$ where: 
\begin{align*}
    \Wgx
    &\coloneqq \underset{Z = (x_i, y_i) \in ({\mathcal{X}}_\x \times \mathcal{Y})^N}{\linspan} \{  \nabla \Loss_Z (\x) \} \\
    &=  \underset{(x, y) \in {\mathcal{X}}_\x \times \mathcal{Y}}{\linspan} \{ \partial_\x g(\x, x)^\top  \nabla_z \loss (g(\x, x), y) \} {\subseteq \R^D},
    \end{align*}
with $\Xspace$ the set of data points $x {\in \R^m}$ such that $g(\cdot,x)$ is $\mathcal{C}^2$-differentiable in the neighborhood of $\x$. 
\end{proposition}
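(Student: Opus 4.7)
The plan is to reduce the definition of a conservation law to a pointwise orthogonality condition via the chain rule, and then to identify the relevant orthogonal complement with $\Wgx$. For any solution $\x(t)$ of \eqref{gradientflowwithout},
\[
\tfrac{d}{dt} h(\x(t)) = \langle \nabla h(\x(t)), \dot\x(t)\rangle = -\langle \nabla h(\x(t)), \nabla \Loss_Z(\x(t))\rangle,
\]
so $h$ is conserved along every solution (for any initialization, any dataset) if and only if $\langle \nabla h(\x), \nabla \Loss_Z(\x)\rangle = 0$ for every $\x \in \Theta$ and every $Z$. This is exactly $\nabla h(\x) \perp \Wgx$ once we have verified the two expressions for $\Wgx$ in the statement.

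For the forward implication, fix $\x_0 \in \Theta$ and a dataset $Z$. The $\mathcal{C}^2$-assumption on $\loss$ (combined with $x \in \Xspace$ ensuring that $g(\cdot, x)$ is $\mathcal{C}^2$ near $\x_0$) makes $\nabla \Loss_Z$ locally Lipschitz, so Cauchy--Lipschitz provides a solution $\x(t)$ with $\x(0)=\x_0$; differentiating $h\circ \x$ at $t=0$ yields $\langle \nabla h(\x_0), \nabla \Loss_Z(\x_0)\rangle = 0$. Conversely, if this orthogonality holds everywhere, then $\tfrac{d}{dt} h(\x(t)) \equiv 0$ along any solution, and $h$ is conserved. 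The equivalence between the two displayed characterizations of $\Wgx$ then follows by the chain rule
\[
\nabla \Loss_Z(\x) = \tfrac{1}{N}\textstyle\sum_{i=1}^N \partial_\x g(\x, x_i)^\top \nabla_z \loss(g(\x, x_i), y_i),
\]
which shows that every $\nabla \Loss_Z(\x)$ is a linear combination of single-sample terms, while conversely each single-sample term is, up to a scalar, the gradient of $\Loss_{\{(x,y)\}}$. Hence the two spans coincide.

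The only subtle point is guaranteeing that the forward direction is not vacuous, i.e.\ that every $\x_0 \in \Theta$ is realized at $t=0$ by some solution of \eqref{gradientflowwithout} for every dataset $Z$; this is precisely what the $\mathcal{C}^2$-regularity of $\loss$ and the restriction $x \in \Xspace$ in the definition of $\Wgx$ buy us. Everything else is a direct application of the chain rule, and the result is stated and proved in \cite{marcotte2023abide} as Corollary~2.6 and Proposition~2.7, so the proposition follows by invoking that framework.
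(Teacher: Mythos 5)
Your proposal is correct and follows exactly the standard argument behind this statement: the paper itself does not reprove \Cref{prop:newcharacter} but recalls it from \citep[Corollary 2.6, Proposition 2.7]{marcotte2023abide}, and your reconstruction (chain rule along trajectories, Cauchy--Lipschitz existence of a trajectory through each $\x_0$ for each admissible dataset to make the forward implication non-vacuous, and linearity plus single-sample/repeated-sample datasets to identify the two spans) is precisely the reasoning underlying that cited result. The only cosmetic remark is that since $N$ is fixed in the definition of $\Wgx$, the cleanest way to realize a single-sample term as a full-batch gradient is to take the dataset consisting of $N$ copies of $(x,y)$, but this does not affect the validity of your argument.
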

\begin{assumption} \label{eq:conditionloss}
Consider a loss $\loss(z,y)$. We assume that for every $y$ it is differentiable with respect to $z \in \R^n$. 
We define for all $z \in \R^n$ {the subspace}
\begin{equation*} 
   \LossSpace(z) \coloneqq \linspan_y \nabla_z \loss(z, y) {\subseteq \R^n}.
\end{equation*}
We also assume that $\LossSpace(z)$ \textbf{does not depend} on $z \in \R^n$, so we rewrite $\LossSpace(z) = \LossSpace$.
\end{assumption}

In particular, this assumption is satisfied for all classical losses (e.g. mean-square error, Kullblack-Leibler divergence, cross-entropy loss) as stated in \citep[Lemma C.2, Remark C.3]{marcotte2023abide}, and is known to imply the following direct corollary:
\begin{corollary} \label{coro:criteria}
Consider a loss $\loss(z,y)$ that satisfies \Cref{eq:conditionloss}.
Then for all $\x \in \Theta$:
\begin{align*}
    \Wgx
    &=  \underset{x \in {\mathcal{X}}_\x, w \in \LossSpace }{\linspan} \{ \partial_\x g(\x, x)^\top  w \}.
    \end{align*}
\end{corollary}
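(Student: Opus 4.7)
The plan is to establish the two inclusions in the claimed equality, leveraging \Cref{prop:newcharacter} to rewrite $\Wgx$ and then exploiting the point that, under \Cref{eq:conditionloss}, the subspace $\LossSpace(z)$ is independent of the base point $z = g(\x,x)$. Throughout, I fix $\x \in \Theta$ and abbreviate $J(x) \coloneqq \partial_\x g(\x, x)^\top \in \R^{D \times n}$, a linear operator from $\R^n$ into $\R^D$.

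For the inclusion $\Wgx \subseteq \linspan_{x \in \Xspace, w \in \LossSpace}\{J(x) w\}$, I would start from the characterization
\[
\Wgx = \underset{(x, y) \in \Xspace \times \mathcal{Y}}{\linspan}\{ J(x)\, \nabla_z \loss(g(\x,x),y) \}
\]
provided by \Cref{prop:newcharacter}. For any fixed $(x,y) \in \Xspace \times \mathcal{Y}$, set $w \coloneqq \nabla_z \loss(g(\x,x), y)$; by definition $w \in \LossSpace(g(\x,x))$, and \Cref{eq:conditionloss} gives $\LossSpace(g(\x,x)) = \LossSpace$. Hence $J(x) w$ belongs to the right-hand side, and taking the span yields the inclusion.

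For the reverse inclusion, I would pick an arbitrary generator $J(x) w$ with $x \in \Xspace$ and $w \in \LossSpace$. Using \Cref{eq:conditionloss} once more to identify $\LossSpace = \LossSpace(g(\x,x))$, I can write $w = \sum_{j=1}^k c_j \nabla_z \loss(g(\x,x), y_j)$ for some scalars $c_j$ and labels $y_j \in \mathcal{Y}$. Linearity of $J(x)$ then gives
\[
J(x) w = \sum_{j=1}^k c_j\, J(x)\, \nabla_z \loss(g(\x,x), y_j),
\]
and each summand lies in $\Wgx$ by \Cref{prop:newcharacter}, which closes the argument.

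There is essentially no hard step here: the corollary is a bookkeeping consequence of \Cref{eq:conditionloss}, whose role is precisely to decouple the ``input'' variable $x$ (which acts through $J(x)$) from the ``label'' variable $y$ (which acts only through $\nabla_z \loss$, and whose effective range $\LossSpace$ is the same at every $z$). The only mild subtlety to mention is that the decomposition $w = \sum_j c_j \nabla_z \loss(g(\x,x), y_j)$ is a priori a \emph{finite} linear combination, which is exactly what a span allows, so no limiting or density argument is required.
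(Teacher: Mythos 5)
Your proof is correct and matches the paper's intent: the paper states this as a ``direct corollary'' of \Cref{prop:newcharacter} and \Cref{eq:conditionloss} without writing out a proof, and your two-inclusion argument (using $\LossSpace(g(\x,x)) = \LossSpace$ in both directions, with finite linear combinations handled by linearity of $\partial_\x g(\x,x)^\top$ and the fact that $\Wgx$ is a span) is exactly the routine verification being omitted.
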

Another useful assumption is the following.
\begin{assumption}[Local reparameterization] \label{as:main_assumption}
There exists $d$ and $\phi \in \mathcal{C}^2(\Theta,\Rd)$ such that: for each parameter $\theta_0$ in the open set $\Theta \subseteq \RD$, for each $x \in \mathcal{X}$ such that $\x \mapsto 
    g(\x,x)$ is $\mathcal{C}^{2}$ in a neighborhood of $\x_0$\footnote{i.e., $x$ belongs to the set $\mathcal{X}_{\x_0}$, as defined in \Cref{prop:newcharacter}.}, there is a neighborhood $\ouv$ of $\theta_0$ and $f(\cdot, x) \in \mathcal{C}^2(\phi(\ouv), \R^n)$ such that
\begin{equation}
    \label{eq:elr-general}
      \forall \x \in \ouv, \quad  
     g(\theta,x) = f(\phi(\theta), x).
\end{equation}
\end{assumption} 
Such a factorization $g(\theta,x) = f(\phi(\theta),x)$ is always possible but never  unique: $\phi = \id$ and $f = g$ yield a trivial factorization, and starting from an arbitrary factorization any diffeomorphism $\psi$ yields another one $g(\theta,x)  = \tilde{f}(\tilde{\phi}(\theta),x)$ with  $\tilde{f}(a,x) \coloneqq f(\psi(a), x)$ and $\tilde{\phi} \coloneqq \psi^{-1} \circ \phi$.
The corresponding space $\Wgx$ can be characterized with any such factorization \citep[Proposition 2.12]{marcotte2023abide}.
%of $\Wgx$:
\begin{proposition} \label{prop:3}
    Assume that for every $y$ the loss $\loss(z, y)$ is $\mathcal{C}^2$-differentiable with respect to $z$. Under Assumption \ref{as:main_assumption}, for all $\x \in \Theta$:
    \begin{equation} \label{eq:chain_rules}
    \Wgx = \partial \phi(\x)^\top \Wfp
    \end{equation}
    with
    $\partial \phi(\x) \in \mathbb{R}^{d \times D}$ the Jacobian of $\phi$ and $$
    \Wfp \coloneqq \underset{(x, y) \in {\mathcal{X}}_\x \times \mathcal{Y}}{\linspan} \{ \partial f^x(\phi(\x))^\top \nabla_z \loss (g(\x , x), y) \},$$ where
    %parametrization and 
    $f^x(\cdot)\coloneqq f(\cdot, x)$.
\end{proposition}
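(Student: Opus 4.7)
My plan is to combine the orthogonality characterization of $\Wgx$ from \Cref{prop:newcharacter} with the chain rule applied to the local factorization $g(\x,x) = f(\phi(\x),x)$ guaranteed by \Cref{as:main_assumption}. The goal is to rewrite each generator of $\Wgx$ as the image under the fixed linear map $\partial \phi(\x)^\top$ of a corresponding generator of $\Wfp$, so that $\partial \phi(\x)^\top$ can be pulled out of the span in both directions.

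First, I fix $\x_0 \in \Theta$ and invoke \Cref{as:main_assumption} to get a neighborhood $\ouv$ of $\x_0$ and, for each admissible input $x \in \mathcal{X}_{\x_0}$, a $\mathcal{C}^2$ map $f^x \coloneqq f(\cdot,x)$ on $\phi(\ouv)$ such that $g(\x,x) = f^x(\phi(\x))$ holds throughout $\ouv$. Differentiating this identity at $\x_0$ yields, by the chain rule, $\partial_\x g(\x_0,x) = \partial f^x(\phi(\x_0))\, \partial \phi(\x_0)$, so that
\[
    \partial_\x g(\x_0,x)^\top = \partial \phi(\x_0)^\top \, \partial f^x(\phi(\x_0))^\top.
\]
Combining this with $g(\x_0,x) = f(\phi(\x_0),x)$, each generator of $\Wgx$ appearing in \Cref{prop:newcharacter},
\[
    \partial_\x g(\x_0,x)^\top \nabla_z \loss(g(\x_0,x),y),
\]
rewrites as $\partial \phi(\x_0)^\top \bigl( \partial f^x(\phi(\x_0))^\top \nabla_z \loss(f(\phi(\x_0),x), y) \bigr)$, where the parenthesis is exactly a generator of $\Wfp$.

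Second, since $\partial \phi(\x_0)^\top \in \R^{D \times d}$ is a fixed linear map (independent of $(x,y)$), applying it commutes with taking the linear span. This immediately gives the inclusion $\Wgx \subseteq \partial \phi(\x_0)^\top \Wfp$. The reverse inclusion is obtained by reading the same identity backwards: every generator of $\Wfp$ lifts, via $\partial \phi(\x_0)^\top$, to a generator of $\Wgx$, and span-linearity concludes.

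The main obstacle is not analytical but bookkeeping: one has to verify that the set of admissible inputs $\mathcal{X}_{\x_0}$ used in the definition of $\Wgx$ is indeed the same set over which $f^x$ is well defined and $\mathcal{C}^2$ near $\phi(\x_0)$, so that the generators on both sides are indexed by the same $(x,y)$. This is built into \Cref{as:main_assumption}, which ties the local $\mathcal{C}^2$-smoothness of $g(\cdot,x)$ near $\x_0$ to that of $f(\cdot,x)$ near $\phi(\x_0)$; once this matching is spelled out, the proof reduces to the chain-rule computation above and the elementary fact that a fixed linear map preserves spans.
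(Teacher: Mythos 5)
Your proof is correct: the chain rule $\partial_\x g(\x,x)^\top = \partial\phi(\x)^\top \partial f^x(\phi(\x))^\top$ applied to each generator, combined with the fact that a fixed linear map commutes with taking linear spans, is exactly the intended argument (the paper itself recalls this statement from \citet{marcotte2023abide} without reproving it, and that reference proves it the same way). Your closing remark about matching the index set $\mathcal{X}_{\x_0}$ on both sides is the right point of care, and \Cref{as:main_assumption} is indeed formulated precisely so that the factorization holds for every $x \in \mathcal{X}_{\x_0}$.
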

Under \Cref{as:main_assumption}, \Cref{prop:3} directly rewrites as:
\begin{corollary} \label{coro:criteriawithf}
Consider a loss $\loss(z,y)$ that satisfies \Cref{eq:conditionloss}.
Under Assumption \ref{as:main_assumption}, then for all $\x \in \Theta$
\begin{equation} \label{eq:1}
\Wfp = \underset{(x, w) \in {\mathcal{X}}_\x \times \LossSpace}{\linspan}   \{  [\partial f^x 
(\phi(\x))]^\top w
\}.
\end{equation}
%This exactly means that by 
{and, de}noting $\Proj(\phi(\x))\in \R^{d \times d}$ the matrix of the projection on {this} %the 
finite{-dimensional} vector subspace, we have %for all $\x \in \Theta$:
\eq{ \label{eq:projection}
 \Wgx  = \range\left( \partial \phi(\x)^\top \Proj(\phi(\x))\right).
}
\end{corollary}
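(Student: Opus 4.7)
The plan is to derive \eqref{eq:1} by the same mechanism that yields \Cref{coro:criteria} from \Cref{prop:newcharacter}, and then to read off \eqref{eq:projection} by combining \Cref{prop:3} with the very definition of $\Proj(\phi(\x))$.

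For \eqref{eq:1}, I would start from the definition $\Wfp = \linspan_{(x,y) \in \mathcal{X}_\x \times \mathcal{Y}} \{[\partial f^x(\phi(\x))]^\top \nabla_z \loss(g(\x,x),y)\}$ provided by \Cref{prop:3}. Since the linear span is bilinear, for each fixed $x \in \mathcal{X}_\x$ the set $\{\nabla_z \loss(g(\x,x),y) : y \in \mathcal{Y}\}$ has linear span $\LossSpace(g(\x,x))$, which by \Cref{eq:conditionloss} equals $\LossSpace$ independently of the evaluation point. Applying the linear map $[\partial f^x(\phi(\x))]^\top$ and then taking the union over $x \in \mathcal{X}_\x$ before spanning, one recovers exactly $\linspan_{(x,w) \in \mathcal{X}_\x \times \LossSpace} \{[\partial f^x(\phi(\x))]^\top w\}$, as claimed.

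For \eqref{eq:projection}, I combine \eqref{eq:chain_rules} with the observation that $\Proj(\phi(\x))$ is, by definition, the orthogonal projector onto $\Wfp$, so $\Wfp = \range(\Proj(\phi(\x)))$. This yields
\begin{equation*}
\Wgx = \partial \phi(\x)^\top \Wfp = \partial \phi(\x)^\top \range\bigl(\Proj(\phi(\x))\bigr) = \range\bigl(\partial \phi(\x)^\top \Proj(\phi(\x))\bigr),
\end{equation*}
the last equality relying only on the elementary identity $A \cdot \range(B) = \range(AB)$ for compatible linear maps.

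I do not anticipate a real obstacle: both steps are routine manipulations. The only point worth underlining is that the joint span over $(x,y)$ appearing in the definition of $\Wfp$ can be separated, thanks to \Cref{eq:conditionloss}, into a span ``over $y$'' (which collapses to $\LossSpace$) and a span ``over $x$'' (which persists), exactly mirroring the corresponding reduction from \Cref{prop:newcharacter} to \Cref{coro:criteria}.
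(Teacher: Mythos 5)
Your argument is correct and matches the paper's (implicit) reasoning: the paper presents this corollary as a direct rewriting of \Cref{prop:3} under \Cref{eq:conditionloss}, which is exactly the separation of the span over $(x,y)$ into a span over $x$ and the collapse of the span over $y$ to $\LossSpace$ that you carry out, followed by the elementary identity $\partial\phi(\x)^\top\range(\Proj) = \range(\partial\phi(\x)^\top\Proj)$. Nothing is missing.
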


With plain shallow ReLU and linear networks, when $\LossSpace = \R^n$ (this holds with the Euclidean loss or the Kullback-Leibler loss), there happens to a be a ``distinguished'' choice of $\phi$ \citep[Lemma 2.13]{marcotte2023abide} such that the projection matrix $P_\ell$ is simply the identity, so that all the needed information about $\Wgx$ is captured in $\partial \phi$. The formalism with $P_\ell$ enhances the flexibility of the framework to encompass the variety of possible factorizations via $f$ and $\phi$.

In light of \eqref{eq:projection} we introduce the vectors fields:
\begin{equation}\label{eq:DefChi}
\chi_i^{\RG{\ell}}: \theta \mapsto \partial \phi(\theta)^\top P_\ell(\phi(\theta)) e_i,\quad 1 \leq i \leq d
\end{equation}
% the $i$-th column of 
% $\partial \phi(\cdot)^\top \Proj(\phi(\cdot))$), 
and the linear function space:
$$
\Wfspace{g,\RG{\ell}}{} \coloneqq \linspan \{ \chi_1^{\RG{\ell}}, \cdots, \chi_d^{\RG{\ell}} \},
$$
so that we have for any $\theta \in \Theta$, 
$ \Wgx = \Wfspace{g,\RG{\ell}}{}(\x)$, where 
the \emph{trace} at $\x \in \Theta$ of {any set} $\Wfspace{} \subset \mathcal{C}^{1}(\Theta, \RD)$ {of vector fields} is defined as the linear  space
\begin{equation}\label{eq:DefTrace}
\Wfspace{}(\x) \coloneqq \linspan{\{\chi(\x): \chi \in \Wfspace{}\}} \subseteq \RD.
\end{equation}
In particular  $h \in \mathcal{C}^1$ is a conservation law \RG{of $g$ with respect to the loss $\ell$} if, and only if, its gradient is orthogonal to $\chi_i^{\RG{\ell}}(\theta)$ for every $i$ and $\theta$.
This property is stable by Lie brackets (for self-containedness see a reminder on the underlying calculus in \Cref{app:orthogonalitywithLiealg}), leading to a new orthogonal characterization of conservation laws (proved in  \Cref{app:orthogonalitywithLiealg}).

\begin{proposition} \label{prop:orthogonalitywithLiealg}
If $\loss(z,y)$ satisfies \Cref{eq:conditionloss} then $h \in \mathcal{C}^1( \Theta, \mathbb{R})$ is a conservation law for $g$ \RG{with respect to $\ell$}
     if and only if $\nabla h (\x) \perp \lie{( \linspan_i \{\chi_i^{\RG{\ell}} \})}(\x) $ for all $\x \in \Theta$. 
\end{proposition}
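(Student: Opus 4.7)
The key reduction is that for a smooth function $h$ and a smooth vector field $\chi$ on $\Theta$, the pointwise orthogonality condition $\nabla h(\theta) \perp \chi(\theta)$ for all $\theta$ is equivalent to the vanishing of the directional derivative $\chi \cdot h \equiv 0$ on $\Theta$. Using this reformulation, together with the characterization already established (Proposition \ref{prop:newcharacter}, Corollary \ref{coro:criteriawithf}, and the definition of the $\chi_i^\ell$), the statement $h$ is a conservation law is equivalent to $\chi_i^\ell \cdot h \equiv 0$ on $\Theta$ for every $i \in \{1, \dots, d\}$.

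The easy direction ($\Leftarrow$) is immediate: since $\linspan_i\{\chi_i^\ell\} \subseteq \lie(\linspan_i\{\chi_i^\ell\})$, the trace at any $\theta$ of the span is contained in the trace of the Lie algebra at $\theta$, so $\nabla h(\theta) \perp \lie(\linspan_i\{\chi_i^\ell\})(\theta)$ implies in particular $\nabla h(\theta) \perp \chi_i^\ell(\theta)$ for every $i$ and $\theta$, and we conclude via Proposition \ref{prop:newcharacter}.

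For the substantive direction ($\Rightarrow$), assume $\chi_i^\ell \cdot h \equiv 0$ on $\Theta$ for every $i$. My plan is to establish, by induction on bracket depth, that $\chi \cdot h \equiv 0$ on $\Theta$ for every $\chi$ in $\lie(\linspan_i\{\chi_i^\ell\})$. The base case is the generators themselves and the inductive step relies on the standard identity $[X,Y] \cdot h = X(Y \cdot h) - Y(X \cdot h)$: if $X \cdot h \equiv 0$ and $Y \cdot h \equiv 0$ pointwise on $\Theta$, both terms on the right vanish identically, hence $[X,Y] \cdot h \equiv 0$ on $\Theta$. Closure under $\mathbb{R}$-linear combinations is immediate from the linearity of $\chi \mapsto \chi \cdot h$, which completes the induction and yields the orthogonality condition at every $\theta$.

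The main technical point, which is precisely what the reminder in \Cref{app:orthogonalitywithLiealg} is designed to supply, is to fix the formal object $\lie(\linspan_i\{\chi_i^\ell\})$ unambiguously (smallest $\mathbb{R}$-vector subspace of smooth vector fields on $\Theta$ containing the generators and closed under Lie brackets) and to verify that the bracket identity above is valid in the $\mathcal{C}^1$ setting used throughout the paper. Once this formalism is in place, the argument is a short verification and does not require any regularity or completeness assumption on the flow of the generators.
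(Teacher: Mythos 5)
Your proposal is correct and takes essentially the same route as the paper: the easy direction is the inclusion of the span in its generated Lie algebra, and the substantive direction is the closure of the pointwise orthogonality condition under linear combinations and Lie brackets. The identity $[X,Y]\cdot h = X(Y\cdot h) - Y(X\cdot h)$ you invoke is exactly what the paper's auxiliary lemma establishes by differentiating $\langle \nabla h(\x), \chi_i(\x)\rangle \equiv 0$ and using the symmetry of the Hessian of $h$ (both arguments thus implicitly use second derivatives of $h$ beyond the stated $\mathcal{C}^1$ regularity, a point you rightly flag as the remaining technical verification).
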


\subsection{Existence and ``number" of conservation laws}

Having characterized the conservation laws, we now seek to ascertain the quantity of such laws. However, to comprehend the number that exists, it is essential to establish a notion of independence that eliminates all functional redundancies. We recall the definition of independency from \citep[Definition 2.18]{marcotte2023abide}:

\begin{definition} \label{def:independence}
    A family of %$N$ 
    functions {$h_i$, $1 \leq i \leq N$}
    %$(h_1, \cdots, h_N)$ 
    is said to be \textit{independent} if the vectors 
    {$\nabla h_i(\x)$, $1 \leq i \leq N$}
    %$(\nabla h_1(\x), \cdots, \nabla  h_N(\x))$ 
    are linearly independent for all $\x \in \ouv$. 
\end{definition}

The following fundamental theorem \citep[Theorem 3.3]{marcotte2023abide} provides a formula for the exact number of independent conservation laws.

\begin{theorem} \label{coro:number}
  If $\Wfspace{g,\RG{\ell}} \subseteq \mathcal{C}^{\infty}\left(\Theta, \R^{{D}}\right)$ and $\vdim (\lie(\Wfspace{g,\RG{\ell}})(\x))$ is locally constant (equal to some $\liedim$) then each $\x \in \Theta \subseteq \R^{{D}}$ admits a neighborhood $U_0$ such that there are $D-
\liedim$ smooth ($\mathcal{C}^\infty$) independent conservation laws $ h_{\liedim+1}, \cdots, h_D$ of $g$ \RG{with respect to $\ell$} on $U_0$.
\end{theorem}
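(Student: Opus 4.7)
The plan is to reduce the statement to a direct application of the Frobenius integrability theorem on the distribution generated by the Lie algebra $\lie(\Wfspace{g,\ell})$, and then exhibit the transverse coordinates as the desired conservation laws.

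First, by \Cref{prop:orthogonalitywithLiealg}, a smooth function $h$ is a conservation law for $g$ with respect to $\ell$ if and only if $\nabla h(\x) \perp \lie(\linspan_i\{\chi_i^\ell\})(\x)$ for every $\x \in \Theta$. Thus the problem reduces to producing, locally around any fixed $\x_0$, a family of $D-\liedim$ smooth scalar functions whose gradients at every point are orthogonal to the trace of $\lie(\Wfspace{g,\ell})$, and whose gradients are linearly independent on the chosen neighborhood. Equivalently, one looks for local ``transverse'' functions to the smooth distribution $\Delta(\x) \coloneqq \lie(\Wfspace{g,\ell})(\x) \subseteq \R^D$.

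Next, I would verify that $\Delta$ is a genuine smooth involutive distribution of rank $\liedim$ on a neighborhood $U_0$ of $\x_0$. Smoothness of $\Delta$ follows because $\Wfspace{g,\ell} \subseteq \mathcal{C}^\infty(\Theta, \R^D)$ is stable under Lie brackets, so $\lie(\Wfspace{g,\ell})$ is a subset of smooth vector fields, and because local constancy of $\vdim(\Delta(\x)) = \liedim$ allows to pick, by a standard selection argument, $\liedim$ smooth vector fields $X_1, \dots, X_\liedim$ in $\lie(\Wfspace{g,\ell})$ that span $\Delta(\x)$ at every $\x$ in a suitable neighborhood $U_0$ of $\x_0$. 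Involutivity is immediate: for any $X, Y \in \lie(\Wfspace{g,\ell})$, the bracket $[X, Y]$ still belongs to $\lie(\Wfspace{g,\ell})$ by definition of a Lie algebra of vector fields, hence $[X, Y](\x) \in \Delta(\x)$ for all $\x$.

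I would then invoke the Frobenius theorem: since $\Delta$ is a smooth involutive distribution of constant rank $\liedim$ on $U_0$, up to shrinking $U_0$ there exists a smooth diffeomorphism (a straightening chart) $\Psi \colon U_0 \to V \subseteq \R^D$, $\Psi(\x) = (y_1(\x), \dots, y_D(\x))$, such that $\Delta(\x)$ is spanned at every $\x \in U_0$ by the pullbacks of $\partial_{y_1}, \dots, \partial_{y_\liedim}$. In these coordinates the functions $h_{\liedim+1} \coloneqq y_{\liedim+1}, \dots, h_D \coloneqq y_D$ satisfy $\nabla h_j(\x) \perp \Delta(\x)$ for all $\x \in U_0$ (their gradients are the transverse coordinate covectors), and they are smooth. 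Their independence in the sense of \Cref{def:independence} follows because $\Psi$ is a diffeomorphism, so the Jacobian of $(h_{\liedim+1}, \dots, h_D)$ has full rank $D-\liedim$ everywhere on $U_0$. Applying \Cref{prop:orthogonalitywithLiealg} in the reverse direction then yields that each $h_j$ is a conservation law of $g$ with respect to $\ell$ on $U_0$, giving the announced $D - \liedim$ independent smooth conservation laws.

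The main technical obstacle is the careful application of Frobenius in the constant-rank regime: one must justify that the locally constant dimension assumption on $\lie(\Wfspace{g,\ell})(\x)$ really yields a smooth subbundle of rank $\liedim$ (and not just a pointwise linear space), so that Frobenius applies. This is handled by extracting, on a small enough neighborhood of $\x_0$, a basis of $\liedim$ smooth vector fields in $\lie(\Wfspace{g,\ell})$ by selecting $\liedim$ vector fields among countably many generators (e.g.\ iterated brackets of the $\chi_i^\ell$) whose values at $\x_0$ form a basis of $\Delta(\x_0)$, and using continuity plus the constant-rank hypothesis to extend this to all $\x \in U_0$. Once this subbundle is in hand, the remainder of the proof is the standard Frobenius construction.
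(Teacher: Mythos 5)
Your proposal is correct and follows essentially the same route as the paper: characterize conservation laws via orthogonality to the trace of the Lie algebra (\Cref{prop:orthogonalitywithLiealg}), observe that involutivity is automatic for a Lie algebra, and apply the Frobenius theorem under the locally constant rank hypothesis to produce $D-\liedim$ smooth functions whose gradients span the orthogonal complement. The only cosmetic difference is that you invoke Frobenius in its straightening-chart form and take the transverse coordinates, whereas the paper uses the equivalent formulation that directly supplies functions whose gradients span $\lie(\Wfspace{g,\ell})(\x)^\perp$.
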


The proof of this theorem in \citep[Theorem 3.3]{marcotte2023abide} relies on Frobenius Theorem (recalled in \Cref{frobenius}) and necessitates a reasoning by contradiction, along with the use of an intermediate functional space. In this article, we present a significantly simplified proof (see \Cref{app:proofsimplified}) that is based solely on the Frobenius Theorem and the characterization of \Cref{prop:orthogonalitywithLiealg}.

We now demonstrate why \Cref{def:independence} effectively eliminates all functional redundancies. The following proposition states (see \Cref{app:functionalredundacy} for a proof) that any conservation law can be expressed locally as a function of the independent reference conservation laws obtained in \Cref{coro:number}.

\begin{restatable}{proposition}{propfunctionalredundacy} \label{prop:functionalredundacy}
   Consider a smooth conservation law $h: \Theta \mapsto \R$ \RG{of $g$ with respect to $\ell$}, and $\x \in \Theta$ {around which} %such that
    $\vdim (\lie(\Wfspace{g,\RG{\ell}})(\x))$ is locally constant (equal to some integer $\liedim$). Then, on the neighborhood $U_0$ of $\x$ given by \Cref{coro:number}, $h$ can be expressed as a function of the $D-
\liedim$ smooth independent conservation laws $ h_{\liedim+1}, \cdots, h_D$ of $g$ given by \Cref{coro:number}.
\end{restatable}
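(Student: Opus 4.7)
The plan is to apply Frobenius' theorem to straighten the involutive distribution defined by $\lie(\Wfspace{g,\RG{\ell}})$, and then use the independence hypothesis together with the inverse function theorem to recover $h$ as a smooth function of the reference laws $h_{\liedim+1},\ldots,h_D$.

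First, by \Cref{prop:orthogonalitywithLiealg}, any smooth conservation law $h$ satisfies $\nabla h(\x') \perp \lie(\Wfspace{g,\RG{\ell}})(\x')$ for every $\x'$ in a neighborhood of $\x$. By assumption, the distribution $\mathcal{D}(\x') \coloneqq \lie(\Wfspace{g,\RG{\ell}})(\x')$ has locally constant rank $\liedim$ on some open neighborhood $U \subseteq U_0$ of $\x$, and it is involutive since it is the pointwise trace of a Lie algebra of vector fields. Frobenius' theorem (\Cref{frobenius}) then provides an open neighborhood $V \subseteq U$ of $\x$ and a local diffeomorphism $\psi: V \to \psi(V) \subseteq \R^D$ such that, writing $y = \psi(\x')$, the distribution $\mathcal{D}$ is spanned at every point by $\partial/\partial y_1, \ldots, \partial/\partial y_\liedim$.

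Second, I would transport $h$ and the reference laws through $\psi$. Setting $\tilde{h}(y) \coloneqq h(\psi^{-1}(y))$ and $\tilde{h}_j(y) \coloneqq h_j(\psi^{-1}(y))$ for $j = \liedim+1,\ldots,D$, the orthogonality characterization translates, in the new coordinates, into $\partial \tilde{h}/\partial y_i \equiv 0$ and $\partial \tilde{h}_j/\partial y_i \equiv 0$ for every $i \leq \liedim$. Consequently $\tilde h$ and each $\tilde h_j$ depend only on the $D-\liedim$ variables $(y_{\liedim+1},\ldots,y_D)$ on $\psi(V)$.

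Third, independence of the $h_j$'s (\Cref{def:independence}) says their gradients are linearly independent at every point of $U_0$, and this property is preserved by the diffeomorphism $\psi$. In the straightened coordinates this means the $D-\liedim$ vectors $(\partial \tilde h_j/\partial y_{\liedim+1},\ldots,\partial \tilde h_j/\partial y_D)$, $j=\liedim+1,\ldots,D$, are linearly independent in $\R^{D-\liedim}$. Hence the map $\Phi: (y_{\liedim+1},\ldots,y_D)\mapsto (\tilde{h}_{\liedim+1}(y),\ldots,\tilde{h}_D(y))$ has invertible Jacobian at $\psi(\x)$, and the inverse function theorem yields a smooth local inverse $\Phi^{-1}$ defined on a neighborhood of $\Phi(\psi(\x))$. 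Composing, we obtain $\tilde{h}(y) = \tilde h \circ (\Phi^{-1} \circ \Phi)(y) = F(\tilde h_{\liedim+1}(y),\ldots,\tilde h_D(y))$ for the smooth function $F \coloneqq \tilde h \circ \Phi^{-1}$, which pulled back through $\psi$ gives $h = F(h_{\liedim+1},\ldots,h_D)$ on a neighborhood of $\x$ inside $U_0$, as claimed.

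The main obstacle is not any single calculation but ensuring the hypotheses of Frobenius' theorem are cleanly met (involutivity of $\mathcal{D}$ and its constant rank), both of which follow directly from the assumption on $\vdim(\lie(\Wfspace{g,\RG{\ell}})(\x))$ and the fact that a pointwise Lie algebra is trivially involutive; after that the argument is a textbook application of Frobenius plus the inverse function theorem.
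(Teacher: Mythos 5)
Your proof is correct, but it follows a genuinely different route from the paper's for the key functional-dependence step. The paper's argument stays entirely at the level of gradients: it invokes \Cref{frobenius} only indirectly, through \Cref{coro:number}, to get that $\linspan\{\nabla h_{\liedim+1}(\x'),\dots,\nabla h_D(\x')\} = \lie(\Wfspace{g,\ell})(\x')^\perp$ on $U_0$, deduces from \Cref{prop:orthogonalitywithLiealg} that $\nabla h(\x')$ lies in this span everywhere, and then concludes by citing an external theorem stating that pointwise linear dependence of gradients of smooth functions implies functional dependence. You instead re-enter the Frobenius machinery directly, using the straightening (canonical-coordinates) form of the theorem to make $\lie(\Wfspace{g,\ell})$ the span of the first $\liedim$ coordinate vector fields, and then prove the functional-dependence step yourself via the inverse function theorem applied to $\Phi = (\tilde h_{\liedim+1},\dots,\tilde h_D)$. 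What your approach buys is self-containedness: it replaces the external citation by an explicit two-line argument, and it makes visible exactly where the independence of the $h_j$ (\Cref{def:independence}) is used, namely in the invertibility of the Jacobian of $\Phi$. The cost is twofold and minor: (i) the version of Frobenius recalled in the paper as \Cref{frobenius} is the ``first integrals'' formulation, not the straightening formulation you invoke, so strictly speaking you are using an equivalent but different statement; and (ii) your conclusion holds on a possibly smaller neighborhood $V \subseteq U_0$ where the straightening chart and the local inverse of $\Phi$ both exist, rather than on all of $U_0$ — harmless here since the claim is local, but worth stating. One phrasing caveat: Euclidean orthogonality of $\nabla h$ to the distribution is not preserved by a diffeomorphism; what is preserved is the coordinate-free condition $dh(X)=0$ for $X$ in the distribution, which is what actually yields $\partial\tilde h/\partial y_i\equiv 0$ for $i\le\liedim$ in the straightened chart.
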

\citet{marcotte2023abide} developed a code (detailed in their Section 3.3) that calculates the dimension of the trace of the Lie algebra generated by a finite set of vector fields. 

\subsection{Conservation laws for Adam flows} \label{sec:adamoptimizer}
A simplified version of Adam algorithm \cite{kingma2014adam} (full batch, without bias correction and \(\varepsilon, \beta_1, \beta_2 = 0\))\footnote{The continuous-time version of the Adam algorithm \cite{kingma2014adam} has also been studied, including the bias correction steps, with any $\epsilon, \beta_1, \beta_2$ and any batch size in \cite{barakat2021convergence}. 
%Note that the case without bias correction steps is not a particular case of this algorithm.
} updates the parameters \(\theta\), with an estimate of the first and second moments $m_t,v_t$ using the following equations:
% \begin{align} \label{eq:discreteadam}
%     m_t &= \beta_1 m_{t-1} + (1 - \beta_1) \nabla_\theta L_Z(\theta_t), \\
%     v_t &= \beta_2 v_{t-1} + (1 - \beta_2) \left( \nabla_\theta L_Z(\theta_t) \right)^2, \\
%     \theta_{t+1} &= \theta_t - \eta \frac{m_t}{\sqrt{v_t}},
% \end{align}
\begin{equation} \label{eq:discreteadam}
\begin{aligned}
    m_t &= - \nabla_\theta L_Z(\theta_t), \\
    v_t &= -  \left( \nabla_\theta L_Z(\theta_t) \right)^2, \\
    \theta_{t+1} &= \theta_t - \eta \frac{m_t}{\sqrt{v_t}} = \theta_t - \eta \mathrm{sign}(\nabla_\theta L_Z(\theta_t))\,
\end{aligned}
\end{equation}
where the square, {square-root, and division} %and root-square
are done element-wise.

The discrete dynamic \eqref{eq:discreteadam} corresponds to the explicit Euler scheme of the ODE (informally corresponds to \eqref{eq:discreteadam} when $\eta \rightarrow 0$):
\begin{equation} \label{eq:continousadam}
    \dot \x = -\text{sign} \left( \nabla_\theta L_Z(\theta) \right).
\end{equation}

\begin{remark}
Connections between the Adam algorithm and variants of sign gradient descent (referred to as ``variance-adapted sign descent'') have been established in \cite{balles2018dissecting} in the discrete dynamic.
\end{remark}

Thus one can adapt the results of \Cref{sec:clgf} by considering the ODE \eqref{eq:continousadam} instead of \eqref{gradientflowwithout}. In particular under the same assumptions as in \Cref{prop:newcharacter}, a conservation law $h$ for $g$ for the Adam flow \eqref{eq:continousadam} with respect to the loss $\ell$ is now characterized by $\nabla h (\x) \perp \Wgx, \, \forall \x$ where
\begin{align} \label{eq:wgadamfirst}
    \Wgx
    &\coloneqq \underset{Z = (x_i, y_i) \in ({\mathcal{X}}_\x \times \mathcal{Y})^N}{\linspan} \{  \text{sign} \left(\nabla \Loss_Z (\x)\right) \}.
    \end{align}

In particular, the associated space $\Wfspace{g,\RG{\ell}}$ is locally constant with respect to~$\theta$. Thus, the trace of the generated Lie algebra at $\theta$ is directly equal to the one of $\Wfspace{g,\RG{\ell}}$ at $\theta$: by using \Cref{coro:number}, it suffices to determine the dimension of the trace of $\Wfspace{g,\RG{\ell}}$ to know the number of independent conservation laws.
In the case of a $2$-layer linear neural network parameterized by $\phi: (U,V) \in \R^{n \times r } \times \R^{m \times r }\mapsto U V^\top$
(and similarly for an attention layer),
we empirically find that there are no conservation laws, except in the special case $n=m=r=1$, where there is exactly one conservation law, given by $|U| - |V|$, as detailed in \Cref{app:conservation_adam}.

\section{The case of shallow neural
networks}

Equipped with the general results of the previous section we now provide characterizations of the conservation laws of several elementary networks that serve as building blocks of standard modern network architectures such as ResNets and Transformers. After showing that a basic block has the same conservation laws with or without  skip connections, we remind  existing laws for shallow ReLU and linear networks, before providing our main contributions of this section : a characterization of the conservation laws of shallow ReLU networks with convolutions, of attention layers, and of cross-entropy classification layers.

\begin{remark}[Conservation and invariances]
In this section we characterize \emph{all} conservation laws for elementary networks (except multihead layers where the completeness of the identified laws is left to future work). It is noteworthy that these laws in most cases are intrinsically connected to network invariances (as detailed in \Cref{app:invshallow}). When considering weight decay regularization and applying the structure theorem \Cref{thm:structure}, a particular consequence of our results is that the conservation laws from the non-regularized case (GF without weight decay) \emph{vanish at the optimum}.  This finding \RG{not only} aligns with partially known ``balancedness'' properties (\citep[Theorem 1]{yang2022a}  applies here in the case of a ReLU activation, as the associated elementary networks $g_\x$ are homogeneous with respect to hidden neuron rescaling, and our results show that it is also true for linear networks and for cross-entropy classification layers \eqref{eq:functionforcross}); \RG{it also provides
additional insight on the \emph{dynamics of the convergence} to balanced  parameters.} 
\end{remark}

\subsection{With {\em vs} without residual connections}

We first establish a simple but generic result for conservation laws in the presence of skip connexions. Given any neural network $g(\x,\cdot)$ with $n =m $, 
we consider
the residual neural network $\tilde{g}(\x,\cdot)$ defined by:
\eq{ 
\tilde{g}(\x, \cdot): x \in \R^n \mapsto  x + g(\x, x) \in \R^n.
}

\begin{proposition} \label{prop:skipconnCL}
\RG{With respect to any loss satisfying \Cref{eq:conditionloss}}
$g$ and $\tilde{g}$ have the same
  %The
  conservation laws. % of $g$ are exactly the ones of $\tilde{g}$. 
\end{proposition}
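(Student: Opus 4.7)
The plan is to invoke the orthogonality characterization of conservation laws provided by \Cref{prop:newcharacter} and \Cref{coro:criteria}, and reduce the claim to the trivial observation that $\tilde{g}$ and $g$ have identical Jacobians with respect to $\theta$. Concretely, a smooth function $h$ is a conservation law for $g$ (respectively $\tilde{g}$) with respect to the loss $\ell$ if and only if $\nabla h(\theta) \perp \Wspace{g,\ell}{\theta}$ (respectively $\nabla h(\theta) \perp \Wspace{\tilde{g},\ell}{\theta}$) for every $\theta \in \Theta$, so it suffices to establish the equality $\Wspace{g,\ell}{\theta} = \Wspace{\tilde{g},\ell}{\theta}$ for all $\theta$.

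The key observation is that since $x \in \R^n$ does not depend on $\theta$, differentiating $\tilde{g}(\theta,x) = x + g(\theta,x)$ with respect to $\theta$ gives $\partial_\theta \tilde{g}(\theta,x) = \partial_\theta g(\theta,x)$; moreover, the set $\mathcal{X}_\theta$ of inputs $x$ such that $\tilde{g}(\cdot,x)$ is $\mathcal{C}^2$ around $\theta$ coincides with its analogue for $g$. Using the explicit formula from \Cref{coro:criteria}, which is available precisely because $\ell$ satisfies \Cref{eq:conditionloss} (so $\LossSpace$ is independent of the evaluation point and in particular identical at $g(\theta,x)$ and at $\tilde{g}(\theta,x) = x + g(\theta,x)$), we can write
\begin{equation*}
\Wspace{\tilde{g},\ell}{\theta} = \underset{x \in \mathcal{X}_\theta,\, w \in \LossSpace}{\linspan}\{\partial_\theta \tilde{g}(\theta,x)^\top w\} = \underset{x \in \mathcal{X}_\theta,\, w \in \LossSpace}{\linspan}\{\partial_\theta g(\theta,x)^\top w\} = \Wspace{g,\ell}{\theta}.
\end{equation*}
The equality of the two spaces then yields the equivalence of the orthogonality conditions, and hence of the conservation laws.

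There is really no substantive obstacle: the only subtlety is that the loss gradients $\nabla_z \ell$ entering the definition of $\Wspace{g,\ell}{\theta}$ and $\Wspace{\tilde{g},\ell}{\theta}$ are a priori evaluated at different output points, so one must ensure that these generate the same subspace of $\R^n$. This is exactly what \Cref{eq:conditionloss} guarantees, which is why the statement is phrased with respect to any loss satisfying this assumption. Once this point is noted, the result follows in a few lines.
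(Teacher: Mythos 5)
Your proof is correct and follows essentially the same route as the paper's: both invoke \Cref{prop:newcharacter} and \Cref{coro:criteria} and reduce the claim to the identity $\partial_\theta \tilde{g} = \partial_\theta g$, which forces $\Wspace{g,\ell}{\theta} = \Wspace{\tilde{g},\ell}{\theta}$. Your additional remark that \Cref{eq:conditionloss} is precisely what lets one ignore the shift in the evaluation point of $\nabla_z\ell$ is a worthwhile clarification of a step the paper leaves implicit, but it does not change the argument.
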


\begin{proof}
Use \Cref{prop:newcharacter} and \Cref{coro:criteria}, and notice that since $\partial_\x g = \partial_\theta \tilde{g}$ we have $\Wgx = \mathcal{W}_\x^{\tilde{g},\RG{\ell}}$.
\end{proof}

\begin{remark}
It is worth noticing that this result does not require the assumption $\LossSpace = \R^n$. 
\end{remark}

\subsection{ReLU and linear networks: known results}\label{sec:introrelulinear}
Let us consider $U \in \R^{n \times r}$, $V \in \R^{m \times r}$, and we denote $\x \coloneqq (U, V)$ the parameters and $u_k, v_k$ the columns of $U$ and $V$. In that case, the neural network writes:
\eq{ \label{eq:2layerNN}
g(\x, x) = U \sigma(V^\top x),
}
where $\sigma = \mathtt{id}$ (\textit{resp.} $\sigma = \mathtt{ReLU}$) in the linear case case (\textit{resp.} ReLU case). The number of parameters is $D = (n+m)r$.

We recall here the result demonstrated in \citep[Lemma 2.13, Theorem 2.14]{marcotte2023abide} which shows that, through an appropriate parameterization $\phi$, the study of $\Wfspace{g,\ell}$ can be reduced to the study of a Lie algebra generated by a finite number of `well-behaved' vector fields.

\begin{theorem} \label{thm:neuripsminimal}
Under \Cref{eq:conditionloss}, if $\LossSpace = \R^n$, then considering $\Theta = \R^D$ and $\phi( \theta) \coloneqq  U V^\top$ for linear neural networks, one has:
$\Wfp = \R^d$ and $ \Wgx = \textrm{range}(\partial \phi(\x)^\top )$.

The same result holds for $2$-layer ReLU networks (with or without bias $b_k \in \R$, $1 \leq k \leq r$) with $\Theta \subseteq \R^D$ the set of all parameters $\x$ such that hidden neurons define pairwise distinct hyperplanes $H_k := \{x \in \R^d, v_k^\top+b_k=0\}$, and  $\phi(\x) \coloneqq (u_k v_k^\top)_{k =1}^r$
\end{theorem}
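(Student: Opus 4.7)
The approach is to invoke \Cref{prop:3}, which under a factorization $g(\x,x) = f(\phi(\x),x)$ gives $\Wgx = \partial\phi(\x)^\top \Wfp$, together with \Cref{coro:criteriawithf}, which under the assumption $\LossSpace = \R^n$ rewrites $\Wfp$ as the span of $\partial f^x(\phi(\x))^\top w$ over $(x,w) \in \mathcal{X}_\x \times \R^n$. The theorem then reduces to exhibiting, for each architecture and the designated $\phi$, enough such directions to cover $\R^d$; once $\Wfp = \R^d$ is established, the identity $\Wgx = \range(\partial\phi(\x)^\top)$ is immediate.

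For the linear case, I would set $f(M,x) := Mx$ and $\phi(\x) := UV^\top$, so that $g = f\circ\phi$ holds globally on $\Theta = \R^D$. A direct adjoint computation gives $\partial f^x(M)^\top w = w\, x^\top \in \R^{n\times m}$. As $x$ ranges over $\R^m$ and $w$ over $\R^n$, these rank-one matrices include every $e_i e_j^\top$ and hence span all of $\R^{n\times m}$, so $\Wfp = \R^{nm} = \R^d$.

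For the ReLU case the factorization is only local, which is the main technical point. Given $\x_0$ such that the hyperplanes $H_k$ are pairwise distinct, in a neighborhood $\omega$ of $\x_0$ one has $u_k \neq 0$ and $v_k \neq 0$, and the $H_k$ stay pairwise distinct; this lets us smoothly select, from each rank-one block $M_k = u_k v_k^\top$, a representative $(u_k,v_k)$ up to a consistent sign, so that a $\mathcal{C}^2$ factor $f^x$ can be defined on $\phi(\omega)$. The pairwise-distinctness condition further yields, for each $k$, an open set $\mathcal{O}_k \subset \R^m$ of inputs that activate only the $k$-th neuron throughout $\omega$; on $\omega \times \mathcal{O}_k$, $g(\x,x) = u_k(v_k^\top x + b_k)$ is linear in the $k$-th block, so after absorbing the bias into an augmented weight $\tilde v_k = (v_k,b_k)$ as usual, the same rank-one spanning argument as in the linear case applies block by block and yields $\Wfp = \R^d$. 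The hardest step is thus the construction of a smooth local section of $\phi$ together with the verification that the active-set structure is locally constant on $\omega$; both are exactly what the distinct-hyperplanes hypothesis provides, after which the ReLU case reduces to an independent copy of the linear computation in each neuron block.
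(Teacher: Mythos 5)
The paper does not actually prove \Cref{thm:neuripsminimal} --- it is recalled verbatim from Lemma~2.13 and Theorem~2.14 of \cite{marcotte2023abide} --- so your argument has to stand on its own. Your linear case is correct and is the standard computation. The ReLU case, however, has a genuine gap: you reduce it to independent copies of the linear case by asserting that pairwise distinctness of the hyperplanes $H_k$ yields, for each $k$, a nonempty open set $\mathcal{O}_k$ of inputs that activate neuron $k$ and no other. This is false. Take $m=2$, $r=3$, no bias, $v_1=e_1$, $v_2=e_2$, $v_3=e_1+e_2$ (any $u_k$): the three hyperplanes are pairwise distinct, yet activating only neuron $3$ would require $x_1+x_2>0$ together with $x_1\le 0$ and $x_2\le 0$, which is impossible, so $\mathcal{O}_3=\emptyset$ and your block-$3$ computation produces nothing. (With biases the same failure already occurs for $m=1$, $r=2$, $v_1=v_2=1$, $b_1=0$, $b_2=-1$.) The conclusion $\Wfp=\R^d$ is still true, but the argument that works is different: choose $x'\in H_k\setminus\bigcup_{j\neq k}H_j$ (possible since a hyperplane cannot be covered by finitely many other distinct hyperplanes), perturb to $x^{\pm}$ on either side of $H_k$; the active sets at $x^+$ and $x^-$ differ exactly in $\{k\}$, so the difference $[\partial f^{x^+}]^\top w-[\partial f^{x^-}]^\top w$ isolates block $k$. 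Letting $x'$ range over an open subset of $H_k\setminus\bigcup_{j\neq k}H_j$ spans an $m$-dimensional (resp.\ with bias, after lifting $x\mapsto(x,1)$) subspace of that block, and an extra perturbation in the direction $v_k$ supplies the missing dimension. This ``cross a single hyperplane and take differences'' pattern is exactly the one this paper itself deploys in the appendix, e.g.\ in the second step of the proof of \Cref{thm:nocl}.

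A secondary issue: you assume $u_k\neq 0$ near $\x_0$ in order to build a smooth local section of $\phi$, but $\Theta$ only constrains the $(v_k,b_k)$, so $u_k=0$ is allowed and no such section exists there (the block $u_kv_k^\top=0$ carries no information about $v_k$). The section is in fact unnecessary: \Cref{as:main_assumption} lets the neighborhood $\ouv$ and the map $f(\cdot,x)$ depend on the input $x$, and for fixed $x\in\mathcal{X}_{\x_0}$ the active set $S(\theta,x)$ is constant for $\theta$ near $\x_0$, so one can simply take $f(M,x):=\sum_{k\in S(\x_0,x)}M_k\,x$ (with $x$ replaced by $(x,1)$ in the bias case), which is linear in $M$, hence $\mathcal{C}^2$, and agrees with $g$ after composition with $\phi$. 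With that choice of $f$, the Jacobian computation you need is immediate and the only remaining work is the spanning argument above.
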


Indeed, by \Cref{thm:neuripsminimal}, computing the associated generated Lie algebra and finally applying \Cref{coro:number}, the authors are able to determine \emph{all} conservation laws in these settings \citep[Corollary 4.4]{marcotte2023abide}:
 
\begin{proposition} \label{prop:nbneurips}
With the assumptions of \Cref{thm:neuripsminimal}, in the ReLU (resp. linear) case,  we have: for any $\x \in \Theta$ (resp. $\theta \in \Theta$ such that $\smallvec{U \\ V}$ has full rank), there is a neighborhood of $\x$ in which all conservation laws for \eqref{eq:2layerNN} are functions of
 $
 \small{  \| u_k \|^2 - \| v_k \|^2  \ (\text{resp. of } \langle u_k, u_l \rangle - \langle v_k, v_l \rangle), 1 \leq k,l \leq r.
 }
 $
\end{proposition}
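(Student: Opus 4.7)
The plan is to pin down $\vdim(\lie(\Wfspace{g,\ell})(\x))$ at a generic $\x$ and invoke \Cref{coro:number} together with \Cref{prop:functionalredundacy}. \Cref{thm:neuripsminimal} already provides the factorisations $\phi(U,V)=UV^\top$ (linear case) and $\phi(U,V)=(u_k v_k^\top)_{k=1}^r$ (ReLU case), for which $\Wgx=\range(\partial\phi(\x)^\top)$; by \Cref{prop:orthogonalitywithLiealg} it then suffices to show that the orthogonal complement of $\lie(\Wfspace{g,\ell})(\x)$ is locally spanned by the gradients of the announced candidates.

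I would first verify, by a direct computation of $\partial\phi\cdot\nabla h$, that the proposed functions are conservation laws. In the linear case, writing $\nabla_U h_{kl}=U E_{kl}$ and $\nabla_V h_{kl}=-V E_{kl}$ with $E_{kl}\coloneqq e_k e_l^\top + e_l e_k^\top$ symmetric, the identity $U E_{kl} V^\top - U E_{kl}^\top V^\top=0$ follows from $E_{kl}=E_{kl}^\top$; the ReLU case is the same argument block by block on each rank-one summand $u_k v_k^\top$. Independence in the sense of \Cref{def:independence} is then automatic in the ReLU case (the candidate gradients live in disjoint neuron blocks), and in the linear case it follows from the full-rank assumption on $\smallvec{U\\V}$: the map $A\mapsto(UA,-VA)$ on symmetric matrices $A$ is injective as soon as $\smallvec{U\\V}$ has full column rank, yielding $r(r+1)/2$ independent candidate gradients.

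The main step is the computation of $\vdim(\lie(\Wfspace{g,\ell})(\x))$. In the ReLU case, $\Wgx$ is already $r(n+m-1)$-dimensional (a direct sum of $r$ tangent spaces to the rank-one cone, each of codimension $1$ in $\R^{n+m}$), so no Lie bracket is needed: combining the trivial lower bound $\vdim(\lie(\Wfspace{g,\ell})(\x))\ge\vdim\Wgx=D-r$ with the upper bound $\vdim(\lie(\Wfspace{g,\ell})(\x))\le D-r$ forced by the $r$ already-exhibited independent laws yields equality. In the linear case $\vdim\Wgx=r(n+m)-r^2$ falls short of $D-r(r+1)/2$ by exactly $r(r-1)/2$, so I would compute concrete Lie brackets of the form $[\chi_{ij},\chi_{kj}]$: these produce vector fields whose traces at $\x$ are of the shape $(NU,0)$ with $N\in\mathfrak{so}(n)$ and, symmetrically, $(0,N'V)$ with $N'\in\mathfrak{so}(m)$; the full-rank hypothesis then ensures that, modulo $\Wgx$, these brackets contribute exactly $r(r-1)/2$ new independent directions.

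Putting the pieces together, $\vdim(\lie(\Wfspace{g,\ell})(\x))$ equals $D-r$ (ReLU) or $D-r(r+1)/2$ (linear), so \Cref{coro:number} produces exactly $r$ (resp.\ $r(r+1)/2$) smooth independent conservation laws in a neighbourhood of $\x$, and \Cref{prop:functionalredundacy} identifies every conservation law as a smooth function of the announced candidates. The main technical hurdle is the linear-case Lie-bracket count: one must check that the brackets produce neither fewer than $r(r-1)/2$ new directions modulo $\Wgx$ (which would leave room for spurious extra laws) nor more (which would contradict the $r(r+1)/2$ already-exhibited independent candidates).
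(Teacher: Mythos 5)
Your overall route is the one the paper takes (deferring to the cited reference for the computation): use the reparametrization of \Cref{thm:neuripsminimal} to reduce $\Wfspace{g,\ell}$ to the span of the linear vector fields $(U,V)\mapsto(\Delta V,\Delta^\top U)$ (blockwise $(u_k,v_k)\mapsto(\Delta_k v_k,\Delta_k^\top u_k)$ in the ReLU case), determine the trace of the generated Lie algebra, apply \Cref{coro:number} to count the independent laws, check that the announced candidates are conserved and independent, and conclude with \Cref{prop:functionalredundacy}. Your verification that the candidates are conservation laws and your independence arguments are correct.

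The gap is in your lower bound on $\vdim(\lie(\Wfspace{g,\ell})(\x))$, which you only establish at generic points, whereas the proposition claims the result at \emph{every} $\x\in\Theta$ (resp.\ every $\x$ with $\smallvec{U\\V}$ of full rank). In the ReLU case, $\Theta$ only constrains the $(v_k,b_k)$ (pairwise distinct hyperplanes), so parameters with some $u_k=0$ belong to $\Theta$; there the $k$-th block of $\Wgx$ is $\{(\Delta_k v_k,0)\}=\R^n\times\{0\}$, of dimension $n<n+m-1$ when $m\ge 2$, so the claim that ``no Lie bracket is needed'' fails and the sandwich $D-r\le\vdim\Wgx$ breaks down. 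Similarly, in the linear case $\vdim\Wgx=r(n+m)-r^2$ requires $U$ and $V$ to \emph{each} have rank $r$, which is not implied by $\smallvec{U\\V}$ having full rank (take $U=0$ and $V$ of rank $r$ with $m>r$: then $\vdim\Wgx=nr$), so the deficit to be filled by brackets is no longer $r(r-1)/2$ at such points. Since the trace dimension of a span of continuous vector fields is only lower semicontinuous, knowing it on a dense subset yields neither its value nor the local constancy that \Cref{coro:number} requires at the degenerate points. The repair is the computation carried out in the paper's appendix for the convolutive generalization: identify the generated Lie algebra in closed form as the set of linear fields $\x\mapsto\diag(I,-I)\,M\smallvec{U\\V}$ with $M$ skew-symmetric (blockwise in the ReLU case), whose trace has dimension $r(n+m)-r(r+1)/2$ whenever $\smallvec{U\\V}$ has rank $r$, and $n+m-1$ per block whenever $\smallvec{u_k\\v_k}\neq 0$ --- conditions that hold on the whole domain claimed in the statement.
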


\subsection{%Two-layer 
ReLU neural networks with convolutions}

{We now consider the case of a basic block of a one-hidden layer ReLU neural network {\em with convolutions}. This means that the input vector $x \in \R^m$ is considered as the concatenation of channels $x^{(i)} \in \R^p$, $1 \leq i \leq c_\mathrm{0}$ each with $p$ pixels (for images), or $p$ samples (in case of time series), so that $m = c_\mathrm{0}p$. Accordingly the output of the network is given by \eqref{eq:2layerNN} where the matrices $V$ and $U$ are made of circulant blocks respectively corresponding to convolutions with filters $v_{j,i}$, $u_{k,j}$, $1 \leq i \leq c_\mathrm{0}, 1 \leq j \leq c_\mathrm{1}$ ($c_\mathrm{1}$ is the number of channels of the hidden layer), and $1 \leq k \leq c_\mathrm{2}$ ($c_\mathrm{2}$ is the number of channels of the output $y = g(\theta,x) \in \R^n$, considered as the concatenation of channels $y^{(k)} \in \R^{n_1}$, so that $n = c_\mathrm{2}\times n_1$.} {More explicitly this corresponds to}
\begin{equation} \label{eq:convolutiveNN_multicanal}
g( \x, x) \coloneqq \Big(\sum_{j=1}^{ 
{c_\mathrm{1}}
%n_{\text{int}}
} u_{k, j} \star \sigma \big( \sum_{i=1}^{ 
{c_\mathrm{0}}
%n_{\text{in}} 
} v_{j, i} \star x^{(i)} \big)\Big)_{k=1}^{{c_\mathrm{2}}}.
\end{equation}
{Assuming that the filters all satisfy $u_{k,j} \in \R^{n_u}$ (resp. $v_{i,j} \in \R^{n_v}$)} {the network parameters }
$\x \coloneqq ((u_{k, j})_{k, j}, (v_{j, i})_{j, i})$, 
{are of dimension} %so that 
$D \coloneqq 
{c_2c_1n_u+c_1c_0n_v=c_1(c_2n_u+c_0n_v).}
$

We define $\Theta_{\RG{\mathtt{conv}}}$ as the set of all $\x$ such that the matrix $V$ 
has all its columns that define pairwise distinct hyperplanes. 

A conservation law for this network has already been established in \citep[Theorem 2.3]{Du} for a {\em single-channel} networks 
{($c_0=c_1=c_2= 1$)}. We find that similar functions are preserved in the multi-channel case, and we demonstrate that there are no others by characterizing the trace of the associated Lie algebra as well as its dimension.
The following theorem  presents \emph{all} conservation laws of the network \eqref{eq:convolutiveNN_multicanal}.
A more general version of this result is proved in Appendix \ref{app:2LconvolutiveNN}, which allows for instance to consider strided convolutions~\cite{zhang2019making} (i.e. with zeros inserted in the filter) in order to define translation invariant CNNs.

\begin{theorem} \label{thm:convolutivelaws}
Under \Cref{eq:conditionloss}, if $\LossSpace = \R^n$, then in the neighborhood of each $\x \in \Theta_{\RG{\mathtt{conv}}}$ there are exactly $ c_1$ independent conservation laws for \eqref{eq:convolutiveNN_multicanal} given by
\begin{equation}\label{eq:ConsLawConvolutive}
    {h_j(\theta) \coloneqq} 
    %\x \mapsto
    \sum_{k=1}^{ c_2} \| u_{k, j} \|^2 -  \sum_{i = 1}^{ c_0} \| v_{j, i} \|^2, \quad 
    1 \leq j \leq  c_1.
    \end{equation}
\end{theorem}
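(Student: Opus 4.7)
My plan splits into two parts: (i) verifying that each $h_j$ in \eqref{eq:ConsLawConvolutive} is a conservation law, and (ii) using \Cref{coro:number} to show there are exactly $c_1$ of them.

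For (i), I exploit a per-channel rescaling symmetry. For any $j\in\{1,\dots,c_1\}$ and $\lambda>0$, the positive $1$-homogeneity of ReLU makes \eqref{eq:convolutiveNN_multicanal} invariant under $(u_{\cdot,j},v_{j,\cdot})\mapsto(\lambda u_{\cdot,j},v_{j,\cdot}/\lambda)$ (with all other channels untouched), because $\lambda u_{k,j}\star\sigma\bigl(\tfrac{1}{\lambda}\textstyle\sum_i v_{j,i}\star x^{(i)}\bigr)=u_{k,j}\star\sigma\bigl(\textstyle\sum_i v_{j,i}\star x^{(i)}\bigr)$. Differentiating at $\lambda=1$ yields a vector field $\chi_j(\theta)$ whose components are $u_{k,j}$ on the $u$-block of channel $j$, $-v_{j,i}$ on the $v$-block, and zero elsewhere, satisfying $\partial_\theta g(\theta,x)\chi_j(\theta)=0$ for every $x\in\Xspace$. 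Since $\nabla h_j(\theta)=2\chi_j(\theta)$, combining \Cref{prop:newcharacter} with \Cref{coro:criteria} gives that $h_j$ is a conservation law. The $\nabla h_j$'s have pairwise disjoint supports (channel $j$'s filters), and the hyperplane condition defining $\Theta_{\mathtt{conv}}$ forces $v_{j,\cdot}\ne 0$ for every $j$, so $h_1,\dots,h_{c_1}$ are independent on $\Theta_{\mathtt{conv}}$ in the sense of \Cref{def:independence}.

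For (ii), by \Cref{prop:orthogonalitywithLiealg} the $c_1$ independent vectors $\nabla h_j(\theta)$ are orthogonal to $\lie(\Wfspace{g,\ell})(\theta)$, giving $\dim\lie(\Wfspace{g,\ell})(\theta)\le D-c_1$. For the matching lower bound $\dim\Wgx\ge D-c_1$, I exhibit a local factorization $g(\theta,x)=f(\phi(\theta),x)$ where $\phi=(\phi^j)_{j=1}^{c_1}$ and $\phi^j(\theta)\coloneqq(u_{k,j}v_{j,i}^\top)_{k,i}$ encodes channel $j$'s parameters modulo its $1$-dimensional scaling orbit. The positive homogeneity of ReLU lets one reconstruct $u_{k,j}\star\sigma(\sum_i v_{j,i}\star x^{(i)})$ from $\phi^j(\theta)$, so $f$ is well defined on $\mathrm{Im}(\phi)$. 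A per-channel computation shows $\partial\phi^j(\theta)$ has rank $c_2 n_u+c_0 n_v-1$ with kernel spanned by $\chi_j(\theta)$, so $\partial\phi(\theta)$ has rank $D-c_1$. Under the distinct-hyperplane condition in $\Theta_{\mathtt{conv}}$ and $\LossSpace=\R^n$, a convolutional analog of \citep[Lemma~2.13]{marcotte2023abide} identifies $\Wfp$ with the tangent space to $\mathrm{Im}(\phi)$ at $\phi(\theta)$, and \Cref{coro:criteriawithf} then yields $\dim\Wgx=D-c_1$.

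Combining the two bounds gives $\dim\lie(\Wfspace{g,\ell})(\theta)=D-c_1$, so \Cref{coro:number} produces exactly $c_1$ smooth independent conservation laws on a neighborhood of $\theta$, and \Cref{prop:functionalredundacy} implies any such law is a function of $h_1,\ldots,h_{c_1}$. The main technical obstacle is the convolutional analog of \citep[Lemma~2.13]{marcotte2023abide}: unlike the plain shallow ReLU case (\Cref{thm:neuripsminimal}), the shared weights constrain the manifold $\mathrm{Im}(\phi)$, so filling its tangent space by the span of $[\partial f^x(\phi(\theta))]^\top w$ over $(x,w)\in\Xspace\times\LossSpace$ requires carefully constructing data witnesses that activate each hidden channel in complementary ways, leveraging precisely the distinct-hyperplane condition built into $\Theta_{\mathtt{conv}}$.
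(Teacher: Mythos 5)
Your part (i) --- exhibiting the per-channel rescaling invariance, deducing $\partial_\theta g(\theta,x)\,\chi_j(\theta)=0$, and concluding via \Cref{prop:newcharacter} and \Cref{coro:criteria} that the $h_j$ are independent conservation laws --- is correct and matches the paper's final step (and its appendix on invariances). The upper bound $\dim\lie(\Wfspace{g,\ell})(\theta)\le D-c_1$ via \Cref{prop:orthogonalitywithLiealg} is also fine. The genuine gap is in the lower bound, which is where all the substance of the theorem lies: you invoke ``a convolutional analog of [Lemma~2.13]'' to assert that $\Wfp$ fills the tangent space of $\mathrm{Im}(\phi)$ for your structured reparametrization $\phi^j(\theta)=(u_{k,j}v_{j,i}^\top)_{k,i}$, you yourself identify this as the main technical obstacle, but you never prove it. This is not a routine adaptation: weight sharing constrains which directions of $\phi(\theta)$ can be reached by varying $(x,w)$, and showing they still span the whole tangent space is exactly the hard part. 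The paper sidesteps this by \emph{not} introducing a new structured reparametrization: it writes $G(\theta,x)=g(T\theta,x)$ with $T$ linear and $g$ the \emph{unstructured} two-layer ReLU network, reuses the known fact $\mathcal{W}^{f,\ell}_{\phi(T\theta)}=\R^d$ from \Cref{thm:neuripsminimal}, and reduces completeness to a purely linear-algebraic claim --- that ${P_i'}^\top\Delta_{i,j}Q_i'$ realizes any prescribed matrix --- which follows from the injectivity of the $P_i,Q_i$ (\Cref{as:injectivitymatrices}). Without either that reduction or a full proof of your analog lemma, ``exactly $c_1$'' (as opposed to ``at least $c_1$'') is not established.

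A second, smaller problem: your lower bound rests on $\partial\phi^j(\theta)$ having rank $c_2n_u+c_0n_v-1$ with kernel $\R\chi_j$, which requires both $u^{(j)}\neq 0$ and $v^{(j)}\neq 0$. But $\Theta_{\mathtt{conv}}$ only constrains $V$, so it contains points with $u^{(j)}=0$; there the differential of $\phi^j$ degenerates to $h\mapsto h\,(v^{(j)})^\top$ (rank $c_2n_u$), $\mathrm{Im}(\phi)$ is not even a manifold near $\phi^j(\theta)=0$, and indeed $\dim\Wgx<D-c_1$, so your sandwich fails. One must pass to the Lie algebra closure to recover dimension $D-c_1$ at such points, which is precisely what the paper's explicit computation of $\lie(\Wfspace{G,\ell})$ and of $\operatorname{rank}\Gamma$ accomplishes (it only needs $\theta^{(j)}=(u^{(j)},v^{(j)})\neq 0$, guaranteed by the hyperplane condition). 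So even granting your analog of Lemma~2.13, the argument as written only covers the open subset of $\Theta_{\mathtt{conv}}$ where every $u^{(j)}\neq 0$.
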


\begin{remark}
The formulation of the neural network \eqref{eq:convolutiveNN_multicanal} in the multi-channel convolutive case generalizes the one \eqref{eq:2layerNN} of a simple 2-layer ReLU network without convolution. Specifically, setting $p = 1$ and identifying $c_0$ and $n_v$ with $m$, $c_1$ with $r$, and $c_2$ and $n_u$ with $n$, yields  \eqref{eq:2layerNN} and the conservation laws obtained in \Cref{prop:nbneurips} coincide with the ones given in \Cref{thm:convolutivelaws}.
\end{remark}

\subsection{One attention layer}\label{sec:oneattlayer}
{For an attention layer, the input $X \in \R^{N \times \text{dim}}$ is reshaped as the concatenation} $x \in \R^m$ 
of $N$ tokens $x^{(i)} \in \R^\textrm{dim}$, with $m = N\textrm{dim}$. 
{The layer output is}
\eq{ \label{eq:attentionlayer}
g(\x, x) = \mathrm{softmax} ( X Q^\top K X^\top) X V^\top O \
{ \in \R^{N \times \textrm{dim}}}
}
{(reshaped row by row as a $n$-dimensional vector with $n=N  \times \textrm{dim}$ to fit our generic notations),} and where: 
$$
\mathrm{softmax}(A)_i = \frac{\exp(A_i)}{\sum_{k=1}^{N} \exp(A_{ik})},
$$
and with $Q, K, V, O \in \R^{d_1 \times \text{dim}}$. 
We assume that all the columns of $O^\top V$ are non equal to zero. We consider $\Theta_{\RG{\mathtt{att}}}$ the set of all parameters that satisfy this condition.

We define $\phi( \x) = (\phi_1, \phi_2)$ with $\phi_1 = Q^\top K$ and $\phi_2 = V^\top O $ the reparametrization such that (up to flattening of matrices into vectors) $g (\x, x) = f(\phi, x) =  \mathrm{softmax} (X \phi_1 X^\top ) X \phi_2$.
The following theorem (see \Cref{app:proofminimaliteatt} for a proof) demonstrates that the parameterization $\phi$ is, in a some sense, sufficiently rich and allows for reduction to the study of a Lie algebra generated by the vector fields $(\x \mapsto \partial \phi(\x)^\top e_k)_k$.
\begin{restatable}{theorem}{thmoneattentionlayer} \label{thm:oneattentionlayer}
  Under \Cref{eq:conditionloss}, if $\LossSpace = \R^n$ {and $N \geq 2$} then  
  \[
  \Wfp = \R^d,\ \text{and}\  
  \Wgx = \mathrm{range} \{ \partial \phi(\x)^\top \},\ \forall \x \in \Theta_{\RG{\mathtt{att}}}.
  \]
  %, one has:.
\end{restatable}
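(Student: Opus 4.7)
The plan is to combine \Cref{prop:3} with the hypothesis $\LossSpace=\R^n$ to reduce the claim to showing $\Wfp=\R^d$, since the second identity $\Wgx=\mathrm{range}(\partial\phi(\theta)^\top)$ is then immediate from \eqref{eq:chain_rules}. By \Cref{coro:criteriawithf} and duality, $\Wfp=\R^d$ is equivalent to the following injectivity statement: if $(\delta_1,\delta_2)\in \R^{\mathrm{dim}\times\mathrm{dim}}\times\R^{\mathrm{dim}\times\mathrm{dim}}$ satisfies $\partial f^x(\phi)\cdot(\delta_1,\delta_2)=0$ for every token matrix $X\in\R^{N\times\mathrm{dim}}$ (the input $x$ reshaped), then $(\delta_1,\delta_2)=(0,0)$.

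First I would write out the two partial derivatives of $f(\phi_1,\phi_2,X)=S(\phi_1,X)\,X\phi_2$, where $S(\phi_1,X)\coloneqq\mathrm{softmax}(X\phi_1 X^\top)$. This gives $\partial_{\phi_2} f\cdot\delta_2=SX\delta_2$, and a routine softmax-Jacobian calculation yields, for the $i$-th row, $\partial_{\phi_1} f_i\cdot\delta_1=\sum_{j} S_{ij}\,x_i^\top\delta_1(x_j-\bar x_i)\,x_j^\top\phi_2$ with $\bar x_i\coloneqq\sum_\ell S_{i\ell} x_\ell$. The heart of the argument is then to exploit the hypothesis $N\geq 2$ via two carefully chosen test inputs, both with exactly $N=2$ tokens.

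\emph{Step 1: degenerate input $x_2=0$.} With this choice $S_{21}=S_{22}=\tfrac12$ independently of $\phi_1$, so the row-$2$ equation collapses to $\tfrac12\,x_1^\top\delta_2=0$ for every $x_1\in\R^{\mathrm{dim}}$, giving $\delta_2=0$. Substituting back, the row-$1$ equation reduces (after factoring out the positive scalar $S_{11}S_{12}$) to the identity $(x_1^\top\delta_1 x_1)(x_1^\top\phi_2)=0$ in $\R^{1\times\mathrm{dim}}$ for all $x_1$. Reading this componentwise along each column of $\phi_2$ gives, for each $k$, the polynomial identity $(x_1^\top\delta_1 x_1)(\phi_2^{(k)\top}x_1)=0$. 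The standing hypothesis that every row of $\phi_2=V^\top O$ is nonzero means $\phi_2\neq 0$, so at least one column $\phi_2^{(k)}$ is nonzero. For that $k$ the linear factor is a nonzero polynomial, hence by $\R[x_1]$ being an integral domain the quadratic factor must vanish identically, i.e.\ $\delta_1+\delta_1^\top=0$.

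\emph{Step 2: generic input.} Now take $x_1,x_2\in\R^{\mathrm{dim}}$ arbitrary and set $\beta\coloneqq x_1^\top\delta_1 x_2$. Using antisymmetry, the diagonal forms $x_i^\top\delta_1 x_i$ vanish and $x_2^\top\delta_1 x_1=-\beta$, which makes all four differentials $\partial_{\phi_1} S_{ij}\cdot\delta_1$ proportional to $\pm\beta$. Both rows of the equation then collapse to the single identity $\beta\,(x_2-x_1)^\top\phi_2=0$ for all $x_1,x_2$. Setting $y\coloneqq x_2-x_1$ and using antisymmetry once more yields $\beta=-y^\top\delta_1 x_2$, so $(y^\top\delta_1 x_2)(y^\top\phi_2)=0$ for all $(y,x_2)$. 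Since $\phi_2\neq 0$, the set $\{y:y^\top\phi_2\neq 0\}$ is a nonempty open subset of $\R^{\mathrm{dim}}$; fixing any such $y$ and letting $x_2$ vary forces $y^\top\delta_1=0$ on this open set. The kernel of $\delta_1^\top$ is a linear subspace, so containing an open set forces it to be all of $\R^{\mathrm{dim}}$, whence $\delta_1=0$.

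The main obstacle is precisely the two-step structure: the natural degenerate choice $x_2=0$ only eliminates $\delta_2$ and the \emph{symmetric} part of $\delta_1$, because zeroing out $x_2$ destroys all information carried by the mixed quantity $\beta$; a second, generic input is then needed specifically to expose the antisymmetric part, and the dramatic simultaneous cancellation of both rows into the single identity $\beta(x_2-x_1)^\top\phi_2=0$ once antisymmetry is assumed is what makes the argument close cleanly.
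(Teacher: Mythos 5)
Your overall strategy is sound and genuinely different from the paper's. The paper proves $\Wfp=\R^d$ by a direct spanning argument: it plugs in one-hot token matrices $X=E_{i,j}$ and $X=E_{i,j}+E_{k,l}$ together with one-hot perturbations $\Delta=E_{k,l}$, and shows that the resulting vectors $[\partial_\phi f]^\top\cdot\Delta$ generate every canonical basis element of $\R^{\mathrm{dim}\times(2\mathrm{dim})}$. You instead prove the dual statement, namely that the common kernel $\{(\delta_1,\delta_2):\partial f^x(\phi)\cdot(\delta_1,\delta_2)=0\ \forall x\}$ is trivial, which by the duality you correctly set up is equivalent. Your elimination of the symmetric part of $\delta_1$ via the polynomial identity $(x^\top\delta_1x)(\phi_2^{(k)\top}x)\equiv0$ and the integral-domain argument is clean and arguably more transparent than the paper's coefficient bookkeeping. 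Your Step 1 (one nonzero token, the rest set to zero) also extends verbatim to any $N\geq2$: every row $i\geq2$ still forces $\delta_2=0$, and row $1$ still carries the positive factor $S_{11}(1-S_{11})$.

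There is, however, a gap in Step 2 when $N>2$. The input must be an $N\times\mathrm{dim}$ token matrix, so for $N>2$ you have to pad your two active tokens with $N-2$ zero tokens. Redoing the row-$1$ computation with this padding gives $S_{12}\,\beta\,[(1-S_{12})x_2-S_{11}x_1]^\top\phi_2=0$ rather than $S_{11}S_{12}\,\beta\,(x_2-x_1)^\top\phi_2=0$: the identity $1-S_{12}=S_{11}$ that produces the clean factor $(x_2-x_1)$ holds only when the softmax is over exactly two tokens, and row $2$ yields the different combination $[S_{22}x_2-(1-S_{21})x_1]^\top\phi_2$. So the advertised collapse of both rows into the single identity $\beta\,(x_2-x_1)^\top\phi_2=0$ fails for $N>2$, and the subsequent substitution $\beta=-y^\top\delta_1x_2$ with $y=x_2-x_1$ no longer applies. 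The conclusion can still be rescued --- for instance, on the open set where $\beta\neq0$ one gets $[(1-S_{12})x_2-S_{11}x_1]^\top\phi_2=0$, and letting $x_2\to0$ along a ray (or invoking that real-analytic functions on a connected open set form an integral domain) forces $x_1^\top\phi_2=0$ on a nonempty open set of $x_1$, contradicting $\phi_2\neq0$ --- but this extra step is genuinely needed and is absent from your write-up. As written, your argument establishes the theorem only for $N=2$.
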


Thanks to this theorem, the analysis boils down to a similar problem as for \Cref{prop:nbneurips} and allows us to determine \emph{all} conservation laws.
See \Cref{app:clattention} for a proof.
\begin{restatable}{corollary}{coroclattention} \label{coro:clattention}
 Under
 the assumptions of \Cref{thm:oneattentionlayer}
 %\Cref{eq:conditionloss}, if $\LossSpace = \R^n$, then for
 for each $\x \in \Theta_{\mathtt{att}}$ such that both {horizontally} \RG{concatenated} matrices $\begin{pmatrix}
    Q, K
  \end{pmatrix}$ and $\begin{pmatrix}
    V, O
  \end{pmatrix}$ have full rank, there is a neighborhood of $\x$ in which 
  %the only independent 
  \RG{all} conservation laws for \eqref{eq:attentionlayer} are \RG{functions of}
  %the ones obtained by: 
  $
  Q Q^\top - K K^\top, \quad \text{and} \quad V V^\top - O O^\top
  $
 {and vice-versa.}
  \end{restatable}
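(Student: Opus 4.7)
The plan is to apply Theorem \ref{thm:oneattentionlayer} and then exploit the fact that the reparametrization $\phi = (\phi_1, \phi_2)$ with $\phi_1 = Q^\top K$ and $\phi_2 = V^\top O$ splits across two disjoint parameter blocks, reducing the problem to two independent copies of the linear network case already settled by Proposition \ref{prop:nbneurips}.

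By Theorem \ref{thm:oneattentionlayer}, on $\Theta_{\mathtt{att}}$ we have $\Wgx = \mathrm{range}(\partial\phi(\theta)^\top)$, so the generating vector fields $\chi_i^\ell$ of Proposition \ref{prop:orthogonalitywithLiealg} are the columns of $\partial\phi^\top$. Since $\phi_1$ depends only on $(Q,K)$ and $\phi_2$ only on $(V,O)$, the Jacobian $\partial\phi$ is block-diagonal with respect to the splitting of $\theta$ into the two parameter blocks. Consequently the $\chi_i^\ell$ partition into two families $\{\chi_i^{(1)}\}$ and $\{\chi_j^{(2)}\}$ supported on $(Q,K)$ and $(V,O)$ respectively. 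Every cross bracket $[\chi_i^{(1)},\chi_j^{(2)}]$ vanishes identically because the two vector fields act on disjoint coordinates, so $\mathrm{Lie}(\mathrm{span}_i\{\chi_i^\ell\})$ itself splits as a direct sum of two Lie algebras living in disjoint variables.

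The orthogonality criterion of Proposition \ref{prop:orthogonalitywithLiealg} then decouples: a smooth $h$ is a conservation law for $g$ iff, for every fixed $(V_0,O_0)$, the partial map $(Q,K)\mapsto h(Q,K,V_0,O_0)$ is a conservation law for the reparametrization $(A,B)\mapsto A^\top B$ on $\R^{d_1\times\mathrm{dim}}\times\R^{d_1\times\mathrm{dim}}$, and symmetrically at fixed $(Q_0,K_0)$ for the $(V,O)$ block. Up to the transposition $U\coloneqq A^\top$, $W\coloneqq B^\top$, each sub-reparametrization is exactly the linear shallow network $(U,W)\mapsto UW^\top$ of Proposition \ref{prop:nbneurips} with hidden width $r = d_1$, and the full-column-rank hypothesis on $\smallvec{U\\ W}$ required there is precisely the full-row-rank hypothesis on the horizontally concatenated $(A,B) \in \R^{d_1\times 2\mathrm{dim}}$ made in the present corollary.

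Applying Proposition \ref{prop:nbneurips} to each sub-block shows that every conservation law on the $(Q,K)$-block is locally a function of the entries of $QQ^\top - KK^\top$ (the pairwise inner products of rows of $Q$ minus those of rows of $K$), and likewise of the entries of $VV^\top - OO^\top$ on the $(V,O)$-block. Combining both partial characterizations yields that $h$ is locally a joint function of these two symmetric matrices, and the converse (these quantities being themselves conservation laws) follows from the same proposition applied block by block. The main obstacle is the Lie-algebra decoupling across the two blocks together with the careful bookkeeping of transpositions between the attention parametrization $(Q,K,V,O)$ and the $(U,V)$ convention of Proposition \ref{prop:nbneurips}; both are routine once made explicit.
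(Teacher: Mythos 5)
Your proposal is correct and follows essentially the same route as the paper: both exploit the separability of $\phi=(Q^\top K, V^\top O)$ across the two disjoint parameter blocks, observe that all cross Lie brackets vanish so the generated Lie algebra splits as a direct sum, and reduce each block (after transposition) to the two-layer linear network case of \Cref{prop:nbneurips}. The only cosmetic difference is in the final bookkeeping: the paper concludes by counting $D-d_1(\x)-d_2(\x)$ independent laws via \Cref{coro:number} and matching them with the block quantities, whereas you argue by freezing one block at a time and then merging the two partial functional dependences — both yield the same conclusion.
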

  
For more than one head, the neural network writes
\eq{ \label{eq:attentionlayer_multihead}
g(\x, X) = \sum_{h = 1}^H \mathrm{softmax} ( X Q_h^\top K_h X^\top) X V_h^\top O_h,
}
up to matrix flattening,
with $Q_h, K_h, V_h, O_h \in \R^{\frac{d_1}{H} \times \text{dim}}$. In the case of multi-head attention, we can partially extend our results to obtain a set of conserved quantities (see \Cref{coro:multihead}, with proof detailed in \Cref{app:clattention}). However, determining whether this set of conservation laws is complete remains an open problem.

\begin{restatable}{corollary}{coromultihead} \label{coro:multihead}
  For any $h = 1, \cdots, H$, the functions
  $$
  Q_h Q_h^\top - K_h K_h^\top, \quad V_h V_h^\top - O_h O_h^\top, 
  $$
  define conservation laws for \eqref{eq:attentionlayer_multihead} with respect to any loss $\ell$. %such that $\LossSpace=\R^n$.
\end{restatable}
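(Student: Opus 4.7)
The plan is to reduce the multi-head case to the single-head result of \Cref{coro:clattention} by exploiting the disjoint-block structure of the parameters. Writing $\theta = (\theta_1, \ldots, \theta_H)$ with $\theta_h \coloneqq (Q_h, K_h, V_h, O_h)$, equation \eqref{eq:attentionlayer_multihead} reads $g(\theta, X) = \sum_{h=1}^H g_h(\theta_h, X)$, where each summand $g_h$ is a single-head attention layer of the form \eqref{eq:attentionlayer} depending only on $\theta_h$. Consequently, for any $w$, the vector $\partial_\theta g(\theta, x)^\top w$ is block-structured: its $j$-th block equals $\partial_{\theta_j} g_j(\theta_j, x)^\top w$ and is unaffected by $\theta_h$ for $h \neq j$.

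By the orthogonal characterization (\Cref{prop:newcharacter} combined with \Cref{coro:criteria}), a function $h$ is a conservation law for $g$ with respect to $\ell$ if and only if $\nabla h(\theta) \perp \partial_\theta g(\theta, x)^\top w$ for all admissible $(x, w) \in \mathcal{X}_\theta \times \mathcal{V}_\ell$ and all $\theta$. For each candidate $h_j^{QK}(\theta) \coloneqq Q_j Q_j^\top - K_j K_j^\top$ (respectively $h_j^{VO}(\theta) \coloneqq V_j V_j^\top - O_j O_j^\top$), taken entrywise, the gradient $\nabla h_j^{QK}(\theta)$ is supported on the $j$-th parameter block. Hence the orthogonality reduces to the single-head test
\[
\langle \nabla_{\theta_j} h_j^{QK}(\theta_j),\, \partial_{\theta_j} g_j(\theta_j, x)^\top w \rangle = 0,
\]
involving only the head $g_j$ in isolation, and likewise for $h_j^{VO}$.

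Thus it suffices to invoke \Cref{coro:clattention} applied to $g_j$: its ``vice-versa'' direction exhibits both $h_j^{QK}$ and $h_j^{VO}$ as conservation laws of $g_j$ on the open dense subset of $\Theta_{\mathtt{att}}$ where $(Q_j, K_j)$ and $(V_j, O_j)$ have full rank. Since the orthogonality condition $\langle \nabla h(\theta), \partial_\theta g(\theta, x)^\top w\rangle = 0$ is continuous in $\theta$, it extends from this dense subset to the whole parameter domain, concluding the proof. I do not anticipate a significant obstacle: the argument is essentially a bookkeeping reduction once the block decomposition is made explicit, the only minor subtlety being the density/continuity extension needed to remove the rank assumptions from \Cref{coro:clattention}.
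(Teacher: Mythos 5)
Your proof is correct and follows the same basic reduction as the paper's: exploit the fact that the heads occupy disjoint parameter blocks so that the orthogonality test for $Q_jQ_j^\top - K_jK_j^\top$ (resp.\ $V_jV_j^\top - O_jO_j^\top$) localizes to the single head $g_j$, then invoke the single-head result. Where you differ is in how the hypotheses of \Cref{coro:clattention} are removed (full rank of $(Q_j,K_j)$ and $(V_j,O_j)$, nonzero columns of $O_j^\top V_j$, and, inherited from \Cref{thm:oneattentionlayer}, the condition $N\geq 2$). You prove the orthogonality on the dense open set where these hold and extend by continuity of $\theta \mapsto \partial_\theta g(\theta,x)^\top w$, which is legitimate here because attention layers are smooth in $\theta$ and the candidate laws are polynomial. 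The paper instead observes that only the \emph{inclusion} $\mathcal{W}^{g,\ell}_{\theta} \subseteq \linspan\{\nabla\phi^h_j(\theta)\}$ is needed, and this inclusion is an unconditional consequence of the chain rule (\Cref{prop:3}) applied to the reparameterization $\phi^h(\theta) = (Q_h^\top K_h, V_h^\top O_h)$; the candidate functions are orthogonal to every $\nabla \phi^h_j$ by the same algebraic identity that underlies the single-head case, so no genericity or limiting argument is required. The paper's route is cleaner and also covers the degenerate case $N=1$, which your argument as written does not literally reach, since \Cref{coro:clattention} rests on \Cref{thm:oneattentionlayer} with $N \geq 2$ (for $N=1$ the conclusion still holds, but via the separate remark following the proof of \Cref{thm:oneattentionlayer}). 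This is a boundary case rather than a substantive gap.
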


\subsection{Cross-Entropy Classification Layer}
For classification tasks (which we consider in the numerical part \Cref{sec:numerical-xp}), the final layer typically combines a linear transformation with the evaluation of a cross-entropy. This corresponds to using a Kullback-Leibler loss over $n$ classes $\ell(y, y') := \sum_{i=1}^{n} y_i \log \left( {y_i}/{y'_i} \right) - y_i + y'_i$,
together with a soft-max layer $g({\theta}, \cdot)$ where $\theta  \in \mathbb{R}^{n \times m}$:
\begin{equation} \label{eq:functionforcross}
g({\theta},x) \coloneqq \mathrm{softmax}(\theta x),\: 
\: \mathrm{softmax}(z)_i := \frac{e^{z_i}}{\sum_{j} e^{z_j}}.
\end{equation}
Note that this loss satisfies \Cref{eq:conditionloss} and $\LossSpace = \R^n$.
The following proposition (proved in \Cref{app:crossentropy}) shows that the softmax layer induces a new set of conservation laws \RG{with respect to this loss}.

\begin{restatable}{proposition}{propcrossentropy} \label{prop:crossentropy}
\RG{With respect to any loss $\ell$ such that $\LossSpace=\R^n$,} there are exactly $m$ independent conservation laws for the classification layer given by \eqref{eq:functionforcross}:
$h_j(\theta) \coloneqq \sum_{i} \theta_{i,j},  j=1, \dots, m$.
\end{restatable}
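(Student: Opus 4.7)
The plan is to combine the orthogonal characterization of \Cref{prop:newcharacter} (via \Cref{coro:criteria}) with the dimension-counting formula of \Cref{coro:number}. First I would identify the parameter space with $\R^{n\times m}$ and write $s(z) := \mathrm{softmax}(z)$, with Jacobian $S(z) := \mathrm{diag}(s(z)) - s(z) s(z)^\top$. Applying the chain rule to $g(\theta,x) = s(\theta x)$ and transposing gives $\partial_\theta g(\theta,x)^\top w = (S(\theta x)\, w)\, x^\top$ for any $w\in\R^n$, $x\in\R^m$. Since $\LossSpace = \R^n$, \Cref{coro:criteria} then yields
\[
\Wgx = \linspan\bigl\{\,(S(\theta x)\,w)\,x^\top : x\in\R^m,\ w\in\R^n\,\bigr\}.
\]

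Next I would verify that each $h_j$ is a conservation law. Viewed as a matrix in $\R^{n\times m}$, its gradient is $\mathbf{1}_n e_j^\top$. Shift-invariance of softmax gives $S(z)\mathbf{1}_n = 0$, and symmetry of $S(z)$ then yields $\mathbf{1}_n^\top S(z) = 0$; hence
\[
\bigl\langle \mathbf{1}_n e_j^\top,\ (S(\theta x)\, w)\,x^\top\bigr\rangle_F = x_j\,\mathbf{1}_n^\top S(\theta x)\,w = 0
\]
for every $w,x$, so $\nabla h_j\perp\Wgx$ and \Cref{prop:newcharacter} guarantees that $h_j$ is conserved. The $m$ gradients $\mathbf{1}_n e_j^\top$ have pairwise disjoint supports, hence are linearly independent, so $(h_1,\ldots,h_m)$ is an independent family in the sense of \Cref{def:independence}.

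The crux is showing no further independent conservation law exists. I would do this by computing $\dim\Wgx = (n-1)m$ directly: taking $x = e_j$, as $w$ varies in $\R^n$ the vectors $S(\theta e_j)\, w$ span $\mathrm{image}(S(\theta e_j)) = \mathbf{1}_n^\perp$ (the matrix $\mathrm{diag}(s) - s s^\top$ is symmetric, with null space exactly $\mathrm{span}(\mathbf{1}_n)$ whenever $s > 0$, and hence has image the $(n{-}1)$-dimensional orthogonal hyperplane). This produces all matrices supported on column $j$ with zero column sum, and letting $j$ vary covers the full $(n-1)m$-dimensional subspace $\{M \in \R^{n\times m} : \mathbf{1}_n^\top M = 0\}$. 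The sandwich $\Wgx \subseteq \mathrm{Lie}(\Wfspace{g,\ell})(\theta) \subseteq \linspan\{\nabla h_1,\ldots,\nabla h_m\}^\perp$, together with the fact that the two outer spaces both have dimension $(n-1)m$, forces equality throughout; hence $\dim\mathrm{Lie}(\Wfspace{g,\ell})(\theta) \equiv (n-1)m$ is locally constant, \Cref{coro:number} yields $D - (n-1)m = m$ independent smooth conservation laws in a neighborhood of every $\theta$, and \Cref{prop:functionalredundacy} then shows any such law is necessarily a function of $h_1,\ldots,h_m$. The only real obstacle is the explicit rank-$(n-1)$ computation for $S(z)$, which is what makes the Lie-bracket calculation unnecessary; otherwise the argument reduces to bookkeeping.
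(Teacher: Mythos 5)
Your proposal is correct and follows essentially the same route as the paper: compute $\Wfspace{g,\ell}(\theta)$ via the softmax Jacobian, identify it as the $(nm-m)$-dimensional space of matrices with zero column sums, argue the Lie algebra adds nothing, and conclude via \Cref{coro:number} together with a direct verification that the $h_j$ are independent conserved quantities. The only cosmetic difference is that you justify $\lie(\Wfspace{g,\ell})(\theta)=\Wfspace{g,\ell}(\theta)$ by sandwiching between $\Wgx$ and $\linspan\{\nabla h_j\}^\perp$, whereas the paper notes the trace is a $\theta$-independent subspace, so brackets cannot leave it; both are valid.
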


\section{Deeper ResNets and Transformers}

In this section, we examine conservation laws for deep networks $g(\x,\cdot)$ (denoted here as $g_{\theta}(x)$) composed of $q$ residual blocks. Specifically, each block $l$ consists of parameters $\theta_l$ such that $\theta = (\theta_1, \dots, \theta_q)$ and corresponds to 
an elementary network denoted $g_{\theta_l}^l(x)$. Thus, the global network $g_\x$ can be written as the composition of $q$ 
elementary networks: 
\[
g_{\theta}: x \in \R^m \mapsto  g_{\theta_q}^q \circ \cdots \circ g_{\theta_1}^1(x)  \in \R^m.
\]

In case of a classification task using a cross-entropy loss, we consider a last block constitued of a linear (fully-connected) layer and a softmax activation $g_{\x_{q+1}}^{q+1}$ as in \eqref{eq:functionforcross}, so that $\x \coloneqq (\x_1, \cdots,  \x_{q+1})$ and the global network then writes:
\[
g_{\x} : x \in \R^m \mapsto g_{\x_{q+1}}^{q+1} \circ g_{\theta_q}^q \circ \cdots \circ g_{\theta_1}^1(x) \in \R^n.
\]

\begin{example}[Convolutive ResNet] \label{ex:convolutiveresnet}
A convolutive ResNet corresponds to $g_{\x}: x \in \R^m \mapsto g_{\x}( x) \in \R^n $ with $q$ residual blocks (we recall that here $m = c_0 p$). Each block $l \leq q$ has parameters $(u_{k, j}^{l}, v_{j, i}^{l})$ and consists of a plain 2-layer convolutive ReLU network \eqref{eq:convolutiveNN_multicanal} with a skip connection:
\eq{ \label{eq:resblocresnet}
g_{\theta_l}^l: x \mapsto x + U^{l} \sigma \left(  V^{l} x \right),
}
with matrices $V^l$ and $U^l$ made of circulant blocks respectively corresponding to convolutions with filters $v_{j,i}^{l}$, $u_{k,j}^{l}$.
\end{example}

\begin{example}[Transformer] \label{ex:transformer}
We consider a transformer architecture where $g_{\x}: x \in \R^m \mapsto g_{\x}( x) \in \R^n $(here $m = N \times \textrm{dim}$) consists of alternating residual blocks with either an attention layer with one head, or a $2$-layer MLP. We notably omit normalization layers, and restrict to a single head. In that case, up to appropriate harmless matricizations or flattening operations associating the vector $x$ to the token matrix $X$, each block $l \leq q$ writes either:
\eq{ \label{eq:mlpbloc}
g_{\x_l}^l: x \mapsto  \text{vec} (X^\top + U_l \sigma (U_l' X^\top)),
}
where $\x_l = (U_l, U_l')$ corresponds to the two weight matrices of a $2$-layer MLP \eqref{eq:2layerNN}, or:
\eq{\label{eq:attbloc}
\!\! g_{\x_l}(x)=\text{vec} ( [ X + \mathrm{softmax} ( X Q_l^\top K_l X^\top) X V_l^\top O_l]^\top),
}
where $\x_l = (Q_l, K_l, V_l, O_l)$ corresponds to the parameters of a single-head attention layer \eqref{eq:attentionlayer}.
\end{example}

\subsection{Characterization of ``block'' conservation laws}
To analyze conservation laws in this context we focus on laws that depend only on a subset (or block) of parameters.
Considering $\x \in \Theta$ the parameters of a global network $g_{\x}$, we focus on $\x_T$ a \emph{subset} of the parameter entries (typically we will consider $\x_T = \x_l$ for some $l$, but other scenarios will also be covered), so we can write $\x = (\x_T , \x_{T^c})$, where $\x_{T^c}$ gathers the remaining entries of the parameters $\x$. 

The following proposition (see \Cref{app:caractprojloi}) characterizes smooth conservation laws of $g$ that only depend on $\x_T$.
\begin{restatable}{proposition}{propcaractprojloi} \label{prop:caractprojloi}
 Consider a function $h \in \mathcal{C}^1(\Theta,\RR)$ that only depends on the coordinates $\x_T$, and for each $\x \in \Theta$ denote
    \begin{equation} \label{eq:complementary}
        \Theta_{T^c}(\x_T) 
        \coloneqq \{\eta \in \RR^{T^c}: (\x_T,\eta) \in \Theta\}
    \end{equation}
    Consider a loss that satisfies the assumptions of \Cref{prop:newcharacter} as well as \Cref{eq:conditionloss}.
    %Assume that $\Theta_{T^c}(\x_T)$ is open for every $\x \in \Theta$. Then such as
    The function $h$ is a conservation law of $g$ with respect to $\ell$ if, and only if, for every $\theta \in \Theta$ one has $\nabla_{\theta_T} h(\theta) \perp
     R_{\x_T} (\Wfspace{g,\ell})$, where:
  \begin{equation}
      R_{\x_T} (\Wfspace{g,\ell}) \coloneqq 
     \underset{ w \in \LossSpace}{\underset{\eta\in \Theta_{T^c}(\x_T)}{\linspan}}\ 
      \underset{
      x \in \mathcal{X}_{(\x_T,\eta)}}{\linspan}
    \{ 
    \partial_{\x_T} g((\x_T,\eta), x)^\top w
    \}.
  \end{equation}
\end{restatable}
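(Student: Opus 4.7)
The plan is to reduce the statement to a direct consequence of \Cref{prop:newcharacter} and \Cref{coro:criteria}, by exploiting the support constraint on $\nabla h$ imposed by the dependence of $h$ on $\x_T$ alone. No new machinery should be needed beyond the orthogonal characterization of conservation laws already recalled in the excerpt.

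First, since $h \in \mathcal{C}^1(\Theta,\R)$ depends only on $\x_T$, I would decompose its gradient in the block structure $\R^D = \R^T \oplus \R^{T^c}$ as $\nabla h(\x) = (\nabla_{\x_T} h(\x_T), 0)$, where $\nabla_{\x_T} h(\x_T)$ depends only on $\x_T$ and the $T^c$-block is identically zero. By \Cref{prop:newcharacter} combined with \Cref{coro:criteria}, $h$ is a conservation law for $g$ with respect to $\ell$ if and only if for every $\x \in \Theta$, every $x \in \mathcal{X}_\x$, and every $w \in \LossSpace$,
$$\langle \nabla h(\x),\, \partial_\x g(\x,x)^\top w \rangle = 0.$$
Splitting $\partial_\x g(\x,x)^\top w$ into its $T$- and $T^c$-blocks and using that the $T^c$-block of $\nabla h$ vanishes, this condition reduces to $\langle \nabla_{\x_T} h(\x_T),\, \partial_{\x_T} g(\x,x)^\top w \rangle = 0$ for all admissible $(\x,x,w)$.

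Second, I would exploit that $\nabla_{\x_T} h(\x_T)$ does not depend on $\x_{T^c}$. Fixing $\x_T$, letting $\x$ run over all elements of $\Theta$ with that prescribed $T$-projection amounts exactly to letting $\eta$ range over $\Theta_{T^c}(\x_T)$ as defined in \eqref{eq:complementary}, with $x \in \mathcal{X}_{(\x_T,\eta)}$ and $w \in \LossSpace$. The orthogonality above therefore holds for all $\x \in \Theta$ if and only if, for every $\x \in \Theta$, the vector $\nabla_{\x_T} h(\x_T)$ is orthogonal to every vector of the form $\partial_{\x_T} g((\x_T,\eta),x)^\top w$ with $\eta, x, w$ ranging as in the definition of $R_{\x_T}(\Wfspace{g,\ell})$, i.e.\ to all of $R_{\x_T}(\Wfspace{g,\ell})$. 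The converse direction (recombining $\nabla_{\x_T} h$ with a zero $T^c$-block to recover orthogonality to $\Wgx$ and applying \Cref{prop:newcharacter} again) is immediate.

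I do not anticipate a substantive obstacle; the only point requiring some care is the set-theoretic bookkeeping in the step above, namely that the span defining $R_{\x_T}(\Wfspace{g,\ell})$ is exactly the collection of all $T$-block restrictions of vectors in $\Wgx$ for $\x = (\x_T,\eta)$, $\eta \in \Theta_{T^c}(\x_T)$. This follows directly from the definitions of $\Theta_{T^c}(\x_T)$ and $\mathcal{X}_{(\x_T,\eta)}$, so the whole proposition follows from the elementary observation that a function depending only on $\x_T$ has a gradient supported on $T$, combined with the orthogonal characterization of conservation laws already established.
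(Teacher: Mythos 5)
Your proposal is correct and follows essentially the same route as the paper's proof: both reduce to the orthogonal characterization of \Cref{prop:newcharacter} and \Cref{coro:criteria}, use that $\nabla h$ has zero $T^c$-block to restrict the orthogonality to the $T$-block of the Jacobian, and then exploit the fact that $\nabla_{\x_T} h$ is independent of $\x_{T^c}$ to aggregate the condition over all $\eta \in \Theta_{T^c}(\x_T)$, yielding orthogonality to $R_{\x_T}(\Wfspace{g,\ell})$. The set-theoretic bookkeeping you flag is exactly the step the paper handles by observing that $(\x_T,\eta) \in \Theta$ for every $\eta \in \Theta_{T^c}(\x_T)$, so no gap remains.
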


For concrete examples below, a technical challenge that arises when studying conservation laws that depend solely on $\theta_T$, and in comparing them with those of ``internal'' shallow networks involving only $\theta_T$, is the analysis of the set $\mathcal{X}_{\x}$ of input $x$ \emph{of the global} network $g_{\x}$ around which it is smooth enough \RG{(cf the definition of $\mathcal{X}_\x$ in \Cref{prop:newcharacter}), rather than the set of inputs of the considered ``internal'' shallow network}. Overall, the important property for our purposes is the density of this set, proved in \Cref{app:density}.

\begin{restatable}{lemma}{density} \label{lemma:density}
Denote \RG{$\Theta = 
{\Theta_q \times \ldots \times \Theta_1}
%\Theta_1 \times \ldots \Theta_q
$ (or   $\Theta = 
{\Theta_{q+1} \times \Theta_q \times \ldots \times \Theta_1}
%\Theta_1 \times \ldots \Theta_q \times \Theta_{q+1}
$ with $\Theta_{q+1} = \R^{n \times m}$ when there is a last softmax layer)} where
for each layer $1 \leq l \leq q$, $\Theta_l$ is the set of parameters $\theta_l$ such that 
\begin{enumerate}
     \item  \label{as:open} 
     $g^l_{\theta_l}$ is an open map\footnote{{\em i.e.}, it sends an open set to an open set};
    \item \label{as:rownonnull} %for all $l \leq q$
    all the rows of the matrix $V^l$ (resp. $U_l'$) from \eqref{eq:resblocresnet} (resp. \eqref{eq:mlpbloc}) are nonzero in the convolutive ResNet case (resp. in the Transformer case).
\end{enumerate}

For every $\x \in \Theta$ we have $\overline{\Xspace} = \R^m$.
\end{restatable}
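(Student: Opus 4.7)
The plan is to prove by induction on the depth that the set of inputs at which the full composition is $\mathcal{C}^2$ in a neighborhood is open and dense in $\R^m$. Set $G_l := g^l_{\theta_l} \circ \cdots \circ g^1_{\theta_1}$, $G_0 := \id$, and let $U_l \subseteq \R^m$ be the set of $x$ such that $G_l$ is $\mathcal{C}^2$ on a neighborhood of $x$. I would show that each $U_l$ is open and dense, starting from $U_0 = \R^m$ as trivial base case. Since the terminal softmax layer (when present) is globally smooth, $\Xspace \supseteq U_q$, yielding $\overline{\Xspace} = \R^m$.

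Two preliminary facts would be established first. Each block $g^l_{\theta_l}$ is \emph{globally continuous} on $\R^m$ (ReLU, softmax, and attention are all continuous), and by condition \ref{as:open} each block is also an open map; since compositions of continuous open maps are continuous and open, $G_l$ is a continuous open map. Consequently, for any open $V \subseteq U_l$ (which is open in $\R^m$ as $U_l$ itself will be shown to be open), $G_l(V)$ is open in $\R^m$.

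The inductive step splits by block type. If block $l$ is an attention layer or the terminal softmax, it is $\mathcal{C}^\infty$ everywhere, so $U_l = U_{l-1}$ and the induction passes trivially. If block $l$ is a ReLU residual block (convolutive or MLP), smoothness of $g^l_{\theta_l}$ at $z$ fails only when some entry of $V^l z$ (convolutive case) or $U_l' X^\top$ (MLP case) vanishes; by condition \ref{as:rownonnull} the corresponding rows are nonzero, so each such equation cuts out a proper affine hyperplane in $\R^m$. Let $B_l$ denote the finite union of these hyperplanes, which is closed and nowhere dense. Then $U_l = U_{l-1} \setminus G_{l-1}^{-1}(B_l)$, which is open in $\R^m$ since $B_l$ is closed and $G_{l-1}$ is continuous on the open set $U_{l-1}$.

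Density is the crux. Given any nonempty open $W \subseteq \R^m$, inductive density makes $W \cap U_{l-1}$ a nonempty open set, and the openness preliminary renders $G_{l-1}(W \cap U_{l-1})$ a nonempty open subset of $\R^m$; since $B_l$ is closed and nowhere dense, $G_{l-1}(W \cap U_{l-1}) \setminus B_l$ is nonempty, and any preimage in $W \cap U_{l-1}$ of one of its points lies in $U_l \cap W$. The main obstacle — and the sole reason the lemma needs condition \ref{as:open} — is ruling out the pathological scenario in which $G_{l-1}$ collapses an open set into a hyperplane; the openness assumption eliminates this possibility block by block, and without it the density step would break down.
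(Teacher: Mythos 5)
Your proof is correct and rests on the same two ingredients as the paper's: the non-smoothness locus is contained in a finite union of sets of the form $\{x : \langle V_l[j,:], G^{l-1}(x)\rangle = 0\}$, each closed and with empty interior because the composition of open maps $G^{l-1}$ cannot send a nonempty open set into a proper hyperplane (condition 2 guaranteeing the hyperplanes are proper). The paper packages this as a proof by contradiction, covering a ball by finitely many closed nowhere-dense sets and invoking Baire, whereas you run a direct layer-by-layer induction maintaining ``open and dense''; the one point to tighten is that $\mathcal{X}_\theta$ is defined via smoothness of $\theta' \mapsto g(\theta', x)$ in the \emph{parameters} rather than in $x$, but the set you actually construct (all pre-activations nonzero along the forward pass) is contained in $\mathcal{X}_\theta$, so the conclusion stands.
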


\begin{remark}
Obviously \Cref{as:rownonnull} only excludes a lower-dimensional set of parameters.
We discuss in \Cref{app:generic} why \Cref{as:open} is also a generic condition on the parameters for the example of a residual block associated with a $2$-layer ReLU network.
\end{remark}

\subsection{Block laws for natural residual blocks} \label{sec:blockres}
Consider $\x_T \coloneqq \x_l$ where $l \in \{1, \cdots, q \}$.
 The following theorem (See \Cref{app:blocksharing} for a proof) demonstrates that the \RG{conservation laws} %conserved functions 
 of the global network $g$ that depend exclusively on $\x_l$ are precisely those of the shallow network ${g}_{\x_l}^l$ of the $l$-th residual block.
 
\begin{restatable}{theorem}{thmblocksharing} \label{thm:blocksharing}
{With $\Theta$ as in \Cref{lemma:density},} consider the $l$-th residual block of \Cref{ex:convolutiveresnet} (resp. \Cref{ex:transformer}), and denote $\x_T \coloneqq \x_{l}$ and $\x_{T^c}$ the parameters of all other residual blocks.
A function 
$H: \x = (\x_T, \x_{T^c}) \in \Theta \mapsto h(\x_T)$
that only depends on $\x_T$ is a conservation law of $g$ \RG{with respect to a loss $\ell$ such that $\LossSpace = \R^n$} if and only if $h$ is a conservation law of the shallow residual network
{$g^l(\x_l,x) \coloneqq g^l_{\x_l}(x)$}
%${g}_{\x_l}^l$ 
\RG{with respect to the Euclidean loss}. \SM{The same result holds for $\x_T \coloneqq \x_{q+1}$ when considering a last block \eqref{eq:functionforcross}.}
\end{restatable}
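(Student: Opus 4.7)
The plan is to apply \Cref{prop:caractprojloi} twice — once to the deep network $g$ with respect to the loss $\ell$ (for which $\LossSpace = \R^n$), and once to the shallow block $g^l$ with respect to the Euclidean loss (for which $\LossSpace = \R^m$, since each intermediate residual block maps $\R^m \to \R^m$). This reduces the stated equivalence of conservation laws to proving the equality of the two projected spaces
\[
R_{\theta_l}(\mathbb{W}^{g,\ell}) \;=\; R_{\theta_l}(\mathbb{W}^{g^l,\mathrm{Euc}}).
\]
The ``$\subseteq$'' inclusion is a direct chain-rule computation, while ``$\supseteq$'' is where the argument has to do work.

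For ``$\subseteq$'', I would write $g_\theta = g^{\mathrm{after}}_\eta \circ g^l_{\theta_l} \circ g^{\mathrm{before}}_\eta$, where $g^{\mathrm{before}}_\eta$ and $g^{\mathrm{after}}_\eta$ collect the blocks before and after block $l$ (absorbing the final softmax into $g^{\mathrm{after}}_\eta$ when present). The chain rule gives $\partial_{\theta_l} g(\theta, x)^\top w = \partial_{\theta_l} g^l_{\theta_l}(x_l)^\top [\partial g^{\mathrm{after}}_\eta(y)]^\top w$ with $x_l = g^{\mathrm{before}}_\eta(x)$ and $y = g^l_{\theta_l}(x_l)$. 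This exhibits each generator of $R_{\theta_l}(\mathbb{W}^{g,\ell})$ as a generator of $R_{\theta_l}(\mathbb{W}^{g^l,\mathrm{Euc}})$, using that $x_l \in \mathcal{X}_{\theta_l}^{g^l}$ whenever $x \in \mathcal{X}_{(\theta_l,\eta)}^g$.

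For ``$\supseteq$'', I would fix an arbitrary generator $\partial_{\theta_l} g^l_{\theta_l}(x^*)^\top w^*$ with $x^* \in \mathcal{X}_{\theta_l}^{g^l}$ and $w^* \in \R^m$, and engineer $\eta \in \Theta_{T^c}(\theta_l)$ so that every block $j \neq l$ (other than the final softmax) collapses to the identity: take $U^j = 0$ in the convolutive ResNet, $O_j = 0$ for an attention block, or $U_j = 0$ for an MLP block, while choosing the remaining inner weights to satisfy the nonzero-rows and open-map conditions in \Cref{lemma:density} (the identity map is open). Then $g^{\mathrm{before}}_\eta = \mathrm{id}$ and the intermediate blocks in $g^{\mathrm{after}}_\eta$ also collapse to the identity, so $x_l = x$ and the chain-rule factor reduces either to $I$ (no final softmax, in which case $n = m$) or to $[\partial g^{q+1}_{\theta_{q+1}}(y)]^\top$ (classification case). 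In the first case, setting $x = x^*$ and $w = w^*$ directly produces the desired generator. In the second case I would pick a rank-one $\theta_{q+1} = e_j (w^*)^\top$ and $w = e_j$: a direct computation using the softmax Jacobian $\mathrm{diag}(s) - s s^\top$ at $y^* = g^l_{\theta_l}(x^*)$ yields $[\partial g^{q+1}_{\theta_{q+1}}(y^*)]^\top w = s_j(1-s_j)\, w^*$, a nonzero multiple of $w^*$, which suffices for the linear span.

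The main obstacle will be the technical bookkeeping around admissibility of $\eta$ and $x^*$: one must verify that the identity-inducing choice of $\eta$ genuinely lies in $\Theta_{T^c}(\theta_l)$ and that $\mathcal{X}_{(\theta_l,\eta)}^g$ either contains $x^*$ or is sufficiently dense around it to invoke continuity of the generators. The first point is handled by checking the block-by-block conditions of \Cref{lemma:density} for the identity-inducing parameters; the second follows from density of $\mathcal{X}_\theta$ (\Cref{lemma:density}) together with the openness of $\mathcal{X}_{\theta_l}^{g^l}$ for the piecewise-smooth activations considered, which guarantees that smoothness at $x^*$ is preserved under small perturbations of $\eta$. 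Finally, the case $\theta_T = \theta_{q+1}$ proceeds analogously: collapsing all residual blocks to the identity reduces $g$ to the softmax block itself, and the equality of the two projected spaces is immediate since $\LossSpace = \R^n$ for both the Euclidean loss on $g^{q+1}$ and the loss $\ell$ on the full network.
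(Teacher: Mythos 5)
Your overall reduction via \Cref{prop:caractprojloi}, your ``$\supseteq$'' argument (collapsing every block $j\neq l$ to the identity and then invoking density of $\mathcal{X}_\theta$ from \Cref{lemma:density} together with closedness of finite-dimensional spans), and your rank-one choice $\theta_{q+1}=e_j (w^*)^\top$ to handle the softmax Jacobian all coincide with the paper's proof of the ``only if'' implication; the identity-inducing parameters you describe are exactly the paper's \Cref{lemma:identity}.

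The gap is in your ``$\subseteq$'' inclusion, which you use for the ``if'' direction. The factorization $\partial_{\theta_l} g(\theta,x)^\top w = \partial_{\theta_l} g^l_{\theta_l}(x_l)^\top\,[\partial g^{\mathrm{after}}_\eta(y)]^\top w$ presupposes that $\theta_l'\mapsto g^l_{\theta_l'}(x_l)$ is differentiable at $\theta_l$ and that $g^{\mathrm{after}}_\eta$ is differentiable at $y$, and your claim that $x_l\in\mathcal{X}^{g^l}_{\theta_l}$ whenever $x\in\mathcal{X}^{g}_{(\theta_l,\eta)}$ is asserted without proof. For a general $\eta$ this is not automatic: membership in $\mathcal{X}^{g}_{(\theta_l,\eta)}$ only guarantees that the \emph{composite} is $\mathcal{C}^2$ in $\theta$, and a kink of the inner ReLU block (a vanishing pre-activation at $x_l$) could in principle be annihilated by the Jacobian of the downstream blocks --- the softmax Jacobian, for instance, has a nontrivial kernel --- so smoothness of a composition does not formally imply smoothness of its inner factor. (The paper only ever proves the inclusion $\mathcal{X}_\theta\subseteq\mathcal{X}_{\tilde\theta_2}$ in the special case where the surrounding blocks are identities, where it is trivial.) This is precisely the subtlety the paper sidesteps: for the ``if'' direction it proves no inclusion of spaces at all, but instead uses the completeness results (\Cref{thm:convolutivelaws}, \Cref{coro:clattention}) to write every conservation law of the shallow block as a function of invariance-induced ones and lifts those to $g$ via \Cref{prop:inclusionCL}. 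To repair your route you would need to prove the $\mathcal{X}$-set inclusion and the validity of the chain rule for all admissible $\eta$ (e.g.\ by showing the downstream Jacobians cannot cancel the jump, using the open-map hypothesis), or else fall back on the paper's invariance argument for that direction.
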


Thus by using \Cref{prop:skipconnCL}, the conservation laws of $g$ that only depends on $\x_l$ are exactly the ones of \eqref{eq:convolutiveNN_multicanal} (resp. \eqref{eq:attentionlayer}), which are described in \Cref{thm:convolutivelaws} (resp. \Cref{coro:clattention}) for a residual block defined with a $2$-layer ReLU networks (resp. with an attention layer).

\subsection{Case of blocks overlapping a residual connection}
This section focuses exclusively on the case of ResNet (with or without) convolutions as defined in \Cref{ex:convolutiveresnet}. In the previous section \Cref{sec:blockres}, we examined the conservation laws of the global network $g$ that depend only on the parameters $\theta_l$ of the $l$-th block, and we just show that they exactly correspond to the ones of the shallow network $g_{\x_l}^l$ corresponding to the $l$-th block. %as in \Cref{fig:1}. 
However, it is also possible to recast the global network as a composition of elementary networks that maintain parameter separation while involving a subset of parameters located before \emph{and} after a residual connection as described in \Cref{fig:2} in \Cref{app:figures}.

Specifically, let us consider $l \in \{1, \cdots, q -1 \}$ and $\x_T =  (v_{j, i}^{l+1}, u_{k, j'}^{l})$. 
In that case, $\x_T$ corresponds to two consecutive parameter blocks that overlap a skip connection (as described in \Cref{fig:2} in \Cref{app:figures}). \RG{Denoting $x_1$ (resp. $y_1$) the input of the intermediate layer of the $l$-th (resp. $l+1$-th) block, $x_2$ (resp. $y_2$) the copy of the input of the corresponding block that is transferred via the skip connection, } $g$ can be written as a composition of $g_1$, $g_2$ and $g_3$ with
\[
\RG{\smallvec{y_1\\ y_2} = } g_2\left(
%\x_2
{\tilde{\x}_2}
, \smallvec{
    x_1 \\ x_2
}\right) \coloneqq \smallvec{
    V^{l+1} \\ I_d 
}  \smallvec{
     U^{l} & I_d 
} \smallvec{
    \sigma \\
    id
} \smallvec{
    x_1 \\ x_2
},
\]
\[
\RG{\smallvec{x_1\\x_2} = } g_3(
%\x_3
{\tilde{\x}_3}
, x) \coloneqq 
\left(\begin{smallmatrix}
    V^{l} \\ I_d 
\end{smallmatrix} \right) 
\RG{(g^{l-1}_{\theta_{l-1}} \circ \ldots \circ g^1_{\theta_1})(x),}
\]
\[
\text{and }
g_1\left(
%\x_1
{\tilde{\x}_1}, \smallvec{
   % x_1 \\ x_2
   \RG{y_1}\\ \RG{y_2}
}\right) = 
\RG{g^q_{\theta_q} \circ \ldots \circ g^{l+2}_{\theta_{l+2}}
\left(
\smallvec{
     U^{l+1} & I_d 
} \smallvec{
    \sigma \\
    id
} \smallvec{
    y_1 \\ y_2
}
\right)}
\]
{where  $\tilde{\x}_1,\tilde{\x}_3$ gather all parameters appearing in $g_1$ and $g_3$.}
The following proposition (See \Cref{app:nocl} for a proof) shows that there exist no conserved functions of the global network that depend exclusively on %$\x_2$.
{$\tilde{\x}_2 = \x_T = (V^{l+1},U^l)$.}

\begin{restatable}{theorem}{thmnocl}\label{thm:nocl}
%We assume that 
{Consider a layer index $1 \leq l  \leq q-1$ and $\Theta$ defined as in \Cref{lemma:density} with the exception  that for each $\theta_{l+1} \in \Theta_{l+1}$, we further require that the rows of $V^{l+1}$ are pairwise non-colinear.}
{If} $n_v = p$ {then any  conservation law of $g$ with respect to the Euclidean loss} %. There are no conservation laws of $g$ 
that only depends on $\tilde{\x}_2$ {is a constant function}.
\end{restatable}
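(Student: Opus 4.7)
The plan is to apply \Cref{prop:caractprojloi}, which characterizes conservation laws depending only on $\tilde{\theta}_2$ by the condition $\nabla_{\tilde{\theta}_2} h(\theta) \perp R_{\tilde{\theta}_2}(\Wfspace{g,\ell})$ at every $\theta \in \Theta$. I would show that in fact $R_{\tilde{\theta}_2}(\Wfspace{g,\ell}) = \R^{\dim \tilde{\theta}_2}$ everywhere, forcing $\nabla h \equiv 0$ and hence $h$ constant. Using the decomposition $g = g_1 \circ g_2 \circ g_3$ and the chain rule,
\[
\partial_{\tilde{\theta}_2} g(\theta, x)^\top w = \partial_{\tilde{\theta}_2} g_2(\tilde{\theta}_2, (x_1, x_2))^\top \,\partial_{(y_1,y_2)} g_1(\tilde{\theta}_1, (y_1, y_2))^\top w,
\]
with $(x_1, x_2) = g_3(\tilde{\theta}_3, x)$ and $(y_1, y_2) = g_2(\tilde{\theta}_2, (x_1, x_2))$, so the goal reduces to analyzing the span of this expression as $(x, w, \tilde{\theta}_1, \tilde{\theta}_3)$ vary.

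Next, I would argue that both the ``forward pair'' $(x_1, x_2)$ and the ``backward pair'' $w' := \partial_{(y_1,y_2)} g_1(\tilde{\theta}_1,\cdot)^\top w$ can be driven to span $\R^{c_1 p} \times \R^{c_0 p}$. For the forward part, \Cref{lemma:density} lets $x$ range through a dense subset of $\R^m$, the openness of each $g^k_{\theta_k}$ for $k < l$ (guaranteed by the definition of $\Theta$) ensures $x_2$ sweeps an open set of $\R^{c_0 p}$, and the hypothesis $n_v = p$ then lets $V^l$ be any circulant block matrix whose filters cover $\R^p$, so that $x_1 = V^l x_2$ covers $\R^{c_1 p}$. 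A symmetric argument using $U^{l+1}$ and the openness of the blocks after $l+1$ yields the same conclusion for $w'$.

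Finally, with $g_2$ given explicitly by $y_1 = V^{l+1}(U^l \sigma(x_1) + x_2)$ and $y_2 = U^l \sigma(x_1) + x_2$, writing $w' = (w_1', w_2')$ and $z := U^l \sigma(x_1) + x_2$, the $V^{l+1}$-block of $\partial_{\tilde{\theta}_2} g_2^\top w'$ is a cyclic cross-correlation of $w_1'$ against $z$, while the $U^l$-block is a cross-correlation of $\sigma(x_1)$ against $(V^{l+1})^\top w_1' + w_2'$. Choosing $w_1' = 0$ and varying $(w_2', \sigma(x_1))$ spans the $U^l$-direction, whereas choosing $w_2' = -(V^{l+1})^\top w_1'$ annihilates the $U^l$-block and leaves a $V^{l+1}$-block spanned by varying $(w_1', z)$. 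The main obstacle is the coupling between these two gradients through the internal skip of block $l+1$: a variation of $w_1'$ shows up both in the $V^{l+1}$ gradient and, via $(V^{l+1})^\top w_1'$, in the $U^l$ gradient. The hypothesis $n_v = p$ is precisely what makes this decoupling work — each filter of $V^{l+1}$ occupies a full $p$-dimensional parameter space matching the signal dimension, so the cross-correlation against a generic $z$ is surjective onto that filter space; for $n_v < p$ a nontrivial orthogonal complement would persist and genuine conservation laws could remain.
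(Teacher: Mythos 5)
Your high-level strategy (reduce via \Cref{prop:caractprojloi} to showing that the relevant span $R_{\tilde{\x}_2}(\Wfspace{g,\ell})$ is all of $\R^{(d_1+d_2)n}$, then decouple the $V^{l+1}$- and $U^l$-directions) is the same as the paper's, but the execution has a genuine gap: you treat the backward pair $w'=(w_1',w_2')$ as two independently choosable vectors, and it is not. Since $g_1(\tilde{\x}_1,(y_1,y_2))$ factors through $U^{l+1}\sigma(y_1)+y_2$, one has $w_1'=\diag(\sigma'(y_1))\,(U^{l+1})^\top w_2'$, so $w_1'$ is entirely determined by $w_2'$, by $U^{l+1}$ and by the activation pattern at $y_1$. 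Your first choice ($w_1'=0$ with $w_2'$ free) is realizable only by a special choice such as $U^{l+1}=0$ — this is exactly the paper's first step — but your second choice, $w_2'=-(V^{l+1})^\top w_1'$, would require $w_2'$ to be a fixed point of $-(V^{l+1})^\top\diag(\sigma'(y_1))(U^{l+1})^\top$, which is not achievable for a spanning family of $w_2'$. The paper instead annihilates the $U^l$-block from the \emph{input} side: it uses $n_v=p$ (via $\bigcap\range(Q_i)\neq\{0\}$) to build a $V^l$ whose rows are all proportional to one vector $a$, so that $\sigma(V^l x)=0$ on a whole half-space of inputs, making $g$ locally independent of $U^l$ there. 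Your attribution of $n_v=p$ to a surjectivity property of cross-correlations for $V^{l+1}$ is therefore not where the hypothesis actually enters.

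A second, equally serious omission: even after the $U^l$-block is killed, the $V^{l+1}$-gradient is not $w_1'z^\top$ with $w_1'$ free but $\diag(\mathbf{1}_{V^{l+1}x>0})\,(U^{l+1})^\top w\,x^\top$, i.e.\ it carries the input-dependent activation mask of block $l+1$. Showing that these masked rank-one matrices span all of $\R^{d_1\times n}$ is the hardest part of the paper's proof: it requires taking differences of gradients at inputs on either side of each hyperplane $\{v_k^\top x=0\}$ to isolate $E_{k,k}(U^{l+1})^\top w\,x^\top$, and this is precisely where the hypothesis that the rows of $V^{l+1}$ are pairwise non-colinear is used — a hypothesis that appears in the theorem statement but nowhere in your argument. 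As written, the proposal asserts the conclusion of both difficult steps ("spans the $U^l$-direction", "a $V^{l+1}$-block spanned by varying $(w_1',z)$") without the mechanisms that make them true.
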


\subsection{Numerical confirmation}
In the general case of a residual network composed of $q$ blocks, there could potentially exist conservation laws that depend on a larger subset of parameters than those previously analyzed, which helped us demonstrate that we recover exactly the same conservation laws as those of elementary blocks when considering the global network. However, by {numerically}  computing the dimension of $\linspan \{ \Loss_Z(\x) : Z \} \subseteq \mathcal{W}^{g}_{\x}$  with a sufficiently large batch size to adequately explore the space, we find that there {is no} %are no
additional conservation %laws
{law} when $m > 1$ for a ResNet with $q=2$ residual blocks (see code in \href{https://github.com/sibyllema/Conservation-laws-for-ResNets-and-Transformers}{our GitHub repository}). 

When $m = 1$, numerical results suggest that {there are additional conservation laws}. It might be that this specific case enables as in \citep[Theorem 4.1]{marcottekeep} {to shed the light on} new invariances that give rise to new conservation laws. {This is confirmed by the following example:}
{%In the following example,
we exhibit a domain $\Omega$ where there are more conservation laws than the ``block'' ones.
\begin{example}
Consider a ReLU neural network with two-residual blocks, $g((u, v, s, t), x) = x + u \sigma (v x) + s  \sigma(t (x + u\sigma( v x))$ with $(u, v, s, t) \in \Omega \subseteq \mathbb{R}^4$ and $x \in \mathbb{R}$. While there are two  “block” conservation laws: $u^2 - v^2$ and $s^2 - t^2$,
%we exhibit a domain $\Omega$ where 
there are three conservation laws on the set
%. For this, consider 
$\Omega$ %the set 
of all parameters such that $\text{sgn}(t) = \text{sgn}(v) = \text{sgn}(u)$. Indeed, 
for every $x$,
\begin{enumerate}
    \item either $vx, tx \leq 0$, hence  $g(\theta, x) = x$ and $\nabla_{\theta} g (\cdot) = 0$;
    \item or $vx, tx \geq 0$, hence $g(\theta, x) = x + uv x + stx +stuv x$ as $vx, tx, tu \geq 0$, and thus $ \tfrac{1}{x} \nabla_{\theta}  g (\cdot)=: \chi_1 (\cdot)$ is a vector field that does not depend on $x$.
\end{enumerate}
As the space $\Wfspace{g,\RG{\ell}}= \mathbb{R} \chi_1$ is spanned by a single non-null vector field, its Lie algebra is itself, and by \Cref{coro:number}, there are exactly $4 -1 = 3$ conservation laws as claimed.
\end{example}
}

\section{Conservation laws for \textit{discrete} dynamics} \label{sec:discrete}
In practice optimization is performed with a discrete dynamics associated to stochastic gradient descent. To what extent do conservation laws still apply in this context? This is the object of this section.

\subsection{(Stochastic) gradient descent as a training dynamic}

We consider the ERM problem~\eqref{eq:erm}. Instead of using gradient descent~\eqref{eq:gd}, we consider the stochastic gradient descent (SGD) method on mini-batches:
\[
    \theta_{k+1} = \theta_k - \tau_k \nabla L_{Z_k}(\theta_k),
\]
where the sequence of mini-batches \((Z_k)_k\) is drawn i.i.d. from the data distribution. The following proposition (see \Cref{app:errorbound} for a proof) shows that a conservation law is approximately preserved by SGD.  

\begin{restatable}{proposition}{errorbound} \label{prop:errorbound}
Let \( h(\theta) \) be a conservation law of the gradient flow with a bounded Hessian
$\forall \theta, \quad \|\partial^2 h(\theta)\| \le C_h$.
Suppose that the gradients remain bounded in expectation throughout the algorithm:
\begin{equation}\label{eq:bound-sgd-grad}
    \mathbb{E}_{Z_k,\theta_k} \bigl\|\nabla L_{Z_k}(\theta_k)\bigr\|_2^2 \;\le\; C_L.
\end{equation}
\eq{ \label{eq:boundsgd}
    \text{Then, we have} \quad
    \mathbb{E} \bigl| h(\theta_k) - h(\theta_0) \bigr| \leq  \frac{C_h C_L}{2} \sum_{i=0}^{k-1} \tau_i^2.
}
\end{restatable}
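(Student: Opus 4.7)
The plan is to bound the change in $h$ over a single SGD step by a second-order Taylor expansion, exploiting the crucial fact that the first-order term vanishes exactly because $h$ is a conservation law, and then sum these one-step bounds.

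\textbf{Step 1: Single-step Taylor expansion.} Fix an iteration $k$ and apply Taylor's theorem to $h$ between $\theta_k$ and $\theta_{k+1} = \theta_k - \tau_k \nabla L_{Z_k}(\theta_k)$. Using the uniform bound $\|\partial^2 h\| \le C_h$, I get
\begin{equation*}
\bigl|\,h(\theta_{k+1}) - h(\theta_k) - \langle \nabla h(\theta_k), \theta_{k+1}-\theta_k\rangle\,\bigr|
\;\le\; \frac{C_h}{2}\,\|\theta_{k+1}-\theta_k\|^2 \;=\; \frac{C_h}{2}\,\tau_k^2\,\|\nabla L_{Z_k}(\theta_k)\|^2 .
\end{equation*}

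\textbf{Step 2: The first-order term vanishes.} Because $h$ is a conservation law of the gradient flow \eqref{gradientflowwithout}, the orthogonality characterization of \Cref{prop:newcharacter} gives $\nabla h(\theta) \perp \mathcal{W}^{g,\ell}_\theta$, and by definition $\mathcal{W}^{g,\ell}_\theta$ contains $\nabla L_Z(\theta)$ for \emph{every} dataset $Z$. Applied at $\theta=\theta_k$ with the (random) mini-batch $Z=Z_k$, this yields $\langle \nabla h(\theta_k),\nabla L_{Z_k}(\theta_k)\rangle=0$ almost surely, so $\langle \nabla h(\theta_k), \theta_{k+1}-\theta_k\rangle = 0$. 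Combined with Step~1,
\begin{equation*}
\bigl|\,h(\theta_{k+1}) - h(\theta_k)\,\bigr| \;\le\; \frac{C_h}{2}\,\tau_k^2\,\|\nabla L_{Z_k}(\theta_k)\|^2 .
\end{equation*}

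\textbf{Step 3: Telescoping and expectation.} Writing $h(\theta_k) - h(\theta_0) = \sum_{i=0}^{k-1}(h(\theta_{i+1}) - h(\theta_i))$ and using the triangle inequality,
\begin{equation*}
|h(\theta_k) - h(\theta_0)| \;\le\; \frac{C_h}{2}\sum_{i=0}^{k-1} \tau_i^2 \,\|\nabla L_{Z_i}(\theta_i)\|^2 .
\end{equation*}
Taking expectations over the mini-batches $(Z_i)_i$ (and the resulting iterates $\theta_i$) and applying assumption \eqref{eq:bound-sgd-grad} termwise gives the desired bound $\mathbb{E}|h(\theta_k)-h(\theta_0)| \le \tfrac{C_h C_L}{2}\sum_{i=0}^{k-1}\tau_i^2$.

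\textbf{Main difficulty.} There is essentially no hard step: the whole argument is a one-line Taylor estimate, and the key observation --- which is what makes the bound scale as $O(\tau^2)$ rather than $O(\tau)$ --- is the \emph{exact} (not just in-expectation) cancellation of the first-order term, a direct consequence of the pointwise orthogonality $\nabla h(\theta_k) \perp \nabla L_{Z_k}(\theta_k)$ valid for \emph{any} realization of the mini-batch. The only slightly subtle point to state cleanly is that \Cref{prop:newcharacter} is applied with $Z=Z_k$ the concrete random mini-batch rather than in an averaged sense, which is why no independence assumption between $\nabla h$ and $Z_k$ is needed.
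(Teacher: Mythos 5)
Your proposal is correct and follows essentially the same route as the paper's proof: a second-order Taylor expansion in which the first-order term vanishes exactly by the pointwise orthogonality $\langle \nabla h(\theta_k), \nabla L_{Z_k}(\theta_k)\rangle = 0$ from \Cref{prop:newcharacter}, followed by bounding the remainder via $C_h$ and $C_L$ and summing over iterations. The only cosmetic difference is that the paper writes the remainder in Lagrange form with an intermediate point $\xi$, while you bound it directly; the content is identical.
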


For a constant step size \( \tau_k = \tau \), this proposition shows that the conservation error grows as  
$|h(\theta_{k}) - h(\theta_0)| = O(\tau^2 k)$, 
which increases linearly with the number of iterations. If one uses a decaying step size, such as \( \tau_k = \tau_0 / (k+1) \), which ensures convergence of the method, then the error remains bounded:
$|h(\theta_{k}) - h(\theta_0)| = O(\tau_0^2)$.
The requirement that \( h \) has a bounded Hessian holds in the case considered in this paper since the conservation laws we examine are quadratic.
The key assumption for the result to hold is the bound on the gradient magnitude in~\eqref{eq:bound-sgd-grad}. This condition is met if the loss function \( \ell \) and the network \( g_\theta \) are uniformly Lipschitz, though this is not generally true for deep networks. It also holds with an explicit constant \( C_L \) for smooth convex losses
%functions \( F \)
~\cite{bach2024learning}, but this setting is quite restrictive. More generally, such bounds hold for smooth losses % \( F \) 
when the variance of \( \nabla L_{Z_k}(\theta) \) is bounded~\cite{garrigos2023handbook}, because the iterates are bounded in expectations, $\mathbb{E}(\|\theta_k\|^2)<+\infty$, though the constant \( C_L \) may not be explicitly determined. 
In the numerical experiments presented in Section~\ref{sec:numerical-xp}, we empirically evaluate the constants to show that, in practice, they remain relatively small, ensuring approximate conservation.

\subsection{Numerical experiments}
\label{sec:numerical-xp}

In \Cref{fig:resnet}, we train a ResNet-18 on CIFAR-10 \cite{krizhevsky2009learning} while tracking the difference between the squared Frobenius norms of consecutive convolutional layers in the first residual block, considering the conservation law $h(\x_T) \coloneqq \sum_{j = 1}^{c_1} h_j(\x_T)$, where $h_j$ is defined in \eqref{eq:ConsLawConvolutive}, with $\x_T$ representing the parameters of the first residual block. We vary the learning rate between $10^{-3}$ and $5 \times 10^{-3}$, using stochastic gradient descent (SGD) without momentum or weight decay. For each learning rate, we train 10 models for 50 steps with 10 different random seeds, recording both the loss evolution (bottom) and the evolution of the conservation error $\left| (h(\x_k)-h(\x_0))/h(\x_0) \right|$ (top). The dotted lines show the theoretical slopes $C \tau^2$ derived from \eqref{eq:boundsgd}, confirming that the function is approximately conserved and that the slope coefficient maintains proportionality with $\tau^2$. Our code is available at \href{https://github.com/sibyllema/Conservation-laws-for-ResNets-and-Transformers}{our GitHub repository}.

{In another experiment, we train a transformer model on the IMDb sentiment analysis dataset \cite{maas-etal-2011-learning} using SGD optimization. We track the evolution of the Frobenius norm of the conserved matrix identified in \Cref{coro:multihead}, specifically examining the query and key matrices from the first attention head in the first layer.
Consistent with our ResNet training results presented in \Cref{fig:resnet}, we observe that the conservation error scales as $O(\text{step-size}^2)$ throughout training, which confirms our theoretical bound \eqref{eq:boundsgd}. Furthermore, it is worth noting that the numerical behavior is unchanged whether masking is applied or not. See \Cref{app:figures} for the associated figures. Our code is available at \href{https://github.com/sibyllema/Conservation-laws-for-ResNets-and-Transformers}{our GitHub repository}}

\begin{figure}[t]
    \centering
    \includegraphics[width=0.47 \textwidth]{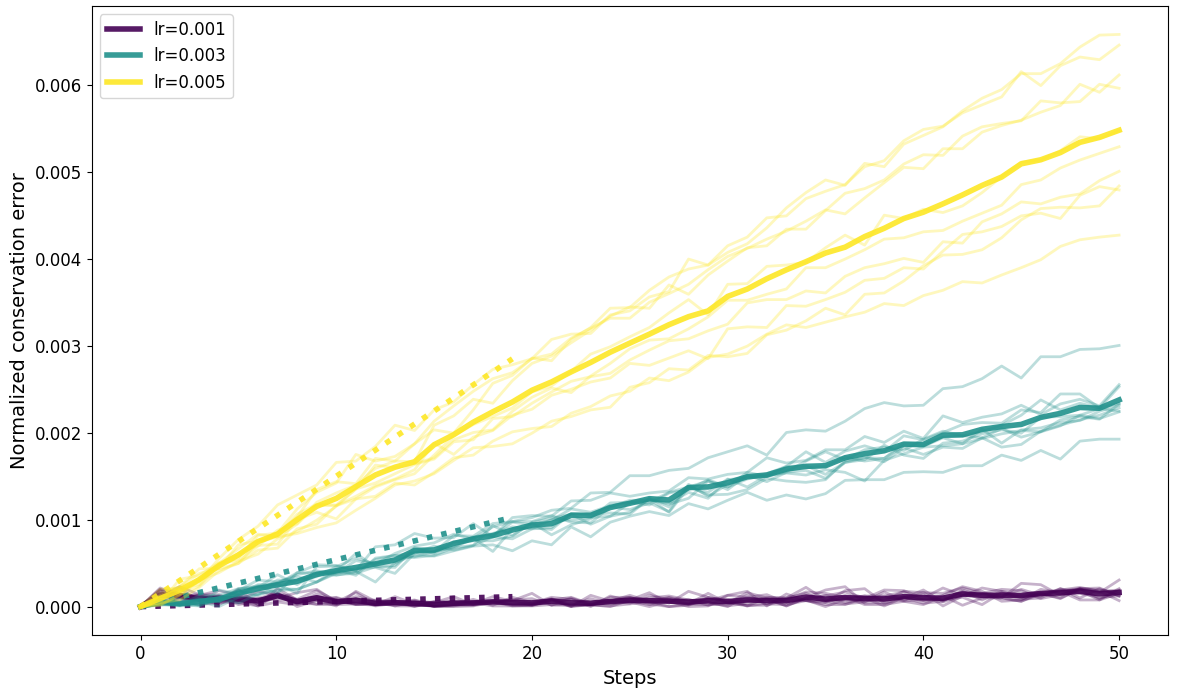}
    \caption{Tracking a conserved function during ResNet-18 training on CIFAR-10.
    }
    \label{fig:resnet}
\end{figure}

\section*{Conclusion}
This paper investigates conservation laws in deep networks (ResNet and Transformer architectures) within gradient flow dynamics and examines their behavior under discrete SGD dynamics. Our analysis does not currently account for transformer normalization layers, and the multi-head attention mechanism is only partially addressed. The integration of these components as well as max-pooling layers presents promising avenues for future research.

\section*{Acknowledgements} 
The work of G. Peyré was supported by the European Research Council (ERC project WOLF) and the French government under management of Agence Nationale de la Recherche as part of the ``Investissements d’avenir'' program, reference ANR-19-P3IA-0001 (PRAIRIE 3IA Institute). The work of R. Gribonval was partially supported by the AllegroAssai ANR project ANR-19-CHIA-0009 and the SHARP ANR Project ANR-23-PEIA-0008 of the PEPR IA, funded in the framework of the France 2030 program.

\section*{Impact Statement}
This paper presents work whose goal is to advance the field of 
Machine Learning. There are many potential societal consequences 
of our work, none which we feel must be specifically highlighted here.

\bibliography{biblio}
\bibliographystyle{icml2025}

%%%%%%%%%%%%%%%%%%%%%%%%%%%%%%%%%%%%%%%%%%%%%%%%%%%%%%%%%%%%%%%%%%%%%%%%%%%%%%%
%%%%%%%%%%%%%%%%%%%%%%%%%%%%%%%%%%%%%%%%%%%%%%%%%%%%%%%%%%%%%%%%%%%%%%%%%%%%%%%
% APPENDIX
%%%%%%%%%%%%%%%%%%%%%%%%%%%%%%%%%%%%%%%%%%%%%%%%%%%%%%%%%%%%%%%%%%%%%%%%%%%%%%%
%%%%%%%%%%%%%%%%%%%%%%%%%%%%%%%%%%%%%%%%%%%%%%%%%%%%%%%%%%%%%%%%%%%%%%%%%%%%%%%
\newpage
\appendix
\onecolumn
\section{Proof of \Cref{thm:structure} {(Structure theorem)}}
\label{appendix:structurethm}

To prove \Cref{thm:structure}, it will be essential to juggle with certain properties of conserved quantities when they depend on $(t, \x)$. Here we recall definitions and properties directly derived from \cite{marcottekeep}, which are necessary to understand the proof of the structure theorem.

\subsection{Definitions of conservation}

We recall definitions and properties from \cite{marcottekeep}.
\begin{definition}[Conservation through a (family of) flow(s)]  \label{def:conserved_through_flow}
Consider an open subset $\ouv \subseteq \R \times \RD $ and a function $F \in \mathcal{C}^1(\ouv,\R^D  )$. %be an infinitely smooth vector field. 
By the Cauchy-Lipschitz theorem, for each initial condition $\texttt{init} \coloneqq (t_{\text{init}}, \x_{\text{init}}) \in \ouv$, there exists a unique maximal solution $t \in \left(t_{\text{init}} -  \eta_{\text{init}}, t_{\text{init}} +  \eta_{\text{init}}\right) \mapsto \x(t,\texttt{init})$ of the ODE $ \dot \x(t) = F(t, \x(t))$ with $\x(t_{\text{init}}) = \x_{\text{init}}$. 
A function $h:  \ouv  \subseteq \R^{D+1}  \to \RR$ is {\em conserved on $\ouv$ through the flow $F$}  if $h(t, \x(t,\texttt{init}))=h(t_{\text{init}}, \x_{\text{init}})$ for each choice of $\texttt{init} \in \ouv$ and every $t \in \left(t_{\text{init}} -  \eta_{\text{init}}, t_{\text{init}} +  \eta_{\text{init}}\right)$.
It is {\em conserved on $\ouv$ through a family of flows} if $h$ is conserved on $\ouv$ through all flows. 
\end{definition}

\begin{restatable}[Smooth functions conserved through a family of flows] {proposition}{proporthogonality} \label{prop:orthogonality} Let $\mathcal{I}$ be any set.
Given $F_i \in \mathcal{C}^1(\ouv, \R^{{D}} )$ for any $i \in \mathcal{I}$, a function $h \in \mathcal{C}^1(\ouv, \mathbb{R})$ is conserved through the family of flows induced by all $F_i$ 
     if and only if $\langle \nabla h (\z), (1, F_i(\z)^\top)^\top\rangle = 0$ for all $\z \in \ouv$ and for all $i \in \mathcal{I}$. Moreover by denoting $\chi_i$ the vector field on $\ouv$ defined by 
 \eq{ \label{eq:chi}
 \chi_i: \z \in \ouv \mapsto \begin{pmatrix}
     1 \\  F_i(\z)
 \end{pmatrix} \in \R^{{D}+1},
 }
and 
 by denoting 
 \eq{ \label{eq:Wetchi}
  \Wfspace{} \coloneqq \linspan_i  \{\chi_i \} \subset \mathcal{C}^{1}\left(\ouv, \R^{{D}+1}\right),
 }
     this exactly means that for all $ \z \in 
 \ouv$,  $\langle \nabla h (\z), \Wfspace{}(\z) \rangle = 0$, where the \emph{trace} of  $\Wfspace{}$ is defined by
 \eq{
 \Wfspace{}(\z) \coloneqq \linspan{\{\chi(\z): \chi \in  \Wfspace{} \}}  = \linspan_i  
 \{\chi_i(\z) \}.
 }
\end{restatable}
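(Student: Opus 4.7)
The plan is to prove both directions by differentiating $t \mapsto h(t,\x(t,\texttt{init}))$ along each maximal solution of $\dot\x = F_i(t,\x)$ and reading off the chain-rule identity
\[
\frac{d}{dt}h(t,\x(t,\texttt{init})) = \partial_t h(t,\x(t,\texttt{init})) + \langle \nabla_\x h(t,\x(t,\texttt{init})), F_i(t,\x(t,\texttt{init})) \rangle = \langle \nabla h(t,\x(t,\texttt{init})), \chi_i(t,\x(t,\texttt{init})) \rangle,
\]
where I have grouped $(\partial_t h, \nabla_\x h)$ into the full gradient $\nabla h \in \R^{D+1}$ and used the definition $\chi_i(\z) = (1, F_i(\z)^\top)^\top$.

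For the ``only if'' direction, fix any $\z_0 = (t_0, \x_0) \in \ouv$ and any $i \in \mathcal{I}$. By Cauchy--Lipschitz there exists a maximal solution of $\dot{\x} = F_i(t,\x)$ through $(t_0,\x_0)$, and the curve $t\mapsto (t, \x(t, (t_0,\x_0)))$ passes through $\z_0$ at $t=t_0$ with tangent $\chi_i(\z_0)$. Conservation of $h$ along this flow forces the derivative above to vanish at $t_0$, giving $\langle \nabla h(\z_0), \chi_i(\z_0)\rangle = 0$. Since $\z_0$ and $i$ were arbitrary, the orthogonality holds on $\ouv$ for every $i \in \mathcal{I}$.

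For the ``if'' direction, assume $\langle \nabla h(\z), \chi_i(\z) \rangle = 0$ for all $\z \in \ouv$ and all $i$. Pick any $i$ and any $\texttt{init} \in \ouv$. Along the maximal solution through $\texttt{init}$, the chain-rule identity above is identically zero, so $t\mapsto h(t,\x(t,\texttt{init}))$ is constant on the maximal interval and equals $h(\texttt{init})$. This is exactly the definition of conservation through the flow induced by $F_i$, and since $i$ was arbitrary, $h$ is conserved through the whole family.

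The final claim about the trace is a direct reformulation: $\langle \nabla h(\z), \chi_i(\z)\rangle = 0$ for every $i \in \mathcal{I}$ is equivalent, by bilinearity, to $\nabla h(\z) \perp \linspan_i \{\chi_i(\z)\} = \Wfspace{}(\z)$. I expect no real obstacle here: the argument is entirely a chain-rule computation, and the only subtlety worth stating explicitly is that the ``only if'' direction uses the existence of \emph{some} integral curve of $F_i$ through every $\z_0 \in \ouv$ to realise $\chi_i(\z_0)$ as the velocity of a curve along which $h$ is known to be constant.
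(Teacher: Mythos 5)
Your proof is correct. The paper itself does not prove this proposition—it is recalled without proof from the cited prior work—but your argument is exactly the standard one that is intended: the chain rule along the augmented curve $t \mapsto (t,\x(t,\texttt{init}))$, whose velocity is $\chi_i$, gives both directions, with the ``only if'' direction using that every point of $\ouv$ lies on some integral curve, and the trace statement following by bilinearity. No gaps.
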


In the context of the dynamic \eqref{gradientflow}, we consider $\z = (t,  \x) \in \R \times \Theta \subseteq \R^{D+1}$ and we consider the flow $F_Z$ defined by 
\eq{ \label{eq:flow_Z}
F_Z(t, \x) = -M(\x)\nabla \Loss_Z (\x)-  \lambda(t) \x.
}

\textit{We keep in that section the matrix $M(\x)$ (that allows to deal with non-euclidean metric as in \cite{marcottekeep}). However, we do not focus in that paper on the non-euclidean setting.}

\begin{definition}[Conservation during the flow \eqref{gradientflow} with a given dataset]
Consider an open subset $\ouv = \ouv_t \times \ouv_\x \subseteq \R \times \Theta$ and a dataset $Z = (x_i, y_i)_i$ such that $\Loss_{Z} \in \mathcal{C}^{2}(\ouv_\x, \R)$. A function $h: \ouv \subseteq \R^{D+1} \to \RR$ is {\em conserved on $\ouv$ during the flow} \eqref{gradientflow} if it is conserved through the flow induced by $F_Z$ defined in \eqref{eq:flow_Z}.
\end{definition}

To identify conserved functions that do not depend on a specific dataset, we focus on a more precise class of conserved functions: those that remain conserved during all flows defined by the ordinary differential equation (ODE) \eqref{gradientflow}. This leads us to the following definition. The goal is to derive universal laws that hold true regardless of any particular dataset.
\begin{definition}[Conservation during the flow \eqref{gradientflow} with ``any'' dataset]
  Consider an open subset $\ouv = \ouv_t \times \ouv_\x \subset \R \times  \Theta $ and a loss $\ell(z, y)$ such that $\ell(\cdot, y)$ is $\mathcal{C}^2$-differentiable for all $y \in \mathcal{Y}$. %Denote $g(\x, x) \coloneqq g_\x(x)$. 
  A function $h: \ouv \subseteq \R^{D+1} \to \RR$ is {\em conserved on $\ouv$ for any data set} if, for each data set $(X, Y)$ {\em such that} $g(\cdot, x_i) \in \mathcal{C}^{2}(\ouv_\x, \R^n)$ for each $i$, the function $h$ is conserved on $\ouv$ during the flow \eqref{gradientflow}. This leads us to introduce the family of vector fields:
\eq{ \label{eq:W_ell}
 {\Wfspace{}_\Omega}^{g} := \left\{ \chi(\cdot): \exists Z, \forall i\ g(\cdot, x_i)
%\mathcal{E}_{X,Y}^\ell(\cdot) 
\in \mathcal{C}^{2}(\Omega_{\x},\R^n)
 ,\ \chi(\cdot) = (1,  F_Z( \cdot)^\top )^\top \right\}
  \subseteq \mathcal{C}^{1}(\ouv, \R^{D+1})
 }
so that being conserved on $\Omega$ for any dataset is the same as being conserved on $\Omega$ through ${\Wfspace{}_\Omega}^{g}$.
\end{definition}

The definitions provided above are local and contingent upon the selection of an open set of parameters \( \ouv_\x \subset \Theta \). However, our primary interest lies in functions defined across the entire parameter space \( \Theta \); thus, we present the following definition.
\begin{definition}[\textbf{Conservation law} for a neural network] \label{def:locally_conserved_any}
    A function $h: \R \times \Theta \mapsto \R$ is {\em a conservation law} if for each open subset $\ouv \subseteq \R \times \Theta$, $h$ is conserved on $\ouv$ for any data set.   
\end{definition}
Therefore, by using the ``orthogonality'' characterization of a conserved function  \Cref{prop:orthogonality} and \Cref{def:locally_conserved_any}, the object of interest to study locally conserved functions is the union of the traces
\begin{equation} \label{eq:def_W_ell}
  \mathcal{W}_\z^g \coloneqq \bigcup \Big\{
    {\Wfspace{}_{\ouv}}^{g}(\z) \;:\; 
    \ouv \subseteq  \R \times \Theta \text{ with } \ouv 
    \text{ a neighborhood of } \z \Big\}.
\end{equation}
\begin{corollary} \label{coro:perp}
    A function $h: \R \times \Theta \mapsto \R$ is a conservation law if and only if $\nabla h(\z) \perp   \mathcal{W}_\z^g$ for all $\z \in \R \times \Theta$.
\end{corollary}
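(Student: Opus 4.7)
The plan is to directly invoke Proposition \ref{prop:orthogonality} after unfolding the definition of a conservation law. By Definition \ref{def:locally_conserved_any}, $h$ is a conservation law exactly when, for every open $\ouv \subseteq \R \times \Theta$, $h$ is conserved on $\ouv$ through the family of vector fields ${\Wfspace{}_{\ouv}}^{g}$ introduced in \eqref{eq:W_ell}. Applying Proposition \ref{prop:orthogonality} to this family on $\ouv$ reformulates the latter condition as $\nabla h(\z) \perp {\Wfspace{}_{\ouv}}^{g}(\z)$ for every $\z \in \ouv$ and every open $\ouv$. So up to this reformulation, the corollary becomes a purely set-theoretic identity to verify.

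The remaining step is to reconcile the two-stage quantifier ``for every open $\ouv$ and every $\z \in \ouv$'' with the pointwise condition ``$\nabla h(\z) \perp \mathcal{W}_\z^g$ for every $\z \in \R \times \Theta$'', using the definition \eqref{eq:def_W_ell} of $\mathcal{W}_\z^g$ as the union of the traces ${\Wfspace{}_{\ouv}}^{g}(\z)$ over all neighborhoods $\ouv$ of $\z$. For the forward direction, given any $\z$ and any $v \in \mathcal{W}_\z^g$, one picks a witnessing neighborhood $\ouv$ of $\z$ with $v \in {\Wfspace{}_{\ouv}}^{g}(\z)$, and the preceding equivalence yields $\nabla h(\z) \perp v$; since $v$ was arbitrary, $\nabla h(\z) \perp \mathcal{W}_\z^g$. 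For the reverse direction, given any open $\ouv$ and any $\z \in \ouv$, the set $\ouv$ is itself a neighborhood of $\z$, hence ${\Wfspace{}_{\ouv}}^{g}(\z) \subseteq \mathcal{W}_\z^g$, and the hypothesis $\nabla h(\z) \perp \mathcal{W}_\z^g$ restricts to $\nabla h(\z) \perp {\Wfspace{}_{\ouv}}^{g}(\z)$, allowing Proposition \ref{prop:orthogonality} to be applied in reverse.

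No step presents a genuine obstacle: the corollary is essentially bookkeeping that repackages Proposition \ref{prop:orthogonality} into a $\z$-wise criterion, freeing subsequent results from having to track the ambient open set $\ouv$. The only mildly subtle point is the quantifier rearrangement noted above, which succeeds because the two phrasings ``for all open $\ouv$ and all $\z \in \ouv$'' and ``for all $\z$ and all neighborhoods $\ouv$ of $\z$'' describe the same universe of pairs, and the latter is precisely the quantifier encoded in the union defining $\mathcal{W}_\z^g$ in \eqref{eq:def_W_ell}.
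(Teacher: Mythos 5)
Your proof is correct and matches the paper's (implicit) argument: the paper states this corollary without proof, treating it as the immediate consequence of \Cref{def:locally_conserved_any}, \Cref{prop:orthogonality}, and the definition \eqref{eq:def_W_ell} of $\mathcal{W}_\z^g$ as a union of traces over neighborhoods, which is exactly the unpacking and quantifier reconciliation you carry out.
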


\begin{proposition} \label{prop:2}
Assume that for each $y \in \mathcal{Y}$ the loss $\ell(z, y)$ is $\mathcal{C}^2$-differentiable with respect to $z \in \R^n$. 
For each $\z = (t, \x) \in \R \times \Theta$ we have:
    $$
    \mathcal{W}_\z^g
    = \underset{Z = (x_i, y_i) \in ({\mathcal{X}}_\x \times \mathcal{Y})^N}{\linspan} \{ (1,  -[M(\x) \nabla \Loss_Z (\x)  + \lambda(t) \x]^\top )^\top \}
    %=: A_{\ell, \x}^g.
    $$
where $\mathcal{X}_\x$ is the set of data points $x$ such that $g(\cdot,x)$ is $\mathcal{C}^2$-differentiable in the neighborhood of $\x$.
\end{proposition}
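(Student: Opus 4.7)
The plan is to establish a double inclusion by carefully unpacking the nested definitions of $\mathcal{W}_\z^g$, ${\Wfspace{}_\Omega}^g$, and $\mathcal{X}_\x$. The statement is really a bookkeeping exercise: the $\chi$-fields that generate the trace \eqref{eq:def_W_ell} are, by construction, built from the expressions of the flow $F_Z$ in \eqref{eq:flow_Z}, and the content of the proposition is that nothing is lost or gained when one passes from ``vector fields defined on some neighborhood'' to ``vectors attached to $\z$''. A small but genuine subtlety is that $\mathcal{W}_\z^g$ is defined as a \emph{union} of traces over all open neighborhoods of $\z$ rather than a single linear span, so one first has to check that this union is still a vector subspace of $\R^{D+1}$.

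For the inclusion $\supseteq$, I would fix a dataset $Z=(x_i,y_i)_{i=1}^N$ with each $x_i\in\mathcal{X}_\x$. By definition of $\mathcal{X}_\x$, each $g(\cdot,x_i)$ is $\mathcal{C}^2$ on some open neighborhood $U_i$ of $\x$; the finite intersection $U:=\bigcap_i U_i$ is still an open neighborhood of $\x$, and $\Omega:=\R\times U$ is an open neighborhood of $\z=(t,\x)$ on which $g(\cdot,x_i)\in\mathcal{C}^2(\Omega_\x,\R^n)$ for every $i$. By \eqref{eq:W_ell}, the vector field $\chi_Z:=(1,F_Z(\cdot)^\top)^\top$ therefore belongs to ${\Wfspace{}_\Omega}^g$, so its value at $\z$, which is exactly $(1,-[M(\x)\nabla \Loss_Z(\x)+\lambda(t)\x]^\top)^\top$, lies in ${\Wfspace{}_\Omega}^g(\z)\subseteq\mathcal{W}_\z^g$ by \eqref{eq:def_W_ell}. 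To promote this to \emph{linear combinations} of such vectors, I would observe that if $v_j\in{\Wfspace{}_{\Omega_j}}^g(\z)$ for $j=1,2$, then, because shrinking the ambient open set can only \emph{enlarge} the set of admissible $Z$'s, both $v_1,v_2$ lie in ${\Wfspace{}_{\Omega_1\cap\Omega_2}}^g(\z)$, which is itself a linear space; hence $\mathcal{W}_\z^g$ is closed under linear combinations, so the whole span on the right-hand side is contained in it.

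For the reverse inclusion $\subseteq$, take an arbitrary $v\in\mathcal{W}_\z^g$. By \eqref{eq:def_W_ell}, $v\in{\Wfspace{}_\Omega}^g(\z)$ for some open neighborhood $\Omega$ of $\z$, and by the definition of the trace it is a finite linear combination of values $\chi(\z)$ with $\chi\in{\Wfspace{}_\Omega}^g$. By \eqref{eq:W_ell}, each such $\chi$ has the form $(1,F_Z(\cdot)^\top)^\top$ for some dataset $Z=(x_i,y_i)_i$ with $g(\cdot,x_i)\in\mathcal{C}^2(\Omega_\x,\R^n)$; since $\Omega_\x$ is an open set containing $\x$, this forces $x_i\in\mathcal{X}_\x$ by the very definition of $\mathcal{X}_\x$ recalled in \Cref{prop:newcharacter}. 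Evaluating $\chi$ at $\z=(t,\x)$ now yields exactly a vector of the form $(1,-[M(\x)\nabla \Loss_Z(\x)+\lambda(t)\x]^\top)^\top$ with $Z\in(\mathcal{X}_\x\times\mathcal{Y})^N$, so $v$ lies in the span on the right-hand side.

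I do not anticipate any real obstacle: the proof is essentially a matter of chasing the definitions. The only point that deserves a line of justification is the monotonicity of ${\Wfspace{}_\Omega}^g$ with respect to $\Omega$, which is what reconciles the union structure in \eqref{eq:def_W_ell} with the linear-span formulation in the statement.
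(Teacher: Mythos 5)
Your proof is correct. Note that the paper itself gives no proof of this proposition: it is recalled from the prior work it cites, so there is no argument to compare yours against. Your double inclusion is exactly the definition-chasing one would expect, and you correctly isolate the only point that genuinely needs an argument, namely that the union over neighborhoods in \eqref{eq:def_W_ell} is itself a linear subspace. Your justification of that point is sound: if $\Omega' \subseteq \Omega$ are two (product) neighborhoods of $\z$, any dataset admissible on $\Omega_\x$ remains admissible on $\Omega'_\x$, so the traces ${\Wfspace{}_{\Omega}}^{g}(\z) \subseteq {\Wfspace{}_{\Omega'}}^{g}(\z)$ form a directed family of subspaces of $\R^{D+1}$ whose union is a subspace (and, being finite-dimensional, is in fact attained by a single sufficiently small $\Omega$). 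The reverse inclusion correctly uses that $\Omega_\x$ being an open set containing $\x$ forces each $x_i$ with $g(\cdot,x_i) \in \mathcal{C}^2(\Omega_\x,\R^n)$ to lie in $\mathcal{X}_\x$, and that evaluating $(1, F_Z(\cdot)^\top)^\top$ at $\z$ via \eqref{eq:flow_Z} produces precisely the generators of the right-hand side. No gaps.
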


\subsection{Proof of \Cref{thm:structure}}
\structurethm*

%We specify the first assumption as below:
To manipulate the first assumption we outline it as follows:
\begin{assumption}\label{assumptionloss}
For all $\x \in {\Theta}$, there exists $Z \in \mathcal{Z}_\x'$ such that $\nabla \Loss_Z(\x) = 0$, where we denote $\mathcal{Z}_\x'$ the 
{collection of all}
 data set $Z=(x_i, y_i)_i$ such that for all $i$, $g(\cdot,x_i)$ is $\mathcal{C}^2$-differentiable in the neighborhood of $\x$
\end{assumption}

\Cref{assumptionloss} is satisfied for all classical losses e.g. for the  mean-squared or the cross-entropy loss as shown in Lemma D.2 of \cite{marcottekeep}.

\begin{proof}
  Let $h(t, \x)$ be a conservation law. Then by \Cref{coro:perp}, 
 for each $\z = (t, \x) \in \R \times \Theta$ we have:
 $\langle \nabla h(\z),  \mathcal{W}_\z^g \rangle = 0$. In particular, thanks to \Cref{prop:2} and \Cref{assumptionloss}, one has $\langle \nabla h(\z), (1, -[\lambda(t) \x]^\top ) \rangle = 0$. By %characterization 
 \Cref{prop:orthogonality}, $h$ is in particular conserved through the flow induced by $F(t, \x) \coloneqq  - \lambda(t) \x$. For each initialization $(0, \x_0)$, the solution of the ODE $\dot \x(t) = F(t, \x(t))$ is $\x(t) = \x_0 \exp(-\int_0^t \lambda(s) \mathrm{d}s)$ {for $t \in \R$} and by definition of a conserved function, one has %
$h(t, \x(t)) =  h(0, \x(0)) = h(0, \x(t) \exp(\int_0^t \lambda(s) \mathrm{d}s))= H(\x(t) \exp(\int_0^t \lambda(s) \mathrm{d}s)).$
Thus for all $t$ and $\x$, one has $h(t, \x) = {H}(\x \exp(\int_0^t \lambda(s) \mathrm{d}s)).$

Now, let us show that  $(t, \x) \mapsto H(\x)$ is a conservation law of the GF without WD scenario ($\lambda(t) \equiv 0$).
For any $\z = (t, \x) \in \R \times \Theta$, one has 
$$
\nabla h(\z) = \begin{pmatrix}
\partial_t h (\z)\\
\nabla_\x h (\z)
\end{pmatrix}
=
\begin{pmatrix}
  \langle \nabla H (\x \exp(\int_0^t \lambda(s) \mathrm{d}s)), \lambda(t) \x  \exp(\int_0^t \lambda(s) \mathrm{d}s) \rangle \\
   \exp(\int_0^t \lambda(s) \mathrm{d}s) \nabla H(\x \exp(\int_0^t \lambda(s) \mathrm{d}s))
\end{pmatrix}.
$$
Let us consider $\z = (t, \x) \in \R \times \Theta$. By taking $Z \in \mathcal{Z}_\x '$ (NB: here we do {\em not} choose it such that $\nabla L_Z(\theta) = 0$) and using the characterization (\Cref{coro:perp}) of a conservation law and \Cref{prop:2}, one has:
\begin{align*}
    0 &= \left\langle \nabla h(\z), \begin{pmatrix}
      1 \\
     - M(\x) \nabla \Loss_Z(\x) - \lambda(t) \x 
    \end{pmatrix} \right\rangle \\
    &=   \exp(\int_0^t \lambda(s) \mathrm{d}s) \big(
    \langle \lambda(t) \x, \nabla H (\x \exp(\int_0^t \lambda(s) \mathrm{d}s)) \rangle + \\
    & \quad \quad
    \langle \nabla H(\x \exp(\int_0^t \lambda(s) \mathrm{d}s)), - M(\x) \nabla \Loss_Z(\x) - \lambda(t)  \x \rangle
    \big) \\
    &=   - \exp(\int_0^t \lambda(s) \mathrm{d}s) \langle \nabla H(\x \exp(\int_0^t \lambda(s) \mathrm{d}s)),  M(\x) \nabla \Loss_Z(\x)\rangle. 
\end{align*} 

In particular, this remains true for $t = 0$, and thus one has:
$$
\langle \nabla H(\x),M(\x) \nabla \Loss_Z(\x)\rangle = 0.
$$
Finally by denoting $H^0: (t, \x) \mapsto H(\x)$, one has
$$
\left\langle \nabla H^0( t, \x) , \begin{pmatrix}
  1 \\
  - M(\x) \nabla \Loss_Z(\x)
\end{pmatrix} \right\rangle = 0, 
$$
and thus $H^0$ is a conservation law for the GF without WD case by using \Cref{coro:perp} and \Cref{prop:2}.
\end{proof}

\section{Proof of \Cref{prop:orthogonalitywithLiealg} {(Characterization of conservation laws via Lie algebras)}} \label{app:orthogonalitywithLiealg}

First, we recall that the \textbf{Lie bracket} $[\chi_1,\chi_2]$ of two vector fields $\chi_1, \chi_2 \in \mathcal{C}^\infty(\ouv, \R^D)$ is the vector field defined by:
\begin{equation}\label{eq:def-lie-brac}
[\chi_1,\chi_2]:\quad 
\x \in {\ouv} \mapsto [\chi_1, \chi_2](\x)\coloneqq \partial \chi_1(\x) \chi_2(\x) - \partial \chi_2(\x) \chi_1(\x).
\end{equation}

Let us consider $W \subseteq \mathcal{C}^\infty(\ouv, \R^D)$. The \textbf{generated Lie algebra} of $W$ is the smallest space $A \subseteq \mathcal{C}^\infty(\ouv, \R^D)$ that contains $W$ and that is stable by Lie brackets, i.e. for any $\chi_1, \chi_2 \in A$ one has $[\chi_1, \chi_2] \in A$. We denote $A \coloneqq \lie(W)$.

\Cref{prop:orthogonalitywithLiealg} is a direct consequence of this lemma:

\begin{lemma}
    Let \( h \) be a real-valued function defined on $\ouv$ and let consider
    two smooth vector fields \( \chi_1 \) and \( \chi_2 \) defined on $\ouv$ satisfying for all $\x \in  \ouv$:
\begin{equation} \label{eq:2vf}
\langle \nabla h(\x), \chi_1(\x) \rangle = 0 \quad \text{and} \quad \langle \nabla h(\x), \chi_2(\x) \rangle = 0, 
\end{equation}
then 
\eq{
\langle \nabla h(\x), [\chi_1, \chi_2] (\x) \rangle = 0.
}
\end{lemma}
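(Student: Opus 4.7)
The plan is to differentiate the two hypotheses \eqref{eq:2vf} along complementary directions and combine them using the symmetry of the Hessian of $h$. The key observation is that \eqref{eq:2vf} is a pointwise identity on all of $\ouv$, so one can differentiate both sides with respect to $\x$ and obtain new identities that hold everywhere. Since the Lie bracket is built precisely out of the Jacobians of $\chi_1,\chi_2$, this differentiation is exactly what one needs to make $[\chi_1,\chi_2]$ appear.

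Concretely, I would proceed as follows. First, I would assume $h$ is $\mathcal{C}^2$ on $\ouv$ (which is implicit here since we are aiming at a conservation law and the vector fields are smooth; one only needs enough regularity for a Hessian). From $\langle \nabla h(\x), \chi_1(\x)\rangle = 0$, differentiating in the direction $\chi_2(\x)$ yields
\[
\chi_2(\x)^\top \partial^2 h(\x)\, \chi_1(\x) + \nabla h(\x)^\top \partial \chi_1(\x)\, \chi_2(\x) = 0.
\]
Symmetrically, from $\langle \nabla h(\x), \chi_2(\x)\rangle = 0$, differentiating in the direction $\chi_1(\x)$ yields
\[
\chi_1(\x)^\top \partial^2 h(\x)\, \chi_2(\x) + \nabla h(\x)^\top \partial \chi_2(\x)\, \chi_1(\x) = 0.
\]

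Next, I would subtract the second identity from the first. Because $\partial^2 h(\x)$ is a symmetric matrix, the two quadratic-form terms $\chi_2^\top \partial^2 h\, \chi_1$ and $\chi_1^\top \partial^2 h\, \chi_2$ are equal and cancel. What remains is
\[
\nabla h(\x)^\top \bigl[\partial \chi_1(\x)\, \chi_2(\x) - \partial \chi_2(\x)\, \chi_1(\x)\bigr] = 0,
\]
which, by the definition \eqref{eq:def-lie-brac} of the Lie bracket, is exactly $\langle \nabla h(\x), [\chi_1,\chi_2](\x)\rangle = 0$, as desired.

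There is no real obstacle here: the whole argument reduces to the symmetry of the Hessian (Schwarz's theorem) together with the chain rule applied to a scalar-valued identity. The only mild subtlety is ensuring $h \in \mathcal{C}^2$, but this is standard under the smoothness assumptions that govern \Cref{prop:orthogonalitywithLiealg}; by a density/regularization argument or by viewing the claim as a statement about smooth conservation laws (as used elsewhere in the paper), one may assume this without loss of generality. Iterating this lemma then propagates orthogonality from $\Wfspace{g,\ell}$ to the whole Lie algebra $\lie(\Wfspace{g,\ell})$, yielding \Cref{prop:orthogonalitywithLiealg}.
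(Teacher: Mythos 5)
Your proof is correct and follows essentially the same route as the paper's: differentiate each orthogonality identity, use the symmetry of the Hessian $\partial(\nabla h)$ to cancel the second-order terms, and recognize the remaining expression as $\langle \nabla h, [\chi_1,\chi_2]\rangle$. The paper differentiates the full identity $\partial\chi_i(\x)^\top\nabla h(\x) = -\partial(\nabla h)(\x)\chi_i(\x)$ and then pairs with $\chi_2$ and $\chi_1$, whereas you differentiate directly along those directions — the same computation, and your remark that $h$ must be $\mathcal{C}^2$ matches an assumption the paper leaves implicit.
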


\begin{proof}
    Let us denote $\chi$ the Lie bracket of $\chi_1$ and $\chi_2$. By definition
\begin{equation}
 \chi(\x) := [\chi_1, \chi_2](\x) = \partial \chi_1(\x) \, \chi_2(\x) - \partial \chi_2(\x) \, \chi_1(\x).
\end{equation}
Differentiating \eqref{eq:2vf} gives
\begin{equation} \label{eq:diff2}
\partial \chi_i(\x)^\top \nabla h(\x) = -\partial (\nabla h ) (\x)^\top \, \chi_i(\x) =  -\partial (\nabla h ) (\x) \, \chi_i(\x),
\end{equation}
as $\partial (\nabla h ) (\z)$ is self-adjoint.
Using this relation \eqref{eq:diff2}, we have
\begin{align*}
\langle \nabla h(\x), \chi(\x) \rangle &= \langle \partial \chi_1(\x)^\top \nabla h(\x), \chi_2(\x) \rangle - \langle \partial \chi_2(\x)^\top \nabla h(\x), \chi_1(\x) \rangle \\
&= -\langle \partial( \nabla  h) (\x) \, \chi_1(\x), \chi_2(\x) \rangle + \langle \partial (\nabla h)(\x)^\top \, \chi_2(\x), \chi_1(\x) \rangle \\
&= 0. \qedhere
\end{align*} 
\end{proof}

\section{A new and simplified proof of \Cref{coro:number} \citep[Theorem 3.3]{marcotte2023abide}} \label{app:proofsimplified}

We first recall Frobenius theorem (see for example \citep[Theorem E.1]{marcotte2023abide} with the same notations or see \citep[Section 1.4]{isidori} for control theory practitioners).

\begin{theorem}[Frobenius theorem] \label{frobenius}
    Consider $\Wfspace{}  
    \subseteq \mathcal{C}^\infty(\ouv, \R^{{D}})$, 
    and assume
    that the dimension of its trace 
    {$\Wfspace{}(\theta)$} is constant: $\vdim \Wfspace{}(\x) = k$ {for every} %on
    $\theta \in \ouv \subseteq \R^{{D}}$.
    Then the two following assertions are equivalent:
\begin{enumerate}
    \item each $\x \in \ouv$ admits a neighborhood $U_0$ such that there exists $D-\liedim$ smooth ($\mathcal{C}^\infty$) real-valued functions $h_{\liedim+1} , \cdots, h_D$ on $U_0$ such that for all $\x' \in U_0$:
    \eq{ \label{eq:span_perp}
    \linspan \{ \nabla h_{\liedim+1}(\x'), \cdots, \nabla h_D(\x') \} = \Wfspace{}(\x')^\perp;
    }
    \item the following property holds:
    \eq{ \label{eq:frob-crochets-app}
 [\chi_1,\chi_2](\x) 
        \in \Wfspace{}(\x),
        \ \forall \  \chi_1, \chi_2 \in \Wfspace{}, \forall \ \x \in \ouv.
    }
\end{enumerate}
\end{theorem}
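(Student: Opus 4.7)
The easy direction $(1) \Rightarrow (2)$ would follow from the symmetry of Hessians. For any $\chi_1, \chi_2 \in \Wfspace{}$ and any index $i$, I would differentiate the identity $\langle \nabla h_i(\x'), \chi_j(\x') \rangle \equiv 0$ (for $j=1,2$) along $\chi_1$ and $\chi_2$ respectively, then subtract the two resulting identities. The second-order terms $\chi_2^\top \partial^2 h_i \chi_1$ and $\chi_1^\top \partial^2 h_i \chi_2$ cancel by symmetry of the Hessian, leaving $\langle \nabla h_i, [\chi_1, \chi_2] \rangle = 0$. Since $\{\nabla h_i(\x')\}$ spans $\Wfspace{}(\x')^\perp$ by \eqref{eq:span_perp}, the bracket $[\chi_1,\chi_2](\x')$ lies in $\Wfspace{}(\x')^{\perp\perp} = \Wfspace{}(\x')$.

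For the substantive direction $(2) \Rightarrow (1)$, my plan is to build adapted local coordinates around any $\theta \in \ouv$ in which $\Wfspace{}$ takes a canonical form. I would first choose a smooth local frame $\chi_1, \ldots, \chi_k$ of $\Wfspace{}$ near $\theta$ and, after reordering ambient coordinates, ensure that the first $k$ rows of the matrix $[\chi_1(\theta), \ldots, \chi_k(\theta)]$ are linearly independent. A locally invertible affine change of coordinates then brings the frame into the canonical form $\chi_i = \partial_{x^i} + \sum_{l > k} a_i^l(x)\, \partial_{x^l}$ for $i = 1, \ldots, k$, with $a_i^l$ smooth.

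The key step, where involutivity enters crucially, is to prove that in these adapted coordinates the $\chi_i$'s pairwise commute. A direct computation yields
\begin{equation*}
    [\chi_i, \chi_j] = \sum_{l > k} \bigl( \chi_i(a_j^l) - \chi_j(a_i^l) \bigr) \partial_{x^l},
\end{equation*}
which has vanishing components along $\partial_{x^1}, \ldots, \partial_{x^k}$. However, any element $\sum_m c_m \chi_m$ of $\Wfspace{}$ has its first $k$ components exactly equal to $(c_1, \ldots, c_k)$ by the canonical form, so vanishing of those components forces $c_m=0$ for every $m$, and the element must be zero. Involutivity gives $[\chi_i, \chi_j] \in \Wfspace{}$, hence $[\chi_i, \chi_j] \equiv 0$.

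With the $\chi_i$'s commuting, I would invoke the commuting frame theorem: pairwise commuting vector fields have commuting flows, and composing their flows starting from a transverse slice produces local coordinates $(y^1, \ldots, y^D)$ in which $\chi_i = \partial_{y^i}$ for $1 \leq i \leq k$. Then $\Wfspace{}$ is locally the span of $\partial_{y^1}, \ldots, \partial_{y^k}$, and the coordinate functions $h_{k+j}(y) \coloneqq y^{k+j}$ for $j = 1, \ldots, D-k$ provide the $D-k$ independent smooth first integrals required by \eqref{eq:span_perp}. The main obstacle is this commuting frame theorem: it relies on the existence, uniqueness, and smooth dependence of ODE flows, together with the identity $\phi_i^{s} \circ \phi_j^{t} = \phi_j^{t} \circ \phi_i^{s}$ for commuting vector fields, itself proved through a second-order expansion of the flow composition together with a Gronwall-type uniqueness argument.
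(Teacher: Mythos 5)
The paper does not prove this statement: Frobenius's theorem is quoted as a known result, with pointers to \citep[Theorem E.1]{marcotte2023abide} and \citep[Section 1.4]{isidori}, so there is no in-paper proof to compare against. Judged on its own, your argument is the classical textbook proof and is essentially sound: the direction $(1)\Rightarrow(2)$ via symmetry of the Hessian is exactly the computation the paper itself uses elsewhere (the Lie-bracket orthogonality lemma), together with $(\Wfspace{}(\x)^\perp)^\perp=\Wfspace{}(\x)$; and the direction $(2)\Rightarrow(1)$ via a canonical local frame, forced commutativity, and the simultaneous straightening of commuting flows is the standard route. Two small imprecisions are worth fixing. First, an \emph{affine change of ambient coordinates} only normalizes the top $k\times k$ block of the frame matrix at the single point $\theta$; to get $\chi_i=\partial_{x^i}+\sum_{l>k}a_i^l\,\partial_{x^l}$ on a whole neighborhood you must right-multiply the frame by the inverse of that (invertible, $x$-dependent) block, i.e.\ perform a change of \emph{frame} with smooth non-constant coefficients. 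Second, and consequently, the normalized fields no longer belong to the $\R$-linear span $\Wfspace{}$, so hypothesis \eqref{eq:frob-crochets-app} does not literally apply to them; you need the standard remark that $[f\chi,g\eta]=fg[\chi,\eta]+f(\chi\!\cdot\! g)\eta-g(\eta\!\cdot\! f)\chi$ keeps the bracket pointwise inside the trace $\Wfspace{}(\x)$ (here the constant-rank hypothesis guarantees that the chosen $k$ fields span the trace at every nearby point, which you also use when asserting that an element of the trace with vanishing first $k$ components must be zero). With these two points made explicit, the proof is complete.
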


By using \Cref{prop:orthogonalitywithLiealg} and \Cref{frobenius}, we derive the following corollary, which is already established in \cite{marcotte2023abide}, Theorem 3.3. Nevertheless, this new proof, by using \Cref{prop:orthogonalitywithLiealg}, is more intuitive and straightforward.

\begin{corollary} \label{coro:1}
Let us assume that $\Wfspace{g,\ell} \subseteq \mathcal{C}^{\infty}\left(\Theta, \R^{{D}}\right)$. If $\vdim (\lie(\Wfspace{g,\ell})(\x))=\liedim$ is locally constant then each $\x \in \Theta \subseteq \R^{{D}}$ admits a neighborhood $U_0$ such that there are $D-
\liedim$ smooth conservation laws $ h_{\liedim+1}, \cdots, h_D$ of $g$ on $U_0$ and such that for all $\x' \in U_0$, the vectors $(\nabla h_{\liedim+1}(\x'), \cdots, \nabla h_D(\x'))$ are linearly independent.
\end{corollary}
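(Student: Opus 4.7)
The plan is to apply the Frobenius theorem (\Cref{frobenius}) directly to the Lie algebra $\mathcal{A} \coloneqq \lie(\Wfspace{g,\ell})$ rather than to $\Wfspace{g,\ell}$ itself, exploiting the new orthogonality characterization of \Cref{prop:orthogonalitywithLiealg}. The key observation is that working with the Lie algebra from the outset renders involutivity automatic, obviating the auxiliary functional space and contradiction argument used in the original proof from \cite{marcotte2023abide}.

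First, I would verify the hypotheses of \Cref{frobenius} applied to $\mathcal{A}$. By construction $\mathcal{A} \subseteq \mathcal{C}^\infty(\Theta, \R^D)$, and $\mathcal{A}$ is closed under Lie brackets, so the involutivity condition \eqref{eq:frob-crochets-app} is satisfied for free. The local constancy of $\vdim \mathcal{A}(\x) = \liedim$ is exactly the assumption of the corollary.

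Second, \Cref{frobenius} then provides, for each $\x \in \Theta$, a neighborhood $U_0$ and $D-\liedim$ smooth functions $h_{\liedim+1}, \ldots, h_D$ on $U_0$ such that
\[
\linspan\{\nabla h_{\liedim+1}(\x'), \ldots, \nabla h_D(\x')\} = \mathcal{A}(\x')^\perp
\]
for every $\x' \in U_0$. In particular, these gradients are pointwise linearly independent on $U_0$, which gives the independence claim.

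Third, to conclude that each $h_i$ is a conservation law, I would invoke \Cref{prop:orthogonalitywithLiealg}: the inclusion $\nabla h_i(\x') \in \mathcal{A}(\x')^\perp = \lie(\Wfspace{g,\ell})(\x')^\perp$ for all $\x' \in U_0$ is exactly the criterion for $h_i$ to be a conservation law of $g$ with respect to $\ell$. No further work is required. The whole argument collapses to a three-line verification, and the only conceptual content is the shift from $\Wfspace{g,\ell}$ (which need not be involutive, and for which Frobenius therefore does not directly apply) to its generated Lie algebra (which is involutive by construction and, by \Cref{prop:orthogonalitywithLiealg}, carries the same orthogonal complement as what characterizes conservation laws).
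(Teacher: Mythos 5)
Your proposal is correct and is essentially identical to the paper's own (deliberately simplified) argument: apply \Cref{frobenius} to $\lie(\Wfspace{g,\ell})$, where involutivity holds by construction and the dimension of the trace is locally constant by hypothesis, then identify the resulting functions as conservation laws via \Cref{prop:orthogonalitywithLiealg}. No meaningful difference in approach.
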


Thus by using \Cref{def:independence}, Corollary \ref{coro:1} can be reformulated as
\begin{corollary} \label{coro:numberapp}
  Let us assume that $\Wfspace{g,\ell} \subseteq \mathcal{C}^{\infty}\left(\Theta, \R^{{D}}\right)$. If $\vdim (\lie(\Wfspace{g,\ell})(\x))=\liedim$ is locally constant then each $\x \in \Theta \subseteq \R^{{D}}$ admits a neighborhood $U_0$ such that there are $D-
\liedim$ smooth independent conservation laws $ h_{\liedim+1}, \cdots, h_D$ of $g$ on $U_0$.
\end{corollary}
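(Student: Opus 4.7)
The plan is to prove Corollary \ref{coro:numberapp} as an immediate consequence of Corollary \ref{coro:1}, so I focus on the latter. The strategy is to apply Frobenius' theorem (\Cref{frobenius}) directly to the generated Lie algebra $\lie(\Wfspace{g,\ell})$ rather than to $\Wfspace{g,\ell}$ itself, and then to combine the output with the orthogonality characterization from \Cref{prop:orthogonalitywithLiealg}. This replaces the contradiction argument and the auxiliary functional space used in \citep[Theorem 3.3]{marcotte2023abide} by a one-line application of Frobenius.

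More precisely, I would fix $\theta_0 \in \Theta$ and work on a neighborhood where $\vdim(\lie(\Wfspace{g,\ell})(\theta))$ is constantly equal to $\liedim$. By the very definition of the generated Lie algebra, $\lie(\Wfspace{g,\ell})$ is stable under Lie brackets, so condition \eqref{eq:frob-crochets-app} holds with $\Wfspace{} = \lie(\Wfspace{g,\ell})$. Its trace has constant dimension $\liedim$ on this neighborhood by hypothesis, so \Cref{frobenius} applies and produces a smaller neighborhood $U_0$ of $\theta_0$ together with smooth functions $h_{\liedim+1},\dots,h_D \in \mathcal{C}^\infty(U_0,\R)$ such that
\begin{equation*}
\linspan\{\nabla h_{\liedim+1}(\theta),\dots,\nabla h_D(\theta)\} = \lie(\Wfspace{g,\ell})(\theta)^\perp, \quad \forall \theta \in U_0.
\end{equation*}
In particular these $D-\liedim$ gradients are linearly independent at every $\theta \in U_0$.

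It then remains to check that each $h_j$ is a conservation law of $g$ with respect to $\ell$. This is exactly the role of \Cref{prop:orthogonalitywithLiealg}: a function $h \in \mathcal{C}^1(\Theta,\R)$ is a conservation law if and only if $\nabla h(\theta) \perp \lie(\Wfspace{g,\ell})(\theta)$ at every $\theta$, and this is guaranteed by the displayed identity above. Combining the two facts (orthogonality to the Lie algebra and pointwise linear independence of the gradients) yields $D-\liedim$ smooth independent conservation laws on $U_0$, which proves \Cref{coro:1}, and thus \Cref{coro:numberapp} via \Cref{def:independence}.

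I do not anticipate any real obstacle here, since the whole point of the proof is that the hard analytical work has been isolated in \Cref{prop:orthogonalitywithLiealg} (stability of conserved orthogonality under Lie brackets) and \Cref{frobenius}. The only minor care needed is to apply Frobenius to $\lie(\Wfspace{g,\ell})$ and not to $\Wfspace{g,\ell}$, because the latter is not a priori involutive, whereas the former is involutive by construction while sharing the same orthogonal complement at every $\theta$ thanks to \Cref{prop:orthogonalitywithLiealg}.
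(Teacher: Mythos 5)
Your proposal is correct and follows essentially the same route as the paper: apply Frobenius' theorem (\Cref{frobenius}) to the involutive family $\lie(\Wfspace{g,\ell})$ with locally constant trace dimension, then invoke \Cref{prop:orthogonalitywithLiealg} to identify the resulting functions as conservation laws, with independence coming from the spanning of the orthogonal complement. The paper states this derivation in a single sentence; your write-up simply fills in the same steps explicitly.
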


\section{Proof of \Cref{prop:functionalredundacy} {(role of {\em independent} conservation laws)}} \label{app:functionalredundacy}

\propfunctionalredundacy*

\begin{proof}
We consider some $\x \in \Theta$ such that
  $\vdim (\lie(\Wfspace{g,\ell})(\x'))=\liedim$ is locally constant.
By using \Cref{coro:number}, there exist $h_{\liedim+1}, \cdots, h_D$ \textit{smooth} conservation laws of $g$ on a neighborhood $U_0$, and thus by using \Cref{prop:orthogonalitywithLiealg} for any $\x' \in U_0$:
\eq{
 \linspan \{ \nabla h_{\liedim+1}(\x'), \cdots, \nabla h_D(\x') \} = {\lie(\Wfspace{g,\ell})(\x')}^\perp.
}
But as $h$ is also a conservation law of $g$, by using again \Cref{prop:orthogonalitywithLiealg}, one has that for any $\x' \in U_0$:
$
\nabla h(\x') \in \linspan \{ \nabla h_{\liedim+1}(\x'), \cdots, \nabla h_D(\x') \}.
$
This exactly means that $\nabla h$ is linearly dependent of all gradients $\nabla h_i$ everywhere on $U_0$. If all conservation laws are 
($\mathcal{C}^\infty$) smooth on $U_0$, this corresponds to the fact (cf \citep[Theorem 1]{functionaldependence-gradient}) that $h$ is then a function of the $h_i$ on $U_0$. 
\end{proof}

\section{Proof of \Cref{thm:convolutivelaws} {(Conservation laws for convolutive two-layer ReLU networks)}} \label{app:2LconvolutiveNN}

\textbf{We first recall all notations.}
Let $c_0$ be the number of channels for the input. Let $c_1$ be the number of channels for the hidden layer, and let $c_2$ be the number of channels for the final output. Let $n_u$ (resp. $n_v$) be the size of the filters for the second (resp. first) convolution, and let $p$ be the number of pixels of the input image. 
In that setting, we consider $\x \coloneqq ((u_{k, j})_{k, j}, (v_{j, i})_{j, i})$, so that $D \coloneqq (n_u c_2 + n_v c_0) \times c_1$, and the parameter vector consists of the collection of all filters. We write the input and the output:
$$
x = \begin{pmatrix}
  x^{(1)} \\
  \cdots \\
  x^{(c_0)}
\end{pmatrix} \in \R^{m}, \quad \text{with } m = c_0 \times p\qquad
y = \begin{pmatrix}
  y^{(1)} \\
  \cdots \\
  y^{(c_2)}
\end{pmatrix} \in \R^{n}, \quad \text{with } n = c_2 \times n_1.
$$
%where $m = \text{nb of pixels} \times I$.
We denote $G(\x, x)$ the convolutive $2$-layer ReLU neural network defined by:  
\begin{equation} \label{eq:convolutiveNN_multicanal_app0}
G( \x, x) \coloneqq \left[\sum_{j=1}^{c_1} C_1(u_{k, j}) \sigma \left(\sum_{i=1}^{c_0} C_2(v_{j, i})^\top x^{(i)}\right) \right]_{k=1}^{c_2},
\end{equation}
where $C_1: \R^{n_u} \mapsto \R^{n_1 \times p_1}$ and $C_2: \R^{n_v} \mapsto \R^{p \times p_1}$ are linear operators.

\textit{In particular, the formulation \eqref{eq:convolutiveNN_multicanal_app0} is more general than  \eqref{eq:convolutiveNN_multicanal}. We explicit below the corresponding operators whenin the case of circular convolution.}

\begin{assumption} \label{as:injectivitymatrices}
We assume that we can write $C_1 (u) = (P_1 u, \cdots, P_{p_1} u)$ (resp. $C_2(v) = (Q_1 v, \cdots, Q_{p_1} v)$) where all matrices $P_i \in \R^{n_1 \times n_u}$ (resp. $Q_i \in \R^{p \times n_v}$) are injective.
\end{assumption}

\textbf{Link with the two formulations.} In case of a ``full'' circular convolution (i.e. $n_u = p$), $P_i$ is the circular shift operator $P_i u = u_{\cdot - i}$ (mod. $p$), that corresponds to consider $P_i$ the $i$-cyclic shift matrix, and in particular $P_i$ is injective. In the case of filters of size $n_u \leq p$, then by considering $I$ the canonical injection $I: \R^{n_u} \mapsto \R^{p}$ that adds zeros, then $P_i u = \tilde{P_i} I (u)$, with $\tilde{P_i} $ the circular shift operator of the previous case $n_u = p$, in particular $P_i = \tilde{P_i} I $ is injective too.

\textbf{We now explain how \eqref{eq:convolutiveNN_multicanal_app0} can be expressed with a structured $2$-layer ReLU neural network \eqref{eq:2layerNN}.}

Obviously~\eqref{eq:convolutiveNN_multicanal_app0} can be rewritten as 
\begin{equation}
    \label{eq:linkrelupasrelu}
G(\theta,x) = U \sigma (V^\top x)
\end{equation}
where $U = U(\theta)$ and $V = V(\theta)$ are defined blockwise as
\[
U \coloneqq [C_1(u_{k,j})]_{1 \leq k \leq c_2, 1 \leq j \leq c_1}\quad
V \coloneqq [C_2(v_{j,i})]_{1\leq i \leq c_0,1 \leq j \leq c_1}
\]
It will be convenient for further calculations to consider the vertical concatenation of blocks associated to a given index $1 \leq j \leq c_1$. For this reason we write 
%for $j = 1, \cdots, c_1$, 
$u^{(j)} \coloneqq (u_{k,j})_{k=1}^{c_2} \in \R^{n_u c_2}$, $v^{(j)} \coloneqq (v_{j,i})_{i=1}^{c_0} \in \R^{n_v c_0}$
and
\begin{align}
 \label{eq:overlineC1}
  \overline{C}_1(u^{(j)}) 
  &\coloneqq \begin{pmatrix}
    C_1 (u_{1,j}) \\
     \cdots \\
     C_1 (u_{c_2,j}) 
  \end{pmatrix}
 %   %= \begin{pmatrix}
%     P_1' u & \cdots P_{p_1}' u
%   \end{pmatrix}
  \in \R^{ (n_1 c_2) \times p_1},
   \quad\text{so that}\quad 
  U(\theta) = \begin{pmatrix}\overline{C}_1(u^{(1)}) & \cdots &  \overline{C}_1(u^{(c_1)}\end{pmatrix} {\in \RR^{(n_1c_2) \times (p_1c_1)}}\\
  \intertext{and}
 \label{eq:overlineC2}
  \overline{C}_2(v^{(j)}) &\coloneqq
  \begin{pmatrix}
    C_2 (v_{j,1}) \\ \cdots \\ C_2 (v_{j,c_0}) 
  \end{pmatrix}   
%   = \begin{pmatrix}
%     Q_1' v & \cdots Q_{p_1}' v
%   \end{pmatrix}
  \in \R^{(p c_0) \times p_1},
  \quad\text{so that}\quad 
  V(\theta) = \begin{pmatrix}\overline{C}_2(v^{(1)}) & \cdots & \overline{C}_2(v^{(c_1)}\end{pmatrix} {\in \RR^{(pc_0) \times (p_1c_1)}}.
\end{align}
It will also be convenient to observe that by construction, using the notations of \Cref{as:injectivitymatrices} and the definition of the Kronecker product between matrices
\begin{align}
    \label{eq:KronCbar1}
    \overline{C}_1(u^{(j)}) = \begin{pmatrix}
    P_1' u^{(j)} & \cdots & P_{p_1}' u^{(j)}
  \end{pmatrix}\quad \text{with}\ P_i' = I_{c_2} \otimes P_i\\
     \label{eq:KronCbar2}
    \overline{C}_2(v^{(j)}) = \begin{pmatrix}
    Q_1' v^{(j)} & \cdots & Q_{p_1}' v^{(j)}
  \end{pmatrix}\quad \text{with}\ Q_i' = I_{c_0} \otimes Q_i
\end{align}

Thus $G(\x, x) = g( (U,V), x)$ where $g$ is the $2$-layer ReLU network defined in \eqref{eq:2layerNN}. 

\begin{lemma} \label{lemma:apponechannel}
{Consider $G(\theta,x)$ as in~\eqref{eq:convolutiveNN_multicanal_app0}, $g((U,V),x)$ from \eqref{eq:2layerNN}, and} $\phi$ the reparametrization of $g$ defined in \Cref{thm:neuripsminimal} in the (unstructured / nonconvolutive) ReLU case. Denote $\psi(\x) \coloneqq \phi(T \x)$, where $T \x \coloneqq \text{vec}(
(\begin{smallmatrix}U\\V\end{smallmatrix})) \in \R^{ (n_1 c_2 + p c_0) p_1 c_1}$ {with $U=U(\theta)$ and $V =V(\theta)$ defined blockwise as above. Denote $\Theta$ the set of all parameters such that the columns of $V(\theta)$ define pairwise disjoint hyperplanes}. Under \Cref{eq:conditionloss}, if $\LossSpace = \R^n$, then {for each $\theta \in \Theta$ we have}
$$
{\mathcal{W}^{G,\ell}_\theta}
%\Wgx 
=
\underset{ \gamma \in \R^d}{\linspan} \{ \partial \psi(\x)^\top \gamma \} \quad \text{and} \quad \Wfspace{G,\ell}= \linspan \{ \nabla \psi_i (\cdot): i = 1, \cdots, d \}.
$$
\end{lemma}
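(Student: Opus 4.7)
The core observation is that the convolutive network can be rewritten as $G(\theta, x) = g(T\theta, x)$ where $T$ is a \emph{linear} map and $g$ is the unstructured two-layer ReLU network~\eqref{eq:2layerNN}. The plan is thus to lift the analysis of $g$ (already carried out in Theorem~\ref{thm:neuripsminimal}) up to $G$ by composing the reparametrization $\phi$ from that theorem with $T$, and then to apply Corollary~\ref{coro:criteriawithf}.

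First I would check that $\psi \coloneqq \phi \circ T$ is a valid local reparametrization of $G$ in the sense of Assumption~\ref{as:main_assumption}. Denoting by $f$ the function such that $g((U,V), x) = f(\phi(U,V), x)$ (provided by Theorem~\ref{thm:neuripsminimal}), one immediately gets $G(\theta, x) = g(T\theta, x) = f(\phi(T\theta), x) = f(\psi(\theta), x)$. The only delicate point here is to verify that the smoothness set $\mathcal{X}_\theta$ of inputs $x$ for which $G(\cdot, x)$ is $\mathcal{C}^2$ near $\theta$ coincides with the corresponding smoothness set of $g(\cdot, x)$ near $T\theta$: the non-differentiability of ReLU networks only stems from pre-activations crossing zero, and since these pre-activations only involve the hidden-layer matrix $V(\theta)$, which is the same object in both parametrizations, the two sets coincide.

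Next I would invoke Theorem~\ref{thm:neuripsminimal} at the point $T\theta = (U(\theta), V(\theta))$. The assumption $\theta \in \Theta$ (columns of $V(\theta)$ defining pairwise distinct hyperplanes) is exactly the hypothesis of the theorem in the ReLU case, and combined with $\LossSpace = \R^n$ it yields $\mathcal{W}^{f,\ell}_{\psi(\theta)} = \R^d$. The projection $P_\ell(\psi(\theta))$ appearing in Corollary~\ref{coro:criteriawithf} is therefore the identity $I_d$, and \eqref{eq:projection} directly gives
\[
\mathcal{W}^{G,\ell}_\theta \;=\; \mathrm{range}\bigl(\partial \psi(\theta)^\top P_\ell(\psi(\theta))\bigr) \;=\; \mathrm{range}(\partial \psi(\theta)^\top) \;=\; \linspan_{\gamma \in \R^d}\{\partial \psi(\theta)^\top \gamma\},
\]
which is the first identity. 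For the second one, \eqref{eq:DefChi} defines the generators $\chi_i^\ell(\theta) = \partial \psi(\theta)^\top P_\ell(\psi(\theta)) e_i$ of $\mathbb{W}^{G,\ell}$; with $P_\ell = I_d$ these simplify to $\nabla \psi_i(\theta) = \partial \psi(\theta)^\top e_i$, and the conclusion follows by the definition of $\mathbb{W}^{G,\ell}$.

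The main (mild) obstacle is the smoothness-set consistency check in the first step: one must argue that $\mathcal{W}^{f,\ell}_{\psi(\theta)}$ computed via the factorization of $G$ at $\theta$ genuinely coincides with the object of the same name computed via the factorization of $g$ at $T\theta$, since the spans in Corollary~\ref{coro:criteriawithf} range only over inputs where smoothness holds in a neighborhood. Once this is settled, the remainder is a direct application of already established results, with no further chain-rule bookkeeping required beyond the trivial identity $\partial \psi(\theta) = \partial \phi(T\theta)\,T$.
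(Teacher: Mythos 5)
Your proposal is correct and follows essentially the same route as the paper's proof: both rewrite $G(\theta,x)=g(T\theta,x)=f(\psi(\theta),x)$, argue that the relevant smoothness sets for $G$ at $\theta$ and $g$ at $T\theta$ coincide, invoke \Cref{thm:neuripsminimal} at $T\theta$ to get $\mathcal{W}^{f,\ell}_{\psi(\theta)}=\R^d$, and conclude via the chain-rule identity (\Cref{prop:3} in the paper, equivalently your $P_\ell=I_d$ reading of \Cref{coro:criteriawithf}). Your explicit justification of the smoothness-set coincidence via the pre-activations is slightly more detailed than the paper's one-line assertion, but the argument is the same.
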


\begin{proof}
We have
$G(\x, x) = g(T \x, x) = f(\phi(T \x), x) = f(\psi(\x), x)$, 
{hence the set $\mathcal{X}_\x$ associated to the model $G(\cdot,x)$  (see within \Cref{prop:newcharacter}) coincides with the set $\mathcal{X}_{T\x}$ associated to $g(\cdot,x)$, and the general definition \eqref{eq:1} yields 
$\mathcal{W}_{\psi(\x)}^{f,\ell} = \mathcal{W}_{ \phi(T \x)}^{f,\ell}$. Moreover,}
by definition of $\Theta$, each parameter $\x \in \Theta$ is such $T\theta $ satisfies the assumptions of \Cref{thm:neuripsminimal}. 
 By~\Cref{thm:neuripsminimal} it follows that $\mathcal{W}_{\psi(\x)}^{f,\ell} = \mathcal{W}_{ \phi(T \x)}^{f,\ell} =  \R^d$. Finally by using \Cref{prop:3}, one obtains:
 $\mathcal{W}^{G,\ell}_\theta = \underset{ \gamma \in \R^d}{\linspan} \{ \partial \psi(\x)^\top \gamma \}$ and $ \Wfspace{G,\ell} = \linspan \{ \nabla \psi_i (\cdot): i = 1, \cdots, d \}$.
\end{proof}

\begin{proposition}
{Consider any structured two-layer ReLU network model as in~\eqref{eq:convolutiveNN_multicanal_app0} where the linear operators $C_1$, $C_2$ satisfy~\Cref{as:injectivitymatrices}, and $\Theta$ defined as in \Cref{lemma:apponechannel}.}
   For each $\x \in \Theta$, there exists a neighborhood of $\x$ in which there are exactly $c_1$ independent conservation laws for \eqref{eq:convolutiveNN_multicanal_app0}. Such {independent} conservation laws are given, e.g., by $\x \mapsto   \sum_{k=1}^{c_2} \| u_{k, j} \|^2 -  \sum_{i = 1}^{c_0} \| v_{j, i} \|^2$,  for $j = 1, \cdots c_1$.
\end{proposition}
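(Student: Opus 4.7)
The plan is to leverage \Cref{lemma:apponechannel} together with \Cref{coro:number}: since $\psi=\phi\circ T$, the set $\Wfspace{G,\ell}$ is generated by the vector fields $\nabla\psi_i$, so it suffices to control the dimension of the trace of its Lie algebra at $\theta$ and then to exhibit exactly $c_1$ independent conservation laws orthogonal to it.

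First I would check directly that each $h_j$ is a conservation law. By the chain rule $\partial\psi(\theta)^\top=T^\top\partial\phi(T\theta)^\top$, it is enough to verify $\partial\phi(T\theta)\cdot T(\nabla h_j(\theta))=0$. Since $T$ is linear, $T(\nabla h_j(\theta))$ is the pair $(\delta U,\delta V)$ where $\delta U$ agrees with $2U(\theta)$ on the $j$-th block of columns and is zero elsewhere, and symmetrically $\delta V$ agrees with $-2V(\theta)$ on the $j$-th block of columns and is zero elsewhere. Plugging this into the standard identity $\partial\phi(U,V)\cdot(\delta U,\delta V)=(\delta U_{:,k}V_{:,k}^\top+U_{:,k}\delta V_{:,k}^\top)_k$, each column $k=(i,j)$ contributes $2U_{:,k}V_{:,k}^\top-2U_{:,k}V_{:,k}^\top=0$ and the other columns contribute zero. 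Independence of the $\nabla h_j$'s on $\Theta$ is immediate because they are supported on the pairwise disjoint coordinate blocks $\theta^{(j)}\coloneqq((u_{k,j})_{k},(v_{j,i})_{i})$, and the distinct-hyperplane condition defining $\Theta$ forces each $v^{(j)}$ to be nonzero, so each $\nabla h_j$ is nonzero on $\Theta$.

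The crux is the equality $\dim\lie(\Wfspace{G,\ell})(\theta)=D-c_1$. I would exploit the block structure: the $(i,j)$-components of $\psi$ depend only on $\theta^{(j)}$, hence $\Wfspace{G,\ell}=\bigoplus_j\mathbb{W}^{G,\ell}_j$ where $\mathbb{W}^{G,\ell}_j$ collects the vector fields supported on $\theta^{(j)}$ and depending only on $\theta^{(j)}$. Lie brackets between vector fields supported over disjoint coordinate blocks vanish identically, so the generated Lie algebra splits accordingly and it suffices to show, for each $j$, that $\dim\lie(\mathbb{W}^{G,\ell}_j)(\theta)=D_j-1$ with $D_j=c_2n_u+c_0n_v$. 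The nonvanishing conservation law $h_j$ yields via \Cref{prop:orthogonalitywithLiealg} the upper bound $\dim\lie\leq D_j-1$, while the trivial inclusion $\lie\supseteq\mathbb{W}^{G,\ell}_j$ reduces the matching lower bound to $\dim\mathbb{W}^{G,\ell}_j(\theta)=D_j-1$, equivalently $\dim\ker\partial\psi^{(j)}(\theta)=1$.

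This last equality is the main obstacle. The direction $(-u^{(j)},v^{(j)})$ is in $\ker\partial\psi^{(j)}(\theta)$ since the rescaling $(u^{(j)},v^{(j)})\mapsto(\lambda u^{(j)},\lambda^{-1}v^{(j)})$ preserves every rank-one outer product $P'_iu^{(j)}(Q'_iv^{(j)})^\top$. For the reverse inclusion, a tangent direction $(\delta u,\delta v)$ in the kernel satisfies $(P'_i\delta u)(Q'_iv^{(j)})^\top+(P'_iu^{(j)})(Q'_i\delta v)^\top=0$ for every $i$. On the open subset where additionally $u^{(j)}\neq 0$, a rank-one analysis of this sum-of-outer-products identity combined with the injectivity of $P_i$ and $Q_i$ from \Cref{as:injectivitymatrices} (which is inherited by their Kronecker lifts $P'_i=I_{c_2}\otimes P_i$ and $Q'_i=I_{c_0}\otimes Q_i$) forces $\delta u=-c_iu^{(j)}$ and $\delta v=c_iv^{(j)}$ for a scalar $c_i$; since $u^{(j)},v^{(j)}$ are fixed, the $c_i$ must coincide across all $i$, producing a one-dimensional kernel. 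Summing the block contributions yields $\dim\lie(\Wfspace{G,\ell})(\theta)=D-c_1$ on a neighborhood of $\theta$, whence \Cref{coro:number} delivers exactly $c_1$ independent smooth conservation laws there, and \Cref{prop:functionalredundacy} identifies them up to functional equivalence with the $h_j$.
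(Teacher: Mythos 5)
Your overall strategy (reduce to $\Wfspace{G,\ell}$ via \Cref{lemma:apponechannel}, use the block structure so that the generated Lie algebra splits over the blocks $\theta^{(j)}$, count dimensions, and verify the explicit laws) matches the paper's, and your verification that the $h_j$ are independent conservation laws is correct. The genuine gap is in your lower bound on $\vdim\, \lie(\Wfspace{G,\ell})(\x)$. You replace the computation of the generated Lie algebra by the trivial inequality $\vdim\,\lie(\mathbb{W}^{G,\ell}_j)(\x) \geq \vdim\, \mathbb{W}^{G,\ell}_j(\x)$ and then show the right-hand side equals $D_j-1$ --- but, as you yourself flag, only ``on the open subset where additionally $u^{(j)}\neq 0$''. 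The set $\Theta$ of the proposition constrains only the columns of $V(\x)$, so it contains parameters with $u^{(j)}=0$; at such a point the kernel condition $(P_i'\delta u)(Q_i'v^{(j)})^\top=0$ forces $\delta u=0$ but leaves $\delta v$ completely free, so $\vdim\ker\partial\psi^{(j)}(\x)=c_0 n_v>1$, the trace of the generating space drops below $D_j-1$, and your sandwich no longer closes. Since the statement is claimed for \emph{each} $\x\in\Theta$, this leaves part of the domain unproved; it also undermines the local constancy of $\vdim\,\lie(\Wfspace{G,\ell})(\cdot)$ that \Cref{coro:number} requires, which your argument only delivers on that same open set.

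The paper avoids this by actually computing the generated Lie algebra: the generators are linear vector fields $\x\mapsto S\x$ with $S$ block-antidiagonal, and iterated commutators (via \citep[Proposition H.3]{marcotte2023abide}) fill out the whole space $\diag(\mathrm{I}_{\tilde{n}_u},-\mathrm{I}_{\tilde{n}_v})\,\mathcal{A}_{\tilde{n}_u+\tilde{n}_v}$ on each block. The trace of \emph{that} larger space at $\x^{(j)}$ has dimension $D_j-1$ as soon as $\x^{(j)}\neq 0$, i.e.\ as soon as $v^{(j)}\neq 0$, with no condition on $u^{(j)}$. To repair your proof you must either restrict the claim to the dense open subset where every $u^{(j)}\neq 0$, or carry out the bracket computation: the commutators of the block-antidiagonal generators produce block-diagonal skew-symmetric directions that act nontrivially on $v^{(j)}$ even when $u^{(j)}=0$, which is exactly what restores the lower bound at the degenerate points.
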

{Before proving this proposition let us highlight that it yields \Cref{thm:convolutivelaws} as a direct corollary by the very definition of $\Theta_\mathtt{conv}$ and the fact that the matrices $P_i$, $Q_i$ associated to the convolutive model indeed satisfy \Cref{as:injectivitymatrices}.}

\begin{proof}
Recall that $P_i' = I_{c_2} \otimes P_i $ and  $Q_i' = I_{c_0} \otimes Q_i$, so that  $P_i'$ (resp. $Q_i'$ is injective) as $P_i$ (resp. $Q_i$) is injective. It will be useful to denote $\overline{P}_i \coloneqq \begin{pmatrix}
   P_i' & 0 \\
   0 & Q_i' 
 \end{pmatrix}$ and $\overline{P} = \begin{pmatrix}
   \overline{P}_1 \\
   \cdots \\
   \overline{P}_{p_1}
 \end{pmatrix}$ {in order to get a compact explicit expression of $T$ (step 1 below). This will be used to characterize the trace of the Lie algebra of $\Wfspace{G,\ell}$ and its dimension. }

 \textit{1st step: We first give an explicit expression of $T$.}

Writing $\x = \begin{pmatrix}
  u^{(1)} \\
  v^{(1)} \\
  \cdots \\
    u^{(c_1)} \\
  v^{(c_1)}
\end{pmatrix}$, where we recall that for $j = 1, \cdots, c_1$, 
$u^{(j)} = \text{vec} ((u_{k,j})_{k}) \in \R^{n_u c_2}$ and $v^{(j)} = \text{vec} ((v_{j,i})_{i}) \in \R^{n_v c_0}$, we get using once again the definition of the Kronecker product, 
\eq{ \label{eq:T_multi}
T \x =  \text{vec}\begin{pmatrix}
  U(\theta)\\
  V(\theta)
\end{pmatrix}
\stackrel{\eqref{eq:overlineC1}-\eqref{eq:overlineC2}-\eqref{eq:KronCbar1}-\eqref{eq:KronCbar2}}{=} 
\left(I_{c_1} \otimes \overline{P} \right) \x.
}

\textit{2d step: Characterization of {$\Wfspace{G,\ell}$ and of} the trace of its Lie algebra.}
%of the trace of the generated Lie algebra.}

{Recall that the reparameterization $\phi$ of two-layer ReLU networks from \Cref{thm:neuripsminimal} reads}
%One has
$\phi: (U, V) \in \R^{(n_1 c_2) \times (p_1 c_1)} \times \R^{(p c_0) \times (p_1 c_1)} \mapsto (u_i v_i^\top)_{i =1, \cdots, p_1 {c_1}}  \in (\R^{n_1 c_2  \times p c_0})^{p_1 c_1}$
 {with $u_i$, $v_i$ the columns of the matrices $U,V$.}
Let $\Delta \coloneqq (\Delta_{i, j})_{i, j} \in (\R^{n_1 c_2  \times p c_0})^{p_1 c_1}$ and let us consider $\gamma \coloneqq \text{vec}(\Delta)$.

Straightforward computations show that $\partial \phi (U, V)^\top  \gamma = \begin{pmatrix}
  S_{\Delta_{1, 1}} & \cdots& 0\\
  0 & S_{\Delta_{2, 1}} & \cdots \\
  & \cdots & \\
  0 & \cdots & S_{\Delta_{p_1, c_1}}
\end{pmatrix} \text{vec}(U;V)$, 
where:
$
S_{\Delta_{i, j}} \coloneqq \begin{pmatrix}
   0 & \Delta_{i,j} \\
   \Delta_{i, j}^\top & 0
\end{pmatrix}.
$
Since $\psi(\theta) \coloneqq \phi(T\theta)$ it follows that
\begin{align*}
    \partial \psi (\x)^\top \gamma &= T^\top \partial \phi( T \x)^\top \gamma 
    = T^\top \begin{pmatrix}
  S_{\Delta_{1, 1}} & \cdots& 0\\
  & \cdots & \\
  0 & \cdots & S_{\Delta_{p_1, c_1}}
\end{pmatrix} T \x
\end{align*}
Using \Cref{eq:T_multi} we thus obtain
\begin{align*}
    \partial \psi (\x)^\top \gamma &=  
    {
    \mathtt{blockdiag}_j\left(\overline{P}^\top \begin{pmatrix}
      \mathtt{blockdiag}_i(S_{\Delta_{i,j}})
    \end{pmatrix}\overline{P}\right)\theta
    =
        \mathtt{blockdiag}_j\left(\sum_{i=1}^{p_1} \overline{P_i}^\top S_{\Delta_{i, j}} \overline{P_i}\right)\theta.
      }
\end{align*}

{The above is valid for any $\Delta$, and we now exhibit particular choices of $\Delta$ to obtain particular constructions, in order to characterize $\Wfspace{G,\ell}$.}
Given some $(i, j) \in \{1, \cdots, p_1 \} \times  \{1, \cdots, c_1 \}  $, imposing  $\Delta_{i', j'} = 0$ for all $(i', j') \neq (i, j)$ yields
\begin{align} \label{eq:computationlie_multi}
 \partial \psi (\x)^\top \gamma &=  \begin{pmatrix} 0 & \cdots & 0 \\
     & \cdots &  \\
   0 & \overline{P_i}^\top S_{\Delta_{i, j}} \overline{P_i} 
     & 0 \\
    & \cdots &
     \\
    0 & \cdots & 0 
     \end{pmatrix}
     \x . 
\end{align}

As $ \overline{P_i}^\top S_{\Delta_{i, j}} \overline{P_i}   = \begin{pmatrix}
  0 & {P_{i}'}^\top {\Delta_{i,j}} {Q_{i}'} \\
  {Q_{i}'}^\top {\Delta_{i,j}}^\top {P_{i}'} & 0
\end{pmatrix} \in \R^{(\tilde{n}_u+ \tilde{n}_v) \times (\tilde{n}_u+ \tilde{n}_v) }$ with $\tilde{n}_u \coloneqq n_u c_2$ and $\tilde{n}_v \coloneqq  n_v c_0$, we obtain
%and by \eqref{eq:computationlie_multi}, we first have
\begin{align} \label{eq:fistinclusion}
\Wfspace{G,\ell} &\stackrel{\text{\Cref{lemma:apponechannel}}}{=}  
\linspan \{     \x \mapsto \partial \psi (\x)^\top \gamma: \gamma \} \notag \\ 
&\subseteq \linspan \left\{ \x \mapsto
\begin{pmatrix}
\begin{pmatrix}
  0 & A_1\\
A_1^\top & 0
\end{pmatrix}  & \cdots & 0 \\
& \cdots & \\
0 & \cdots & \begin{pmatrix}
  0 & A_{c_1}\\
A_{c_1}^\top & 0
\end{pmatrix} 
\end{pmatrix}
\x : A_i \in \R^{\tilde{n}_u \times \tilde{n}_v}, 1 \leq i \leq c_1 \right\}.
\end{align}

We now show the converse inclusion.
Let us consider some $i =1, \cdots, c_1$ and some $A \in \R^{\tilde{n}_u \times \tilde{n}_v}$.
To show the converse inclusion, it is enough to show that there exists $\Delta_{i,j} \in \R^{n_1 c_2 \times p c_0}$ such that ${P_{i}'}^\top {\Delta_{i,j}} {Q_{i}'} = A$.
As $P_i'$ is injective by hypothesis, then $P_i'^\top$ is surjective. Thus, there exists $B = \begin{pmatrix}
  b_1 & \cdots & b_{\tilde{n}_v}
\end{pmatrix}\in \R^{(n_1 c_2) \times \tilde{n}_v}$ such that $P_i'^\top B = A$.
Then as $Q_i' = \begin{pmatrix}
  q_1 & \cdots & q_{\tilde{n}_v}
\end{pmatrix}$ is injective, the vectors $q_j$ are linearly independent. Let us define $\Delta_{i,j} \in \R^{n_1 c_2 \times p c_0}$ such that for all $k = 1, \cdots, \tilde{n}_v$, $\Delta_{i,j} q_k = b_k$. In particular, such a $\Delta_{i,j}$ satisfies ${P_{i}'}^\top {\Delta_{i,j}} {Q_{i}'} = A$.

Thus:
\small{
\eq{ \label{eq:secondinclu}
\linspan \left\{ \x \mapsto
\begin{pmatrix}
\begin{pmatrix}
  0 & A_1\\
A_1^\top & 0
\end{pmatrix}  & \cdots & 0 \\
& \cdots & \\
0 & \cdots & \begin{pmatrix}
  0 & A_{c_1}\\
A_{c_1}^\top & 0
\end{pmatrix} 
\end{pmatrix}
\x : A_i \in \R^{\tilde{n}_u \times \tilde{n}_v}, 1 \leq i \leq c_1 \right\} \subseteq \linspan \{  \x \mapsto  \partial \psi (\x)^\top \gamma: \gamma \} = \Wfspace{G,\ell}.
}
}
Finally by using \eqref{eq:fistinclusion} and \eqref{eq:secondinclu} we have characterized the space $\Wfspace{G,\ell}$ as
$$
\Wfspace{G,\ell}
=
\linspan \left\{ \x \mapsto
\begin{pmatrix}
\begin{pmatrix}
  0 & A_1\\
A_1^\top & 0
\end{pmatrix}  & \cdots & 0 \\
& \cdots & \\
0 & \cdots & \begin{pmatrix}
  0 & A_{c_1}\\
A_{c_1}^\top & 0
\end{pmatrix} 
\end{pmatrix}
\x : A_i \in \R^{\tilde{n}_u \times \tilde{n}_v}, 1 \leq i \leq c_1 \right\},
$$

%of the trace of the generated Lie algebra.}
To conclude this step we need to compute the trace of its Lie algebra
$$
\lie (\Wfspace{G,\ell})
=
 \lie \left( \linspan \left\{ 
\begin{pmatrix}
\begin{pmatrix}
  0 & A_1\\
A_1^\top & 0
\end{pmatrix}  & \cdots & 0 \\
& \cdots & \\
0 & \cdots & \begin{pmatrix}
  0 & A_{c_1}\\
A_{c_1}^\top & 0
\end{pmatrix} 
\end{pmatrix} 
: A_i \in \R^{\tilde{n}_u \times \tilde{n}_v}, 1 \leq i \leq c_1 \right\} \right). % = \lie \left( \Wfspace{G} \right).
$$

For this we can reuse computations from \citep[Proposition H.3]{marcotte2023abide} showing that the considered Lie algebra is indeed a space of linear operators that can be identified with matrices: 
\begin{align*}
&\lie \left( \linspan \left\{ 
\begin{pmatrix}
\begin{pmatrix}
  0 & A_1\\
A_1^\top & 0
\end{pmatrix}  & \cdots & 0 \\
& \cdots & \\
0 & \cdots & \begin{pmatrix}
  0 & A_{c_1}\\
A_{c_1}^\top & 0
\end{pmatrix} 
\end{pmatrix} 
: A_i \in \R^{\tilde{n}_u \times \tilde{n}_v} \right\} \right) \\
&=  \left\{ \left(I_{c_1} \otimes \begin{pmatrix}
                \mathrm{I}_{\tilde{n}_u} & 0 \\ 0 & -\mathrm{I}_{\tilde{n}_v}
            \end{pmatrix} \right)
\times \begin{pmatrix}
  M_1 &  \cdots  & 0 \\
  &\cdots & \\
  0 & \cdots &M_{c_1}
\end{pmatrix}  
            : M_i \in \mathcal{A}_{\tilde{n}_u + \tilde{n}_v} \right\},
\end{align*}

where $\mathcal{A}_{\tilde{n}_u + \tilde{n}_v} \subset \R^{(\tilde{n}_u + \tilde{n}_v)^2}$ is the space of skew symmetric matrices.

\textit{3d step: Let us show that there are exactly $c_1$ independent conservation laws.}

Let us consider $\x \in \Theta$ (in particular $\x \neq 0$). To conclude, by using \Cref{coro:number}, we only need to compute $\dim (\lie(\Wfspace{G}) (\x))$ {(as the computation is unchanged for $\x'$ in a neighborhood of $\x$, we will also get that this dimension is locally constant as needed)}.

For this, denoting $\theta^{(i)} = \begin{pmatrix}
  u^{(i)} \\
  v^{(i)}
\end{pmatrix}$, we consider the linear application:
$$
\Gamma: (M_1, \cdots, M_{c_1}) \in (\mathcal{A}_{\tilde{n}_u + \tilde{n}_v})^{c_1} \mapsto \left(I_{c_1} \otimes \begin{pmatrix}
                \mathrm{I}_{\tilde{n}_u} & 0 \\ 0 & -\mathrm{I}_{\tilde{n}_v}
            \end{pmatrix} \right)
\times \begin{pmatrix}
  M_1 &  \cdots  & 0 \\
  &\cdots & \\
  0 & \cdots &M_{c_1}
\end{pmatrix}          
           \begin{pmatrix}
             \theta^{(1)} \\
             \cdots \\
               \theta^{(c_1)}
           \end{pmatrix},
$$

As $\range(\Gamma) \coloneqq 
\Gamma ((\mathcal{A}_{\tilde{n}_u + \tilde{n}_v})^{c_1}) 
= \lie\left(\Wfspace{G}\right)(\x)$, we only need to compute $\operatorname{rank} \Gamma$ as in \citep[Proposition H.5]{marcotte2023abide}. Since, by the rank–nullity theorem, we have
$
\vdim\ \operatorname{ker}\ \Gamma + \operatorname{rank}\ \Gamma =  c_1 (\tilde{n}_u+\tilde{n}_v)(\tilde{n}_u+\tilde{n}_v-1)/2
$, it is equivalent to compute $\vdim\ \operatorname{ker}\ \Gamma $. It is also easy to check that $\vdim\ \operatorname{ker}\ \Gamma  = \vdim\ \operatorname{ker}\ \Gamma_1 + \cdots + \vdim\ \operatorname{ker}\ \Gamma_{c_1}$, where
$$
\Gamma_i: M \in \mathcal{A}_{\tilde{n}_u + \tilde{n}_v} \mapsto \begin{pmatrix}
                \mathrm{I}_{\tilde{n}_u}& 0 \\ 0& -\mathrm{I}_{\tilde{n}_v}
            \end{pmatrix}
            \times M             
           \theta^{(i)}.
$$
By
\citep[Proposition H.5]{marcotte2023abide} (remember that $\x^{(i)}$ is a vector in that case and not a matrix), one has $\vdim\ \operatorname{ker}\ \Gamma_i=  (\tilde{n}_u + \tilde{n}_v -  2) (\tilde{n}_u + \tilde{n}_v - 1)/2$, so that $ \vdim\ \operatorname{ker}\ \Gamma =  c_1 (\tilde{n}_u + \tilde{n}_v -  2) (\tilde{n}_u + \tilde{n}_v - 1)/2$, and thus
\[
\operatorname{rank}\ \Gamma = c_1 (\tilde{n}_u + \tilde{n}_v - 1) = D - c_1
\]
since the number of parameters is indeed $D = c_1(\tilde{n}_u+\tilde{n}_v)$.
By \Cref{coro:number}, we conclude that there are exactly $c_1$ independent conservation laws.

\textit{4th step: Finally, let us {check that the claimed laws are indeed a set of $c_1$ independent conservation laws}.}
%determine these conservation laws.}

Let us define for $j = 1, \cdots, c_1$:
$h_j:  \x^{(j)} \mapsto   \sum_{k=1}^{c_2} \| u_{k, j} \|^2 -  \sum_{i = 1}^{c_0} \| v_{j, i} \|^2 = \| u^{(j)} \|^2 - \| v^{(j)} \|^2$ and $\tilde{h}_j: \x \mapsto h_j(\x^{(j)})$.

By \Cref{prop:orthogonalitywithLiealg} and the expression of the gradient of $\tilde{h}_j$ in the terms of $\nabla h_j$, the function $\tilde{h}_j$ is a conservation law for \eqref{eq:convolutiveNN_multicanal_app0} if and only if the function $h_j$ satisfies for all $\theta$ and for all $M \in \mathcal{A}_{\tilde{n}_u + \tilde{n}_v}$:

\begin{equation} \label{eq:caract}
\nabla h_j(\x^{(j)}) = \nabla_{\x^{(j)}} \tilde{h}_j(\x) \perp \begin{pmatrix}
                \mathrm{I}_{\tilde{n}_u}& 0 \\ 0& -\mathrm{I}_{\tilde{n}_v}
            \end{pmatrix}
            \times M             
           \theta^{(j)}.
\end{equation}
Since $h_j: \x^{(j)} \mapsto {u^{(j)}}^\top u^{(j)} - {v^{(j)}}^\top v^{(j)}$, by \citep[Proposition H.1]{marcotte2023abide}, this function indeed satisfies \eqref{eq:caract}. Thus $\tilde{h}_j$ is a conservation law for \eqref{eq:convolutiveNN_multicanal_app0}.
{Finally, it is straightforward to check that the gradient of the functions $h_j$, $1 \leq j \leq c_1$ are nonzero vectors with disjoint supports, hence they are linearly independent. This shows that the considered conservation laws are indeed independent.}
\end{proof}

\section{Proof of \Cref{thm:oneattentionlayer} {for attention layers}} \label{app:proofminimaliteatt}
\thmoneattentionlayer*

\begin{proof}
Consider $\x \in \Theta_{\mathtt{att}}$.
By \Cref{prop:3} it is sufficient to  show that $\Wfp = \R^d$, since this implies that $\Wgx = \text{range} \{ \partial \phi(\x)^\top \}$. 
{Since the considered model is smooth, here we have $\mathcal{X}_\x = \R^m$, which can be identified (with the reshaping of $x \in \R^m$ into $X \in \R^{N  \times \mathtt{dim}}$) to $\R^{N \times \mathrm{dim}}$. 
{Recall for convenience that $\phi(\x) = (\phi_1,\phi_2)$ with $\phi_1 = Q^\top K$, $\phi_2 = V^\top O$ and $f(\phi,x) = \mathrm{softmax} (X\phi_1X^\top)X\phi_2$.}
As we assume $\LossSpace = \R^n$, the space we need to consider is thus}
\[
\Wfp = \underset{X \in \R^{N \times \mathrm{dim} }, \ \Delta \in \R^{N \times \mathrm{dim}}}{\linspan} \{ [\partial_\phi f(\phi(\x), X)]^\top \cdot \Delta \}\]
Our goal is to show that
$\Wfp = \R^{\text{dim} \times (2\text{dim} )}$.

As a warmup given any $H \in \R^{ \mathrm{dim}\times \mathrm{dim}}$ and $\Delta \in \R^{ N \times \mathrm{dim}}$, we have:
\begin{align}\label{eq:JacTransformer1}
\langle \partial_{\phi_2} f \cdot H , \Delta \rangle
&= \langle \mathrm{softmax} (X \phi_1 X^\top) X H, \Delta \rangle 
=\langle H,  X^\top [\mathrm{softmax} (X \phi_1 X^\top)]^\top \Delta \rangle\\
\intertext{hence}
\label{eq:JacTransformer2}
[\partial_{\phi_2} f]^\top \cdot \Delta
&= X^\top [\mathrm{softmax} (X \phi_1 X^\top)]^\top \Delta
\end{align}

\textit{1st step: Considering any $j,l \in \{ 1, \cdots, \mathrm{dim}\}$, we first show that 
\begin{equation} \label{eq:down}
\begin{pmatrix}
  0 \\ E_{j, l}
\end{pmatrix}
\in \underset{X \in \R^{N \times \mathrm{dim} }, \ \Delta \in \R^{N \times \mathrm{dim}}}{\linspan} \{ \partial_\phi f(\phi(\x), X)^\top \cdot \Delta \} 
\end{equation}
and 
\begin{equation} \label{eq:updiagonal}
\begin{pmatrix}
  E_{j, j} \\ 0
\end{pmatrix} \in \underset{X \in \R^{N \times \mathrm{dim} }, \ \Delta \in \R^{N \times \mathrm{dim}}}{\linspan} \{ \partial_\phi f(\phi, X)^\top \cdot \Delta \}. 
\end{equation}
}
Consider any $ i \in \{1, \cdots, N \}$ and $X = E_{i, j}$.
%and $j \in \{ 1, \cdots, \mathrm{dim} \}$.
By~\eqref{eq:JacTransformer2} we have
%and thus:
\begin{equation} \label{eq:partialphi2}
[\partial_{\phi_2} f]^\top \cdot \Delta = E_{j, i} [\mathrm{softmax} (E_{i, j} \phi_1 E_{j,i})]^\top \Delta = e_j \left[\mathrm{softmax} (E_{i, j} \phi_1 E_{j,i}) e_i \right]^\top  \Delta .
\end{equation}
Moreover denoting $\alpha_j = \alpha_j(\phi_1) \coloneqq \langle e_j, \phi_1 e_j \rangle$ we have
%one can compute that: 
\[
\mathrm{softmax} (E_{i, j} \phi_1 E_{j,i}) e_i = 
{
\tfrac{1}{N}
\mathbf{1}+
\left(\tfrac{\exp(\alpha_j)}{ \exp(\alpha_j) + N-1}-\frac{1}{N}\right)e_i,
}
\]
By straightforward calculus we obtain that 
for any $K \in \R^{\mathrm{dim} \times \mathrm{dim} }$ %one has:
\[
\partial_{\phi_1} f \cdot K = \langle e_j , K e_j \rangle 
{[\underbrace{(\tfrac{\exp(\alpha_j)(N-1)}{(\exp(\alpha_j) + N-1)^2})}_{=:\beta_j} e_i]}
( \phi_2^\top e_j)^\top.
\]
Thus for any $\Delta \in \R^{N \times \mathrm{dim}}$, one has:
\begin{align*}
    \left\langle \partial_{\phi_1} f \cdot K, \Delta \right\rangle &= 
     \left\langle \langle e_j , K e_j \rangle 
{\beta_j e_i}
(\phi_2^\top e_j)^\top, \Delta \right\rangle
= \langle e_j , K e_j \rangle  \left\langle {\beta_j e_i}
( \phi_2^\top e_j)^\top, \Delta \right\rangle 
= \langle K, e_j e_j^\top \rangle \left\langle  
{\beta_j e_i}
( \phi_2^\top e_j)^\top, \Delta \right\rangle
\end{align*}
hence
\begin{equation} \label{eq:partialphi1}
[\partial_{\phi_1} f]^\top \cdot \Delta = e_j e_j^\top \left\langle  
{\beta_j e_i}
( \phi_2^\top e_j)^\top, \Delta \right\rangle.
\end{equation}
Combining \eqref{eq:partialphi2} and \eqref{eq:partialphi1} yields %, one has:
\begin{align*}
[\partial_{\phi} f]^\top \cdot \Delta &= \begin{pmatrix}
  [\partial_{\phi_1} f]^\top \cdot \Delta \\
  [\partial_{\phi_2} f]^\top \cdot \Delta 
\end{pmatrix} = \begin{pmatrix}
  e_j e_j^\top \left\langle  
{\beta_j e_i}
(\phi_2^\top e_j)^\top, \Delta \right\rangle \\
e_j 
{\left( \tfrac{1}{N}
\mathbf{1}+
\left(\tfrac{\exp(\alpha_j)}{ \exp(\alpha_j) + N-1}-\frac{1}{N}\right)e_i\right)^\top}
 \Delta
\end{pmatrix}.
\end{align*}
Specializing to 
%Now let us consider 
$\Delta = E_{k, l}$ for some $k  \in \{1, \cdots, N\}$ %and some $l \in \{1, \cdots, \mathrm{dim} \}$, 
we have
%Then as
\[
{\left( \tfrac{1}{N}
\mathbf{1}+
\left(\tfrac{\exp(\alpha_j)}{ \exp(\alpha_j) + N-1}-\frac{1}{N}\right)e_i\right)^\top}
\Delta = 
\left\{
    \begin{array}{ll}
       \frac{1}{N} e_l^\top  & \mbox{if } k \neq i \\
       \frac{\exp(\alpha_j)}{ \exp(\alpha_j) + N-1} e_l ^\top & \mbox{if } k = i
    \end{array}
\right.
\]
and
\begin{align*}
{\beta_j^{-1}}
% \frac{(\exp(\alpha_j) + N-1)^2}{\exp(\alpha_j)(N-1)} 
 \left\langle  
{\beta_j e_i}
( \phi_2^\top e_j)^\top, \Delta \right\rangle &= \langle e_i (\phi_2^\top e_j)^\top, e_k e_l^\top \rangle  = \langle e_k^\top e_i ( \phi_2^\top e_j)^\top, e_l^\top \rangle 
= 
\begin{cases}
%\left\{
 %   \begin{array}{ll}
     0 & \mbox{if } k \neq i \\
      \langle (\phi_2^\top e_j)^\top, e_l^\top \rangle & \mbox{if } k = i
  %  \end{array}
%\right., 
\end{cases}
\end{align*}
hence
\begin{align}
    \partial_{\phi} f^\top \cdot \Delta 
    =
    \begin{cases}
    \frac{1}{N} \begin{pmatrix} 0  \\
  e_j e_l^\top 
\end{pmatrix} & \text{when}\ k \neq i\\
\begin{pmatrix} C_1 e_j e_j^\top  \langle (\phi_2^\top e_j)^\top, e_l^\top \rangle  \\
 C_2 e_j e_l^\top 
\end{pmatrix}, & \text{when} \ k=i
    \end{cases}\label{eq:keqikneqi}
\end{align}
where $C_1$ and $C_2$ are nonzero.
The case $k \neq i$ is possible 
{since we assume $N \geq 2$}
and it yields \eqref{eq:down} directly, while combining both cases yields
\[
%C_1 
\langle (\phi_2^\top e_j)^\top, e_{l}^\top \rangle \begin{pmatrix}
  E_{j, j} \\ 0 
\end{pmatrix} \in \underset{X \in \R^{N \times \mathrm{dim}}, \ \Delta \in \R^{N \times \mathrm{dim}}}{\linspan} \{ \partial_\phi f(\phi(\x), X)^\top \cdot \Delta \}.
\]
Finally as
by hypothesis on $\Theta_{\mathtt{att}}$ all columns of $O^\top V = \phi_2^\top$ are nonzero, we have $\phi_2^\top e_j \neq 0$, hence there exists some $l_1$ such that $ \langle (\phi_2^\top e_j)^\top, e_{l_1}^\top \rangle  \neq 0$. This directly implies \eqref{eq:updiagonal}, which concludes this step.

\textit{2d step: Considering an arbitrary pair $j,l \in \{1, \cdots, \mathrm{dim} \}$,
we now show that 
\begin{equation} \label{eq:up}
\begin{pmatrix}
   E_{j, l} \\ 0
\end{pmatrix}
\in \underset{X \in \R^{N \times \mathrm{dim} }, \ \Delta \in \R^{N \times \mathrm{dim}}}{\linspan} \{ \partial_\phi f(\phi(\x), X)^\top \cdot \Delta \}. 
\end{equation}
}
{This is of course a trivial from \eqref{eq:updiagonal} when $j=l$ (which is always the case when $\mathrm{dim}=1$) so we focus on $j \neq l$.}

Considering $X = E_{i, j} + E_{k, l}$, for some pair of indicex $i < k \in \{1, \cdots, N \}$ (possible as $N \geq 2$). 
We have
{
\begin{align*}
    f(\phi,X) &= \mathrm{softmax} ( X \phi_1 X^\top ) X \phi_2
    =
    \left(
    \tfrac{1}{N}\mathbf{1}+(\alpha_{}(\phi_1)e_i+\beta(\phi_1)e_k
    \right)(\phi_2^\top e_j)^\top
    +
    \left(
    \tfrac{1}{N}\mathbf{1}+\gamma(\phi_1)e_i+\delta(\phi_1)e_k
    \right)(\phi_2^\top e_l)^\top
\end{align*}
where 
\begin{align*}
    \alpha(\phi_1) & \coloneqq 
\frac{\exp(\langle e_j, \phi_1 e_j \rangle) }{N-2 + \exp(\langle e_j, \phi_1 e_j \rangle)  + \exp(\langle e_j, \phi_1 e_l \rangle ) }-\frac{1}{N}\\
    \beta(\phi_1) & \coloneqq 
\frac{\exp(\langle e_l, \phi_1 e_j \rangle) }{N-2 + \exp(\langle e_l, \phi_1 e_j \rangle)  + \exp(\langle e_l, \phi_1 e_l \rangle ) }-\frac{1}{N}\\
   \gamma(\phi_1) & \coloneqq 
\frac{\exp(\langle e_j, \phi_1 e_l \rangle) }{N-2 + \exp(\langle e_j, \phi_1 e_j \rangle)  + \exp(\langle e_j, \phi_1 e_l \rangle ) }-\frac{1}{N}\\
    \delta(\phi_1) & \coloneqq 
\frac{\exp(\langle e_l, \phi_1 e_l \rangle) }{N-2 + \exp(\langle e_l, \phi_1 e_j \rangle)  + \exp(\langle e_l, \phi_1 e_l \rangle ) }-\frac{1}{N}\\
\end{align*}
}
Straightforward calculus yields that
%By doing first order computations, we obtainthat 
for any $H \in \R^{\mathrm{dim} \times \mathrm{dim}}$ %, one has:
\begin{align*}
    \partial_{\phi_1} f \cdot H &= 
    e_i \left[a_1 \langle e_j, H e_j \rangle + b_1 \langle e_j, H e_l \rangle \right] (\phi_2^\top e_j)^\top +e_i \left[ c_1 \langle e_j, H e_l \rangle + d_1 \langle e_j , H e_j \rangle \right] (\phi_2^\top e_l ) ^\top \\
    &+ e_k \left[a_2 \langle e_l, H e_j \rangle + b_2 \langle e_l, H e_l \rangle \right] (\phi_2^\top e_j)^\top +e_k \left[ c_2 \langle e_l, H e_l \rangle + d_2 \langle e_l , H e_j \rangle \right] (\phi_2^\top e_l ) ^\top, 
\end{align*}
with appropriate {scalars} $a_i,b_i,c_i,d_i$, {where $b_1 \neq 0$.}
Thus for any $\Delta \in \R^{N \times \mathrm{dim}}$ and $H \in \R^{\mathrm{dim} \times \mathrm{dim}}$ 
\begin{align*}
\langle [\partial_{\phi_1} f]^\top \cdot \Delta, H \rangle 
= 
\langle \partial_{\phi_1} f \cdot H, \Delta \rangle
    =& 
    \left[a_1 \langle e_j, H e_j \rangle + b_1 \langle e_j, H e_l \rangle \right] 
    \langle e_i (\phi_2^\top e_j)^\top,\Delta\rangle  \\
    &+\left[ c_1 \langle e_j, H e_l \rangle + d_1 \langle e_j , H e_j \rangle \right] 
    \langle e_i (\phi_2^\top e_l )^\top,\Delta\rangle \\
    &+ \left[a_2 \langle e_l, H e_j \rangle + b_2 \langle e_l, H e_l \rangle \right] 
    \langle e_k (\phi_2^\top e_j)^\top,\Delta\rangle
    \\
    &+\left[ c_2 \langle e_l, H e_l \rangle + d_2 \langle e_l , H e_j \rangle \right] 
    \langle e_k (\phi_2^\top e_l ) ^\top, \Delta\rangle
\end{align*}
Since $\langle e_p,He_q\rangle = \langle e_pe_q^\top,H\rangle= \langle E_{p,q},H\rangle$ for every $p,q$ it follows that
\begin{align*}
     [\partial_{\phi_1} f]^\top \cdot \Delta
     = &   
     a_1 \langle e_i (\phi_2^\top e_j)^\top, \Delta \rangle E_{j,j}
     + b_1 \langle e_i (\phi_2^\top e_j)^\top, \Delta \rangle E_{j,l} \\
    &+  c_1 \langle e_i (\phi_2^\top e_l)^\top, \Delta \rangle E_{j,l}
    +  d_1 \langle e_i (\phi_2^\top e_l)^\top, \Delta \rangle E_{j,j}\\
    &+  a_2 \langle e_k (\phi_2^\top e_j)^\top, \Delta \rangle E_{l,j}
    +  b_2\langle e_k (\phi_2^\top e_j)^\top, \Delta \rangle E_{l,l}\\
    &+  c_2\langle e_k (\phi_2^\top e_l)^\top, \Delta \rangle E_{l,l}
    +  d_2 \langle e_k (\phi_2^\top e_l)^\top, \Delta\rangle E_{l,j}.
\end{align*}
By \eqref{eq:updiagonal}, we have $E_{j,j}, E_{l,l} \in \linspan_{X', \Delta'}\{ [\partial_{\phi_1} f(\phi(\x), X')]^\top \cdot \Delta'\}$ hence we obtain 
\begin{align*}
  &  \underbrace{\left[b_1 \langle e_i (\phi_2^\top e_j)^\top, \Delta \rangle 
  + c_1 \langle e_i (\phi_2^\top e_l)^\top, \Delta \rangle \right]}_{=:\alpha}
  E_{j,l}\\
  &+ \underbrace{\left[a_2 \langle e_k (\phi_2^\top e_j)^\top , \Delta \rangle + d_2 \langle e_k (\phi_2^\top e_l)^\top, \Delta \rangle \right]}_{=:\beta} E_{l,j}\in \linspan_{X', \Delta'}\{ [\partial_{\phi_1} f(\phi(\x), X')]^\top \cdot \Delta'\}.
\end{align*}
{To conclude it is enough to exhibit a choice of $\Delta$ such that $\alpha \neq 0$ and $\beta = 0$: this will indeed imply $E_{j,l} \in \linspan_{X', \Delta'}\{ [\partial_{\phi_1} f(\phi(\x), X')]^\top \cdot \Delta'\}$, hence the
the existence of}
$A \in \R^{\mathrm{dim} \times \mathrm{dim}}$ such that $\begin{pmatrix}
  E_{j, l } \\
  A
\end{pmatrix} \in  \linspan_{X', \Delta'}\{ \partial_{\phi} f(\phi(\x), X')^\top \cdot \Delta'\}$, and by~\eqref{eq:down} one will obtain  \eqref{eq:up} as claimed.

To ensure $\beta=0$ is it enough to have $\Delta \phi_2^\top e_j = e_i \perp e_k$ (possible as $k \neq i$), as well as $\Delta \phi_2^\top e_l  \perp e_k, e_i$ (possible as $l \neq j$). Such a choice also implies $\alpha = b_1$, which is indeed nonzero. % \neq 0$.

\textit{3d step: Conclusion.}
Finally by combining \eqref{eq:down} and \eqref{eq:up}, one obtains that 
$$
\underset{X \in \R^{N \times \mathrm{dim} }, \ \Delta \in \R^{N \times \mathrm{dim}}}{\linspan} \{ [\partial_\phi f(\phi(\x), X)]^\top \cdot \Delta \} = \R^{\mathrm{dim} \times (2 \mathrm{dim})},
$$
which concludes the proof.
\end{proof}

{Note that in the case when $N = 1$, then \eqref{eq:attentionlayer} writes
\eq{ \label{eq:caseN=1}
g(\x, x) = X V^\top O,
}
with $\x = (Q, K, V, O)$. In particular the parameters $Q$ and $K$ remain constant during \eqref{gradientflowwithout}, so $\x \mapsto Q$ and $\x \mapsto K$ are conservation laws with respect to any loss $\ell$. Moreover \eqref{eq:caseN=1} coincides with a $2$-layer linear network $\tilde{g} (\tilde{\x}, x)$ with $\tilde{\x} \coloneqq (V, O)$, whose exact independent conservation laws with respect to any loss $\ell$ such that $\mathcal{V}_\ell = \R^n$ have been studied in \Cref{prop:nbneurips}, and are thus obtained via the set of conservation laws $\tilde{\x} \mapsto VV^\top - O O^\top$.
}
\section{Proof of \Cref{coro:clattention,coro:multihead} {for attention layers}} \label{app:clattention}

\coroclattention*
\begin{proof}
We recall that
$\phi(\x) = ( Q^\top K, V^\top O)$ and we denote $\x_1 = (Q, K) \in \R^{d_1 \times \textrm{dim}} \times  \R^{d_1 \times \textrm{dim}} $ and $\x_2 = (V, O) \in \R^{d_1 \times \textrm{dim}} \times  \R^{d_1 \times \textrm{dim}}$ and $\psi: (U, V)   \in \R^{d_1 \times \textrm{dim}} \times  \R^{d_1 \times \textrm{dim}} \mapsto 
%Q^\top K
{U^\top V \in \R^{\mathrm{dim} \times \mathrm{dim}}}
$.
Let us define: $\psi^1 : \x \mapsto \psi (\x_1)$ and $\psi^2: \x \mapsto \psi(\x_2)$.
Then $\phi(\x) = (\psi^1(\x), \psi^2(\x))$, that is to say $\phi$ can be decoupled into two functions depending on two separate blocks of parameters. Its Jacobian and Hessian matrices are thus block-diagonal. 
{Denoting $\phi_i$, $\psi_i^1$, $\psi_i^2$ the coordinate functions of $\phi$, $\psi^1$, $\psi^2$, }{the Lie bracket  $[ \nabla \psi_i^1 (\cdot), \nabla \psi_j^2 (\cdot) ] \equiv 0$ thus vanishes for every $i = (i_1,i_2)$ and $j = (j_1,j_2) \in \{1,\ldots,\mathrm{dim}\}^2$ and as a consequence}
%one has:
%$[ \nabla \psi_i^1 (\x), \nabla \psi_j^2 (\x) ] = 0$. 
%
%Finally
\[
\lie (\linspan \{ \nabla \phi_k(\cdot):k\}
%\in \{ 1, \cdots, 2 \textrm{dim} \times \textrm{dim} \}
)
= \lie (\linspan \{ \nabla \psi_i^1(\cdot): i \}
%\in  \{ 1,  \cdots, \textrm{dim} \times \textrm{dim} \} ) 
\oplus
\lie (\linspan \{ \nabla \psi_i^2(\cdot):  i\}
%\in \{ 1,  \cdots, \textrm{dim} \times \textrm{dim} \}\} 
).
\]
Since $\LossSpace = \R^n$, by~\Cref{thm:oneattentionlayer} and \Cref{prop:3} we also have
\(
\Wfspace{g,\ell} = \linspan \{ \nabla \phi_i(\cdot):i \}, 
\)
and thus for any $\x$: 
\[
\vdim (\lie(\Wfspace{g,\ell}) (\x)) = \vdim  (\lie (\linspan \{ \nabla \psi_i^1(\cdot): i \} )(\x) ) + \vdim  (\lie (\linspan \{ \nabla \psi_i^2(\cdot): i \} )(\x)).
\]
Let us consider $\x$ such that both {horizontally concatenated} matrices $\begin{pmatrix}
    Q, K
  \end{pmatrix}$ and $\begin{pmatrix}
    V, O
  \end{pmatrix}$ have full rank.
  
{
  As $(Q, K)$ (resp $(V, O)$) has full rank, this remains locally the case. By Proposition 4.3 of \cite{marcotte2023abide}, the dimension of $\lie (\linspan \{ \nabla \psi_i^1(\cdot): i \} )(\x)$ (resp. of $\lie (\linspan \{ \nabla \psi_i^2(\cdot): i \} )(\x)$) is locally constant, denoted $d_1(\x)$ (resp. $d_2(\x)$). Then, by \Cref{coro:number}, the exact number of independent conservation laws is equal to $D - d_1(\x) - d_2(\x)$. Finally, by~\Cref{prop:nbneurips}, all conservation laws are obtained by the set $QQ^\top - K K^\top, \quad VV^\top - OO^\top$ of conservation laws.
  }
\end{proof}

\coromultihead*

\begin{proof}
We recall that $H$ is the number of attention heads and define $\x_h \coloneqq (Q_h, K_h, V_h, O_h)$ for any $1 \leq h \leq H$ as well as $\phi^h : \x \mapsto \phi(\x_h)$, where $\phi(\x_h) = (Q_h^\top K_h, V_h^\top O_h)$. Then, {we have using \Cref{prop:3}}:
\[
\Wfspace{g,\ell} \subseteq \linspan \{  \nabla \phi^h_j (\cdot) : h \in \{1, \cdots, H \}, j  \in \{ 1, \cdots, 2 \text{dim} \times \text{dim} \} \}, 
\]
where $(\phi^h_j)_j$ denote the coordinate functions of $\phi^h$. In particular by \Cref{coro:clattention} (we do not need to assume the non-degeneracy condition on $\x$ as in \Cref{coro:clattention} as we just use the direct inclusion via the reparametrization $\phi$), the functions $H_1^h : \x \mapsto Q_h Q_h^\top - K_h K_h^\top$ and  $H_2^h : \x \mapsto V_h V_h^\top - O_h O_h^\top$ {are conservation laws (for any $\ell$ by using \Cref{coro:clattention} and \Cref{prop:3})}, hence by \Cref{prop:newcharacter} they satisfy for all $\x \in \Theta$, for all $h, j$, $\nabla H_1^h (\x), \nabla H_2^h (\x) \perp \nabla \phi^h_j (\x)$ (as $H_1^h$ and $H_2^h$ only depend on $\x_h$). And thus $\nabla H_1^h (\x), \nabla H_2^h (\x) \perp \Wfspace{g,\ell} (\x) = \mathcal{W}_\x^g$. By  \Cref{prop:newcharacter}, we conclude that indeed $ H_1^h$ and  $H_2^h$ are conservation laws for \eqref{eq:attentionlayer_multihead} {with respect to any loss $\ell$}.
\end{proof}

\section{Proof of \Cref{prop:crossentropy} {about cross-entropy / classification layers}} \label{app:crossentropy}

\propcrossentropy*

\begin{proof}
Using \Cref{coro:criteria} and straightforward calculus involving the Jacobian of the softmax (that are left to the reader) we compute
\begin{align*}
    \Wfspace{g,\ell}(\theta) & =  \underset{x \in \R^m, w \in \LossSpace = \R^n }{\linspan} \{ \partial_\x g(\x, x)^\top  w \} = \underset{x \in \R^m, w \in \R^n: \sum_i w_i = 0 }{\linspan} \{ \mathtt{vec}(w x^\top) \} =  \{ \mathtt{vec}(Z): Z \in \mathbb{R}^{n \times m}: 1^\top Z = 0 \} .
\end{align*} 
%\Wfspace{g,\ell}(\theta) = \{ Z \in \mathbb{R}^{p \times d} \mid 1^\top Z = 0 \}$.
This space is independent of $\theta$, so that 
$\mathrm{Lie}(\Wfspace{g,\ell})(\theta) =  \Wfspace{g,\ell}(\theta)$.
By \Cref{coro:number}, since $\mathrm{dim}(\mathrm{Lie}(\Wfspace{g,\ell})(\theta)) = nm- m$, there are exactly $m$ independent conservation laws. One verifies directly that for the functions $h_j$ defined in Prop. \ref{prop:crossentropy}, we have 
$\langle \nabla h_j(\theta), Z \rangle = 0, \quad \forall Z \in  \Wfspace{g,\ell}(\theta)$.
\end{proof}

\section{Proof of \Cref{prop:caractprojloi} {on ``block'' conservation laws}} \label{app:caractprojloi}

\propcaractprojloi*

\begin{proof}

%Let us consider $H: (\x_T, \x_{T^c}) \in \Theta_T \times \Theta_{T^c} =: \Theta \mapsto h(\x_T)$ a function that only depends on $\x_T$.
By \Cref{coro:criteria} and \Cref{prop:newcharacter}, 
 $h$ is a conservation law of $g$ if, and only if, for any $\x = (\x_T, \x_{T^c}) \in \Theta$:
\eq{ \label{eq:begin}
 \nabla h(\x) \perp   \underset{x \in {\mathcal{X}}_\x, w \in \LossSpace }{\linspan} \{ \partial_\x g(\x, x)^\top  w \}.
}
Since $h$ only depends on $\x_T$, we have $\nabla h(\x) = \begin{pmatrix}
\nabla_{\x_T}h(\x)\\
\nabla_{\x_{T^c}}h(\x)
\end{pmatrix}= \begin{pmatrix}
\nabla_{\x_T}h(\x)\\
0
\end{pmatrix}$
hence the above orthogonality is equivalent to:
\eq{ \label{eq:bla}
 \nabla_{\x_T} h(\x) \perp   \underset{x \in {\mathcal{X}}_\x, w \in \LossSpace }{\linspan} \{ \partial_{\x_T} g(\x, x)^\top  w \},\quad \forall  \x \in \Theta
}
Given any $\x \in \Theta$ and any $\eta \in \Theta_{T^c}(\x_T)$, the vector $\x' := (\x_T,\eta) \in \Theta$ also satisfies $h(\x') = h(\x)$ and $\nabla_{\theta_T}h(\x') = \nabla_{\theta_T}h(\x)$, hence the orthogonality condition
\eqref{eq:bla} also holds at $\x'$. As a result, 
\eq{ \label{eq:blabla}
 \nabla_{\x_T} h(\x) 
   \perp   \underset{\eta \in \Theta_{T^c}(\x_T)}{\linspan}\  \underset{(x,w) \in {\mathcal{X}}_{(\x_T,\eta)}\times \LossSpace }{\linspan} \{ \partial_{\x_T} g((\x_T,\eta), x)^\top  w \} =: R_{\x_T} (\Wfspace{g,\ell}).
}
Conversely, if $h$ satisfies \eqref{eq:blabla}, then 
$h$ also satisfies \eqref{eq:bla}, which is equivalent to \eqref{eq:begin} and implies that $h$ is a conservation law of $g$ with respect to $\ell$.
\end{proof}

\section{About the invariances of the neural network}

\subsection{Conservation laws and invariances of the shallow case} \label{app:invshallow}

\begin{definition}[Invariant transformation on the cost \eqref{eq:erm}] 
A (one-parameter) transformation on an open set $\ouv \subseteq \Theta \subseteq \RD$ is a map $T: \R \times \ouv \to \RD$ such that $T(\cdot,\x)$ is differentiable for each $\x \in \ouv$ and $T(0,\cdot) = \id$. This transformation
leaves invariant the cost \eqref{eq:erm} if for all $\x \in \ouv$ and for all $\epsilon \geq 0$, $\Loss_Z(T(\epsilon,\x)) = \Loss_Z(T(0,\x)) = \Loss_Z(\x)$. When this holds, simple calculus yields 
for every $\x \in \ouv$:
\eq{ \label{eq:invarianceloss}
\Big\langle \nabla \Loss_Z(\x), \frac{\partial}{\partial \epsilon} T(\epsilon,\x) \Big|_{\epsilon =0} \Big\rangle= 0.
}
\end{definition}
We denote $\Delta_T (\cdot) \coloneqq \frac{\partial}{\partial \epsilon} T(\epsilon,\cdot) \Big|_{\epsilon =0}$.

{\em From loss invariance to conservation laws.} 
In the context of a gradient flow dynamic \eqref{gradientflowwithout}, %the consequence \eqref{eq:invarianceloss} 
implies %of invariance rewrites as 
\eq{ \label{eq:invarianceandGF}
\langle   \dot \x(t), \Delta_T (\x(t)) \rangle = 0.
}

\begin{definition} [conservation law obtained via an invariance] \label{def:law_inv}
In particular, in light of \Cref{prop:newcharacter}, if a function $h \in \mathcal{C}^1(\Omega, \R)$ is such that for all $\theta$ one has $ \nabla h(\x) := \Delta_T(\x)$, then $h$ is a conservation law. We say in that case that $h$ is a conservation law of $g$ obtained via the invariant transformation $T$.
\end{definition}

\paragraph{The case of a $2$-layer linear network}% 
Consider $\theta \coloneqq (U, V) \in \R^{n\times r} \times \R^{m\times r}$ and define $T^A$ as the linear transformation %where for all $\epsilon \geq 0$: 
\eq{ \label{linear_transf}
T^A(\epsilon,U, V) =
   \big(  U  \exp(\epsilon A), V \exp(-\epsilon A^\top)\big). 
}
Simple calculus yields $\Delta_{T^A }(U, V) =  (U A, -V A^\top)$. Considering $g(\x, x) \coloneqq U V^\top x$ and any $A \in \R^{r\times r}$, $T^A$ is an invariant transformation on \eqref{eq:erm}: for all $\epsilon$ and $\x$, 
$g(T^A(\epsilon,\x) , \cdot) = g(\x, \cdot)$ hence $\Loss_Z(T^A(\epsilon,\x)) = \Loss_Z(\x)$.
As a particular consequence, for gradient flows with linear networks, since 
\eqref{eq:invarianceandGF}
holds for $T=T^A$ with {\em any} matrix $A \in \R^{r \times r}$, denoting $\langle M,N\rangle \coloneqq \mathtt{Tr}(M^\top N)$ we obtain $0 = \langle \dot U(t), U(t) A\rangle - \langle \dot V(t), V(t) A \rangle$ at each time and for any $A$. Specializing this to any \textit{symmetric} matrix, we obtain that $0 = \langle \dot U, U A\rangle - \langle \dot V, V A \rangle = 1/2 \frac{\mathrm{d}}{\mathrm{d}t} (\langle U, UA \rangle - \langle V, VA \rangle)$. Thus for every symmetric matrix $A$, $\langle U, UA \rangle - \langle V, VA \rangle$ is conserved, which coincides with all conservation laws in that case (cf \Cref{prop:nbneurips}).

\paragraph{The case of a $2$-layer ReLU network} 
Consider $ \x \coloneqq (U, V, v) \in \R^{n\times r}\times  \R^{m \times r} \times \R^{1 \times r}$ and define $T^A$ as the linear transformation 
%where for all $\epsilon \geq 0$:
\[
T^A(\epsilon,U, V, b) \coloneqq
   \Big(  U  \exp(\epsilon A), V \exp(-\epsilon A^\top), b \exp(-\epsilon A^\top )\Big).
\]
Considering $g(\x, x) \coloneqq U \sigma (V^\top x + b^\top)$ and any diagonal matrix $A \in \R^{r \times r}$, $T^A$ is a linear transformation that leaves  \eqref{eq:erm} invariant.
 Moreover, 
   $\Delta_{T^A }(U, V, b) =  (U A, -V A^\top, - b A^\top)= (UA, -VA, -b A)$, as diagonal matrices are symmetric.
By restricting ourselves to {\em elementary diagonal matrices} $A = E_{i, i}$ where $E_{i, i}$ is the one-hot matrix in $\R^{r \times r}$ with the $(i, i)$-th entry being $1$, $i =1, \cdots, r$, we obtain that for all $i$,  $0 = \langle \dot u_i(t), u_i(t)  \rangle - \langle \dot v_i(t), v_i(t)  \rangle -   \dot b_i(t) b_i(t) = \tfrac{1}{2} \frac{\mathrm{d}}{\mathrm{d}t} (\langle u_i, u_i \rangle - \langle v_i, v_i\rangle - b_i^2)$. Thus for all $i$, $\| u_i \|^2 - \|v_i \|^2 - b_i^2$ is conserved, recovering all conservation laws  (cf \Cref{prop:nbneurips}).

\paragraph{The case of a $2$-layer ReLU convolutive network}
Consider $\x \coloneqq ((u_{k, j})_{k, j}, (v_{j, i})_{j, i})$ the collection of all filters and denote $G(\x, x)$ the convolutive $2$-layer ReLU neural network defined by \eqref{eq:convolutiveNN_multicanal_app0}. As explained in \eqref{eq:linkrelupasrelu}, we express \eqref{eq:convolutiveNN_multicanal_app0} as a $2$-layer ReLU neural network \eqref{eq:2layerNN} as follows.
Let us write for $j = 1, \cdots, c_1$, 
$u^{(j)} = \text{vec} ((u_{k,j})_{k}) \in \R^{n_u c_2}$ and $v^{(j)} = \text{vec} ((v_{j,i})_{i}) \in \R^{n_v c_0}$.
For any $x \in \R^{m}$ one has:
$$
G \left(\x, x \right) =\underbrace{ \begin{pmatrix}
  \overline{C}_1(u^{(1)}) & \cdots & \overline{C}_1(u^{(c_1)})
\end{pmatrix}}_{=: \overline{U} } \sigma ( { \underbrace{\begin{pmatrix}
 \overline{C}_2(v^{(1)}) & \cdots & \overline{C}_2(v^{(c_1)})
\end{pmatrix}}_{=: \overline{V}}}^\top x), 
$$
where $\overline{C}_1$ (resp.$\overline{C}_2$) is a linear operator defined in \eqref{eq:overlineC1} (resp. \eqref{eq:overlineC2}).
We rewrite $\x = (U, V)$, where $U \coloneqq \begin{pmatrix}
  u^{(1)} &\cdots & u^{(c_1)}
\end{pmatrix}$ and $V \coloneqq \begin{pmatrix}
  v^{(1)} &\cdots & v^{(c_1)}
\end{pmatrix}$ .
For any $j = 1, \cdots, c_1$, we define $T_j$ the linear transformation %where for all $\epsilon \geq 0$:
\[
T_j ( \epsilon, U, V) \coloneqq   \Big(  U  \exp(\epsilon E_{j, j}), V \exp(-\epsilon {E_{j, j}}^\top)\Big).
\]
The transformation $T_j$ leaves\eqref{eq:erm} invariant: for all $\epsilon$ and $\x$, 
$g(T_j(\epsilon,\x) , \cdot) = g(\x, \cdot)$ as  $\overline{C}_1$ and $\overline{C}_2$ are linear operators, hence $\Loss_Z(T_j(\epsilon,\x)) = \Loss_Z(\x)$.
 Moreover, 
   $\Delta_{T_j }(U, V) =  (U E_{j, j}, -V E_{j, j})$ and thus \eqref{eq:invarianceandGF} writes:
    $0 = \langle \dot u^{(j)}(t), u^{(j)}(t)  \rangle - \langle \dot v^{(j)}(t), v^{(j)}(t)  \rangle  = \tfrac{1}{2} \frac{\mathrm{d}}{\mathrm{d}t} (\langle u^{(j)}, u^{(j)} \rangle - \langle v^{(j)}, u^{(j)} \rangle )$. Thus for all $j$, $\langle u^{(j)}, u^{(j)} \rangle - \langle v^{(j)}, u^{(j)} \rangle $ is conserved, recovering all conservation laws (cf \Cref{thm:convolutivelaws}).

\paragraph{The case of an attention-layer} We treat the case of the network \eqref{eq:attentionlayer} in the exact same way as for a $2$-layer linear network by considering for any symmetric matrix $A \in \R^{d_1 \times d_1}$ the linear transformations $T^A$ and $T'^A$ defined by
\[
T^A(\epsilon, Q, K, V, O) =
   \big(  \exp(\epsilon A) Q, \exp(-\epsilon A^\top) K, V, O)\big), 
 \]
 and 
 \[
T'^A(\epsilon, Q, K, V, O) =
   \big(     Q,  K,  \exp(\epsilon A) V, \exp(-\epsilon A^\top) O)\big),
 \]
 and that leave \eqref{eq:erm} invariant. We conclude in the exact same way as for the $2$-layer linear neural network case and we obtain that for every symmetric matrix $A$, $\langle Q, A Q \rangle - \langle K, A K \rangle$ and  $\langle V, A V \rangle - \langle O, A O \rangle$ are conserved, which coincides with all conservation laws in that case (cf \Cref{coro:clattention}).

 \subsection{A conservation law of a sub-network obtained via an invariance gives a conservation law of the global network}
 
 \begin{proposition} \label{prop:inclusionCL}
{Assume that $\Theta = \Theta_1 \times \Theta_2 \times \Theta_3$ and that for every}
     %Given $\x \in \Theta$ let us write 
     $\x = (\x_1, \x_2, \x_3) \in %\Theta_1 \times \Theta_2 \times \Theta_3 = 
     \Theta$
     %. We assume that
     we can write $g (\x, x) = g_1 (\x_1, g_2 (\x_2, g_3(\x_3, x)))$. 
     If there is a conservation law $h: \Theta_2 \mapsto \R$ of $g_2$ obtained via an invariance of $g_2$ {(cf~\Cref{def:law_inv})}, then $H: \x \in \Theta \mapsto h(\x_2)$ is a conservation law of the global neural network $g$. 
 \end{proposition}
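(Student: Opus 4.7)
The idea is simply to lift the invariant transformation that produces $h$ to an invariant transformation on the full parameter space $\Theta$, and then to check that the lifted infinitesimal generator coincides with $\nabla H$, so that $H$ is a conservation law of $g$ obtained via this lifted invariance in the sense of \Cref{def:law_inv}.

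Concretely, by hypothesis there is an invariant transformation $T: \R \times \Omega_2 \to \Theta_2$ of (the cost associated to) $g_2$, with $\nabla h(\theta_2) = \Delta_T(\theta_2)$ for all $\theta_2 \in \Omega_2$. In the cases where such $h$ arise (cf.\ the examples in \Cref{app:invshallow}), $T$ in fact satisfies the stronger pointwise invariance $g_2(T(\epsilon,\theta_2),\cdot) = g_2(\theta_2,\cdot)$, which is what we will use. I then define the lifted transformation
\[
\tilde T\bigl(\epsilon,(\theta_1,\theta_2,\theta_3)\bigr) \coloneqq \bigl(\theta_1,\,T(\epsilon,\theta_2),\,\theta_3\bigr)
\]
on $\Theta_1 \times \Omega_2 \times \Theta_3 \subseteq \Theta$. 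Plugging into the factorization $g(\theta,x) = g_1(\theta_1, g_2(\theta_2, g_3(\theta_3,x)))$ and using the $g_2$-invariance gives
\[
g\bigl(\tilde T(\epsilon,\theta),x\bigr) = g_1\bigl(\theta_1, g_2(T(\epsilon,\theta_2), g_3(\theta_3,x))\bigr) = g_1\bigl(\theta_1, g_2(\theta_2, g_3(\theta_3,x))\bigr) = g(\theta,x),
\]
so $\tilde T$ leaves $g(\cdot,x)$ and hence the cost $L_Z$ invariant for every dataset $Z$.

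Finally I compute the infinitesimal generator of $\tilde T$ at $\epsilon=0$:
\[
\Delta_{\tilde T}(\theta) = \frac{\partial}{\partial \epsilon}\tilde T(\epsilon,\theta)\Big|_{\epsilon=0} = \bigl(0,\,\Delta_T(\theta_2),\,0\bigr).
\]
On the other hand, since $H(\theta) = h(\theta_2)$ depends only on the $\theta_2$-block,
\[
\nabla H(\theta) = \bigl(0,\,\nabla h(\theta_2),\,0\bigr) = \bigl(0,\,\Delta_T(\theta_2),\,0\bigr) = \Delta_{\tilde T}(\theta).
\]
By \Cref{def:law_inv} applied to the invariant transformation $\tilde T$ on $\Theta$, this exactly says that $H$ is a conservation law of $g$ obtained via $\tilde T$, as claimed.

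The only subtle point is the precise reading of ``conservation law obtained via an invariance'': one must note that the definition hinges on the identity $\nabla h = \Delta_T$ together with invariance of the cost, both of which are preserved under the trivial lift acting as the identity on $\theta_1$ and $\theta_3$. Once that bookkeeping is in place the proof reduces to the one-line computation above.
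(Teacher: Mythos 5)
Your proposal is correct and follows essentially the same route as the paper's proof: lift $T$ to $\tilde T(\epsilon,\theta)=(\theta_1,T(\epsilon,\theta_2),\theta_3)$, observe that the factorization makes $\tilde T$ an invariance of the global $g$ (hence of the cost), and identify $\Delta_{\tilde T}(\theta)=(0,\Delta_T(\theta_2),0)=\nabla H(\theta)$. The only cosmetic difference is that the paper finishes by explicitly differentiating $H$ along a gradient-flow trajectory, whereas you invoke \Cref{def:law_inv} directly; these are equivalent.
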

 
 \begin{proof}
 
We assume there exists  a transformation $T : (\epsilon, \x_2) \in {\R} %\R_+ 
\times \Theta_2 \mapsto T(\epsilon, \x_2)$ that leaves $g_2$ invariant, i.e. %for all $\epsilon \geq 0$:
 \[
g_2( T(\epsilon, \x_2), \cdot) = g_2( T(0, \x_2), \cdot),\ \forall \epsilon, \x_2.
 \]

Let
$h: \x_2 \in \Theta_2 \mapsto h(\x_2) \in \R$ be a conservation law of $g_2$  obtained via the transformation $T$. This means that {(cf \Cref{def:law_inv})}
\eq{ \label{eq:inv1}
\nabla h (\x_2) = \Delta_T (\x_2), \quad \forall \x_2 \in \Theta_2.
}

We now define $\tilde{T}: (\epsilon, \x_1, \x_2, \x_3) \in \R_+ \times \Theta \mapsto (\x_1, T(\epsilon, \x_2), \x_3)$. As $T$ leaves invariant $g_2$, $\tilde{T}$ is a transformation that leaves the global neural network $g$ invariant: for all $\epsilon
{\in \R}
%\geq 0
$ one has $g( T(\epsilon, \x), x) = g (T(0, \x), x)$. Thus, $\tilde{T}$ leaves \eqref{eq:erm} invariant too. Moreover by %one has by using
\eqref{eq:inv1} we have for any $\x \in \Theta$: 
\eq{ \label{eq:inv2}
\Delta_{\tilde{T}} (\x) =  \begin{pmatrix}
  0 \\
  \Delta_T (\x_2) \\
  0
\end{pmatrix} = \begin{pmatrix}
  0 \\
  \nabla h (\x_2) \\
  0
\end{pmatrix}.
}
 
Let us show that $H : (\x_1, \x_2, \x_3) \in \Theta_1 \times \Theta_2 \times \Theta_3 \mapsto h(\x_2)$ is a conservation law of the global network $g$. Let us $\x(t)$ a solution of \eqref{gradientflowwithout}.
Then for all $t$ such that it holds:
\begin{align*}
\frac{\mathrm{d}}{\mathrm{d}t} ( H(\x (t)) &= \langle \dot \x (t), \nabla H (\x (t) ) \rangle \\
&\stackrel{\eqref{gradientflowwithout}}{=} \langle \nabla \Loss_Z(\x(t)), \nabla H(\x(t) \rangle \\
&= \left\langle \nabla \Loss_Z(\x(t)), \begin{pmatrix}
  0 \\ 
  \nabla h (\x_2(t))  \\
  0
\end{pmatrix}\right\rangle \\
&\stackrel{\eqref{eq:inv2}}{=}\langle \nabla \Loss_Z(\x(t)), \Delta_{\tilde{T}} (\x(t)) \rangle = 0,
\end{align*}
as $\tilde{T}$ leaves \eqref{eq:erm} invariant.
\end{proof}

\section{Proof of \Cref{lemma:density} {on the density of $\mathcal{X}_\theta$}} \label{app:density}
{Remember that we consider a deep network $g(\theta,x)$ composed of residual blocks denoted $g^l_{\theta_l}(x)$, corresponding either to a block of a convolutive ResNet \eqref{eq:resblocresnet}, a residual MLP \eqref{eq:mlpbloc} or an attention layer~\eqref{eq:attbloc}, see \Cref{sec:blockres} for the notations.}

\density* 

In particular, for an attention layer \eqref{eq:attbloc},  \Cref{lemma:density} only requires the first assumption \Cref{as:open}.
 \begin{proof}
 {First we show that, without loss of generality, it is sufficient to prove the result in the absence of a last softmax layer \eqref{eq:functionforcross}. This is a simple consequence of the fact that, in the presence of such a layer, since} $g_{\x_{q+1}}^{q+1} {: z \mapsto \mathrm{softmax}(\theta_{q+1}z)}$ is infinitely smooth everywhere with respect to $\x_{q+1} {\in \R^{n \times m}}$, 
{the set $\mathcal{X}_\theta$ (which we recall is defined as the collection of all data points $x \in \R^m$ such that $\theta \mapsto g(\theta,x)$ is $C^2$ in a neighborhood of $\theta$, cf \Cref{prop:newcharacter}) for the model $ g_{\x_{q+1}}^{q+1} \circ g_{\theta_q}^q \circ \cdots \circ g_{\theta_1}^1$ is the same as 
 with the ``truncated'' }
%it is enough to show the result for the 
neural network %$g_{\theta}$  defined by: 
  $
g_{\theta}(x) = g_{\theta_q}^q \circ \cdots \circ g_{\theta_1}^1(x).
$

To prove the result without a softmax layer, we define $G^l_\theta(x) = g_{\theta_l}^l \circ \cdots \circ g_{\theta_1}^1(x)$ the output after $l$ blocks {with the convention $G^0_\theta(x)=x$}.

Our goal is to show that $\overline{\Xspace} = \R^m$, and we proceed by contradiction, assuming that there exists $x_0$ and $r > 0$ such that for any $x \in B (x_0, r)$, {there is no neighborhood of $\x$ in which} $\x' \mapsto g_{\x'} (x)$ is %not
$\mathcal{C}^2$. % in a neighborhood of $\x$.

{Since each residual layer $g^l_{\theta_l}$ is either defined as a convolutive 2-layer ReLU network \eqref{eq:resblocresnet}, a dense 2-layer ReLU network \eqref{eq:mlpbloc}, or an attention layer \eqref{eq:attbloc}, a straightforward induction on $q$ shows that given any $x$, the function $\theta \mapsto g_\theta(x)$ is infinitely smooth in a neighborhood of $\theta$ unless there is at least one residual layer $1 \leq l \leq q$ such that $\theta^l \mapsto g^l_{\theta_l}(G^{l-1}_\theta(x))$ fails to be infinitely smooth. Since attention layers \eqref{eq:attbloc} are always infinitely smooth, the lack of smoothness can only come from a $2$-layer ReLU residual block, convolutive or not. The lack of smoothness implies that the pre-activation of at least one of the hidden neurons of this layer must be zero, i.e.,  $\langle V_l [j, : ], G^{l-1} (x) \rangle = 0$} and {where we denote $V_l$ either the matrix $V^l$ from \eqref{eq:resblocresnet} or  the matrix $U_l'$ from \eqref{eq:mlpbloc}. Similarly we denote $U_l$ either the matrix $U^l$ from \eqref{eq:resblocresnet} or the matrix $U_l$ from \eqref{eq:mlpbloc}.}

Thus, for any $x \in   B (x_0, r)$, there exists $1 {\leq} %<
l  \leq q$ such that $g_{\x_{l}}^{l}: x' \mapsto x' + U_l \sigma (V_l x')$ corresponds to the $2$-layer ReLU network \eqref{eq:resblocresnet} in the convolutive ResNet case (resp. \eqref{eq:mlpbloc} in the Transformer case), and such that there exists $j$ satisfying $ \langle V_l [j, : ], G^{l-1} (x) \rangle = 0$.

{In other words, we have proved that}
%Thus: 
$ B (x_0, r) \subseteq \underset{l ,j}{\bigcup} \mathcal{H}_{l, j}$ %(union a finite number of sets), 
where each {of the finitely many sets}
$$
\mathcal{H}_{l, j} \coloneqq \{ x' \in \R^n :  \langle V_l [j, : ], G^{l-1} (x') \rangle = 0  \}
$$ 
is a closed set of empty interior, {as we now show}. Indeed $\mathcal{H}_{l, j}$ is closed as the reciprocal image of a closed set ($\{ 0 \}$) by a continuous function. \text{To} %Then let us
show that $\mathcal{H}_{l, j}$ is of empty interior {we proceed by contradiction, assuming} 
%. By contradiction let us assume
that there exists a non-empty open set $B$ such that any $x' \in B $ satisfies $G^{l-1} (x') \in  
\left(\R  V_{l}[j,:]\right)^\perp  \subsetneq \R^m$, as $V_{l}[j,:] \neq 0$ by \Cref{as:rownonnull} ($\x \in \Theta$). But as 
$x'   \in B \mapsto G^{l-1} (x') = g_{\theta_{l-1}}^{l-1} \circ \cdots \circ g_{\theta_1}^1(x)$ is open as composition of open maps (thanks to \Cref{as:open}) and as $B$ is open, the set  $G^{l-1} (B)$ is an open set in $\R^m$: this is absurd as $G^{l-1}(B) \subset \left(\R  V_{l}[j,:]\right)^\perp  \subsetneq \R^m$.

{Overall, }%Thus
as $B(x_0, r)$ is covered by a finite %reunion 
{union} of closed sets with empty interior, by Baire's theorem  $B(x_0, r)$ also has empty interior: this is absurd, which concludes the proof. 
 \end{proof}

\section{Discussion on the genericity of \Cref{as:rownonnull} in the assumptions of \Cref{lemma:density}} \label{app:generic}

We discuss here why \Cref{as:rownonnull} is a generic condition on the set of parameters, for the example of a residual block associated with a $2$-layer ReLU network:
:
$$
g_{\x_l}^l: x \mapsto x + U_l \sigma (V_lx),
$$
where $U_l$ and $V_l$ are defined via $\x_l$ as in \eqref{eq:resblocresnet} or \eqref{eq:mlpbloc}.
%Let us
\RG{Since $g^l_{\theta_l}$ is a continuous, piecewise linear, and homogeneous function of $x$, we can}
write $g_{\x_l}^l(x) = J_l(x) \times x$, where $J_l(x)$ represents the Jacobian matrix given by: $J_l(x) = U_{l} D_l(x)V_{l}$.
Here, $D_l(x)$ denotes a diagonal matrix whose entries are binary values (either 0 or 1) \RG{depending on the activation of the neurons at $x$}. 
We denote by $\mathcal{D}$ the set of all such diagonal matrices. By construction, $\mathcal{D}$ is finite.
Thus the Jacobian $J_l(x)$ belongs to the finite set 
$$
\mathcal{J}(U_{l}, V_{l}) \coloneqq 
\{ I_m +U_{l} D V_{l}: D \in \mathcal{D} \}.
$$
If each element of this set is invertible (i.e. $\mathcal{J}(U_{l}, V_{l}) \subseteq GL_m (\R)$, a generic condition), then $g_{\x_l}^l$ is an open map.

\section{Finding parameters for simplified neural network forms}
\begin{lemma} \label{lemma:identity}
Let us consider consider some $1 \leq l \leq q$ and the associated $\Theta_l$ defined in \Cref{lemma:density} and $g_{\theta_l}^l$ that can be either an attention layer with a skip connection \eqref{eq:attbloc}, or a (convolutive \eqref{eq:resblocresnet} or not  \eqref{eq:mlpbloc}) MLP. There exists $\theta_l \in \Theta_l$ such that for any $x \in \R^m$ one has $g_{\theta_l}^l(x) = x$.
\end{lemma}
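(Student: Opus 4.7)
The plan is to exhibit, in each of the three block types (convolutive ResNet block \eqref{eq:resblocresnet}, Transformer MLP block \eqref{eq:mlpbloc}, and Transformer attention block \eqref{eq:attbloc}), an explicit choice of parameters $\theta_l \in \Theta_l$ for which $g^l_{\theta_l}$ reduces to the identity map on $\R^m$. Since the identity is trivially an open map, \Cref{as:open} will then be automatic in every case, leaving only \Cref{as:rownonnull} to verify where it actually applies.

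First, for the convolutive ResNet block \eqref{eq:resblocresnet}, I would set every outer filter $u^l_{k,j}$ to zero, so that $U^l = 0$, while choosing the inner filters $v^l_{j,i}$ to be any nonzero vector (for instance $e_1 \in \R^{n_v}$). Each row of $V^l$ is then a circular shift of a nonzero filter and hence nonzero, so \Cref{as:rownonnull} holds; meanwhile $g^l_{\theta_l}(x) = x + U^l \sigma(V^l x) = x$. The Transformer MLP block \eqref{eq:mlpbloc} admits an analogous treatment: take $U_l = 0$ and $U_l'$ any matrix with nonzero rows, yielding $g^l_{\theta_l}(x) = \text{vec}(X^\top) = x$ under the row-by-row reshaping convention relating $x$ and $X$.

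For the Transformer attention block \eqref{eq:attbloc} the row condition does not apply, so it suffices to kill the product $V_l^\top O_l$; the simplest choice is $V_l = 0$ with $Q_l, K_l, O_l$ arbitrary. Then the softmax contribution vanishes identically and $g^l_{\theta_l}(x) = \text{vec}(X^\top) = x$. I do not foresee any serious obstacle: the content of the lemma is essentially that $\Theta_l$ contains ``identity-realizing'' parameters, a fact that will presumably be used downstream to reduce global arguments to isolated blocks by neutralizing all other layers, and the only thing to be mildly careful about is respecting \Cref{as:rownonnull} while still zeroing out the appropriate outer factor — which is handled in each case by zeroing the \emph{second} matrix ($U^l$ or $U_l$) rather than the first.
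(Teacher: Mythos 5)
Your proposal is correct and follows essentially the same route as the paper: zero out the outer matrix ($U^l$ or $U_l$) while keeping the inner one with nonzero rows for the ReLU blocks, and kill the $V_l^\top O_l$ product for the attention block (the paper simply sets all four attention matrices to zero, a trivially equivalent choice). The verification that the resulting identity map is open, so that \Cref{as:open} holds and $\theta_l \in \Theta_l$, matches the paper's argument as well.
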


\begin{proof}
\textbf{1st case: $g_{\theta_l}^l$ is defined by \eqref{eq:resblocresnet} (resp. \eqref{eq:mlpbloc}).} 
Then by considering $V^l$ (resp. $U_l'$) such that all its rows are nonzero (so that \Cref{as:rownonnull} is satisfied) (it is possible by taking $v_{j, i}^{(l)}$ non equal to zero as the matrices $Q_i$ are injective), and by considering $U^l = 0$ by taking all $u_{k, j}^{(l)}$ equal to zero (resp. $U_l = 0$), then for any $x \in \R^m$, $g_{\theta_l}^l(x) =x$ (and in particular $g_{\theta_l}^l$ is open, so \Cref{as:open} is satisfied and thus $\x_l = (U^l, V^l)$ (resp. $\x_l = (U_l, U_l')$ is in $\Theta_l$).

\textbf{2d case: $g_{\theta_l}^l$ is defined by \eqref{eq:attbloc}}. Then by considering $\x_l = (Q_l, K_l, V_l, O_l) = (0, 0, 0, 0)$ one has for any $x \in \R^m$,  $g_{\theta_l}^l(x) = x$, and so $g_{\theta_l}^l$ is open, implying that $\x_l \in \Theta_l$.
\end{proof}

\section{Proof of \Cref{thm:blocksharing} {on block laws for natural residual blocks}} \label{app:blocksharing}   

\thmblocksharing*

\begin{proof}

{\bf Case $l = q+1$.} First we treat the case where $\x_T \coloneqq \x_{q+1}$. In this case, one has for any $\x \in \Theta$ and for any $x \in \R^m$:
$$
g (\x , x) = \mathrm{softmax} ( \x_{q+1} ( g_{\theta_q}^q \circ \cdots \circ g_{\theta_1}^1(x)) = g_{\theta_{q+1}}^{q+1} ( g_{\theta_q}^q \circ \cdots \circ g_{\theta_1}^1(x)).
$$
 
As a consequence 
\eq{ \label{eq:justif}
\partial_{\x_T} g (\x , x) = \partial_{\x_{{q+1}}} g^{q+1} (\theta_{q+1} , g_{\theta_q}^q \circ \cdots \circ g_{\theta_1}^1(x)),
{\quad \forall \theta \in \Theta,\quad \forall x \in \mathcal{X}_{\x_{q+1}} \coloneqq \R^m}.
}
Then by using \Cref{lemma:identity}, one can choose $\theta_{T^c} \in \Theta_{T^c}$ such that $g_{\theta_q}^q \circ \cdots \circ g_{\theta_1}^1 (x) = x$ for any $x \in \R^m$, and thus one obtains
$\R^m = \{ g_{\theta_q}^q \circ \cdots \circ g_{\theta_1}^1(x): \theta_{T^c} \in \Theta_{T^c}, x \in \R^m\}$. Therefore one has:
\eq{ \label{eq:justif2}
\underset{x \in \R^m}{\linspan}  \{  \partial_{\x_{q+1}} g^{q+1}(\x_{q+1}, x)^\top \} =
\underset{\x_{T^c} \in \Theta_{T^c}, x \in \R^m}{\linspan}  \{  \partial_{\x_{{T}}} g(\x, x)^\top \}.
}
{Since we assume $\mathcal{V}_\ell=\R^n$,} one has 

\begin{align*}
\mathcal{W}^{g^{q+1},\ell}_{\x_{q+1}} \stackrel{\text{\Cref{coro:criteria}}}{\coloneqq}
%\underset{w \in \R^n }{\linspan} 
\underset{x \in \mathcal{X}_{\x_{q+1}},\ w \in \R^n}{\linspan}  \{  \partial_{\x_{q+1}} g^{q+1}(\x_{q+1}, x)^\top   w \} 
& = %\underset{w \in \R^n }{\linspan} 
\underset{x \in \R^m, w \in \R^n}{\linspan}  \{  \partial_{\x_{q+1}} g^{q+1}(\x_{q+1}, x)^\top   w \}\\
%     % &\subseteq 
&\stackrel{\eqref{eq:justif2}}{=} \underset{\x_{T^c} \in \Theta_{T^c}, w \in \R^n}{\linspan} 
{ \underset{x \in \R^m}{\linspan} \{\partial_{\x_T} g^\top (\x, x) w \}  } 
\\
& \stackrel{\text{\Cref{lemma:density} }}{=} \underset{\x_{T^c} \in \Theta_{T^c}, w \in \R^n}{\linspan} 
\overline{ \underset{x \in \Xspace}{\linspan} \{\partial_{\x_T} g^\top (\x, x) w \}  } 
\\
&= \underset{\x_{T^c} \in \Theta_{T^c}, w \in \R^n}{\linspan} \underset{x \in \Xspace}{\linspan} \{\partial_{\x_T} g (\x, x)^\top w \} =: R_{\x_T} (\Wfspace{g,\ell}),
\end{align*}
{where we recall that $R_{\theta_T}(\Wfspace{g,\ell})$ is defined in \Cref{prop:caractprojloi}.
}
This concludes the proof in the case $\x_T \coloneqq \x_{q+1}$.

{\bf Case $1 \leq l \leq q$.} We now assume that $\x_T \coloneqq \x_{l}$ for some $l = 1, \cdots, q$.
Let us consider $h(\x_l)$ a conservation law for {$g^l$}. %$g_{\x_l}^l$. 
Then as all conservation laws of {$g^l$} %$g_{\x_l}^l$ 
are obtained via the {rescaling} invariances of {$g^l$} % $g_{\x_l}^l$ 
as discussed in \Cref{app:invshallow} and by using \Cref{prop:skipconnCL}, the function $H: \x \in \Theta \mapsto h(\x_l)$ is a conservation law of $g$ by \Cref{prop:inclusionCL}.

We now show the converse %inclusion. 
{result: considering a function $H: \theta \in \Theta \mapsto h(\theta_l)$ that is a conservation law of $g$ we wish to show that $h$ is a conservation law of $g^l$.}

\textbf{A. We first consider the case without a last block \eqref{eq:functionforcross}, ie the case where:
}
\eq{ \label{eq:firstnetwork}
{g(\theta,x) = }
%g_{\theta}: x \in \R^m \mapsto
g_{\theta_q}^q \circ \cdots \circ g_{\theta_1}^1(x)  \in \R^m.
}
{It will be convenient to rewrite this as the composition of three functions depending on three separate blocks of coordinates of the variable $\x = (\tilde{\x}_1,\tilde{\x}_2,\tilde{\x}_3) \in \tilde{\Theta}_1 \times \tilde{\Theta}_2 \times \tilde{\Theta}_3$ (with $\tilde{\Theta}_2 \coloneqq \Theta_l$, and $\tilde{\Theta}_1, \tilde{\Theta}_3$ given by Cartesian products of adequate $\Theta_j$, $j \neq l$), the middle one corresponding %(with a slight abuse of notation regarding the index) 
to $\tilde{\x}_2 = \x_l$:}
\eq{ \label{eq:comp}
g(\x; x) = g_1(\tilde{\x}_1; g_2(\tilde{\x}_2; g_3(\tilde{\x}_3;x))).
}
If {$l=q$ (resp if $l=1$)} %$\x_2 \coloneqq \x_l$
corresponds to the parameters of the last (resp. first) block, then we directly can write $g(\x; x) = g_2(\tilde{\x}_2; g_3(\tilde{\x}_3;x))$ (resp. $g(\x; x) = g_1(g_2(\tilde{\x}_2; x))$), but we will use the formulation \eqref{eq:comp} in all cases informally by simplicity. 

With these notations our goal is to show that a conservation law $H$ for $g$ in~\eqref{eq:comp} that only depends on $\tilde{\x}_2$ is a conservation law for ${g}_2$. 
{A technical step, that we slightly postpone, is to prove that 
there are $\tilde{\x}_1,\tilde{\x}_3 \in \tilde{\Theta}_1 \times \tilde{\Theta}_3$ such that for every $\tilde{\x}_2 \in \tilde{\Theta}_2$, }
    \eq{ \label{eq:firststep4}
g(\x, x) = g_2(\tilde{\x}_2, x), \quad \forall x \in \R^m
}
{with $\theta := (\tilde{\theta}_1,\tilde{\theta}_2,\tilde{\theta}_3)$.}
Using this result we proceed using a density argument.
{Denoting $\mathcal{X}_{\tilde{\theta}_2}$ the set of all data points $x \in \R^m$ such that $\theta_2' \mapsto g_2(\theta_2',x)$ is $\mathcal{C}^2$ in the neighborhood of $\tilde{\theta}_2$, we observe that $\mathcal{X}_{\x} \subseteq \mathcal{X}_{\tilde{\x}_2}$ with $\mathcal{X}_\x$ defined as in \Cref{prop:newcharacter}.}
(Indeed if $x \in \mathcal{X}_\x$, then $\x' \mapsto g(\x', x)$ is $\mathcal{C}^2$ in the neighborhood of $\x := (\tilde{\x}_1, \tilde{\x}_2, \tilde{\x}_3)$, so in particular $\x_2' \mapsto g((\tilde{\x}_1, \x_2', \tilde{\x}_3), x) = g_2(\x_2', x)$ is $\mathcal{C}^2$ in the neighborhood of $\tilde{\x}_2$.)
By~\eqref{eq:firststep4}, for any $\tilde{\x}_2 \in \tilde{\Theta}_2$ and any $x \in \mathcal{X}_{\tilde{\x}_2}$,  $\partial_{\tilde{\x}_2} g(\x; x) = \partial_{\tilde{\x}_2} g_2(\tilde{\x}_2 ; x)$. Moreover $x \mapsto \partial_{\tilde{\x}_2} g(\x, x) (=  \partial_{\tilde{\x}_2} g_2(\tilde{\x}_2, x))$ is continuous on $\mathcal{X}_{\tilde{\x}_2}$. 
Indeed either $\mathcal{X}_{\tilde{\x}_2} = \R^m$ (when $g_2$ is an attention layer) or $\mathcal{X}_{\tilde{\x}_2}$ is the whole space outside a finite number of hyperplanes associated to the zeroes of the activation of each hidden neuron: in particular if $x \in \mathcal{X}_{\tilde{\x}_2}$, then $g_2(\tilde{\x}_2, x)$ does not have any activation that vanishes, so it remains the case in a neighborhood $B(x, r)$ of $x$, this implies in particular that $x' \in B(x, r) \mapsto g_2(\tilde{\x}_2, x')$ is continuous.
% \textcolor{red}{$\leftarrow$
% RG: @Sibylle: expliquer, S pour moi : à ajouter dans le lemme pdt la construction ; RG: quel lemme ?}
%
Finally, as $\overline{\Xspace} = \RR^n$ by \Cref{lemma:density},  for any $x \in \mathcal{X}_{\tilde{\x}_2}$ there exist $(x_N) \in (\mathcal{X}_{\x})^{\mathbb{N}}$ such that $x_N \longrightarrow x$. Thus, as $\mathcal{X}_{\x} \subseteq \mathcal{X}_{\tilde{\x}_2}$ and by continuity of $\partial_{\tilde{\x}_2} g_2 (\x; x)$ on $\mathcal{X}_{\tilde{\x}_2}$. 
\begin{align*}
\partial_{\tilde{\x}_2} g (\x; x_N) = \partial_{\tilde{\x}_2} g_2 (\x; x_N)  \longrightarrow \partial_{\tilde{\x}_2} g_2 (\x; x),
\end{align*}
{Since $\mathcal{V}_\ell = \R^m$ with $\ell$ the Euclidean loss in $\R^m$, we obtain with the considered triplet of parameter $\x = (\tilde{\x}_1,\tilde{\x}_2,\tilde{\x}_3)$}
\begin{align*}
 \mathcal{W}^{{g_2},{\ell}}_{\tilde{\x}_2} &
 \stackrel{\text{\Cref{coro:criteria}}}{\coloneqq} \underset{w \in %\R^n
 {\R^m}}{\linspan} \underset{x \in \mathcal{X}_{\tilde{\x}_2}}{\linspan}  \{  \partial_{\tilde{\x}_2}{g}_2^\top (\tilde{\x}_2; x)  w \}  \\
%  &= \underset{w \in \R^n }{\linspan} \underset{x \in \mathcal{X}_{\tilde{\x}_2}}{\linspan}  \{  \partial_{\tilde{\x}_2} g_2^\top (\tilde{\x}_2; x)  w \} \\
 &\subseteq  \underset{w \in %\R^n
 {\R^m}}{\linspan} \  \overline{ \underset{x \in \mathcal{X}_{\x}}{\linspan}  \{  \partial_{\tilde{\x}_2} g (\x; x)^\top  w \}  } \quad {(\text{as}\  \mathcal{X}_{\tilde{\x}_2} \subseteq \R^m = \overline{\Xspace})}\\
    &= \underset{w \in %\R^n
    {\R^m}}{\linspan} \ \underset{x \in \mathcal{X}_{\x}}{\linspan}  \{  \partial_{\tilde{\x}_2} g (\x; x)^\top  w \}  \quad \text{(as every finite-dimensional space is closed)}\\
    &\subseteq 
    \underset{\x_1' \in \tilde{\Theta}_1, \x_3' \in \tilde{\Theta}_3, w \in %\R^n
    {\R^m}}{\linspan} \underset{x \in %\Xspace
    {\mathcal{X}_{(\x'_1,\tilde{\x}_2,\x'_2)}}
    }{\linspan} \{\partial_{\tilde{\x}_2} g (\x; x)^\top w \}=: R_{\tilde{\x}_2} (\Wfspace{g,\ell}).
\end{align*}
{where we recall again that $R_{\theta_T}(\Wfspace{g,\ell})$ is defined in \Cref{prop:caractprojloi}.
}
Thus by \Cref{prop:caractprojloi}, if a function $H(\x) = h(\tilde{\x}_2)$ %$H: \x \mapsto h(\x_2)$ 
that only depends on $\tilde{\x}_2$  is conserved for $g$
{with respect to the Euclidean loss, then for every $\tilde{\x}_2 \in \tilde{\Theta}_2$ the gradient $\nabla_{\tilde{\x}_2}H(\x) = \nabla h(\tilde{\x}_2)$ is  orthogonal to $R_{\tilde{\x}_2} (\Wfspace{g,\ell})=\mathcal{W}^{{g_2},\ell}_{\tilde{\x}_2}$, } hence {by \Cref{prop:newcharacter}} $h$ is also conserved for ${g_2}$ {with respect to the Euclidean loss}.

\textbf{B. We finally consider the case with a last block \eqref{eq:functionforcross}, ie the case where the neural network writes:
}
$$
g_{\x}: x \in \R^m \mapsto 
{g(\theta,x) = }
g_{\theta_{q+1}}^{q+1} \circ  g_{\theta_q}^q \circ \cdots \circ g_{\theta_1}^1(x)  \in \R^n.
$$

We recall that $l \neq q+1$. %$\x_2 \neq \x_{q+1}$.
{Denoting $\tilde{\x} = (\x_1, \cdots, \x_q)$ and $\tilde{g}(\tilde{\x},x) = \tilde{g}_{\tilde{\x}}$ the network \eqref{eq:firstnetwork}, the analysis conducted in A above shows that there are $\tilde{\x}_1 \in \tilde{\Theta}_1,\tilde{\x}_3 \in \tilde{\Theta}_3$ such that  for every $\tilde{\x}_2 \in \tilde{\Theta}_2$ the parameter $\tilde{\x} := (\tilde{\x}_1,\tilde{\x}_2,\tilde{\x}_3)$ satisfies:}
$$\tilde{g}(\tilde{\x}, x) = g_2(\tilde{\x}_2, x),\quad \forall x \in \R^m.$$
so that {with $\x = (\tilde{\x},\theta_{q+1})$} we have %Thus for any $\x \in \Theta$ and for any $x \in \R^m$, one has:
$$
g(\x, x) = \mathrm{softmax} (\x_{q+1}  \tilde{g}(\tilde{\x}, x)) = \mathrm{softmax} (\x_{q+1}  g_2(\tilde{\x}_2, x)),\quad \forall x \in \R^m
$$
As a result, reasoning as in A above,
%Thus for $\x \in \Theta$ and 
for any $x \in \mathcal{X}_{\tilde{\x}_2}$ one has:
$$
\partial_{\tilde{\x}_2} g(\x, x) = \partial \mathrm{softmax}(\x_{q+1}  \tilde{g}(\tilde{\x}, x))\x_{q+1}
\partial_{\tilde{\x}_2} g_2(\tilde{\x}_2, x).
$$
Moreover we have
\begin{align*}
 &\underset{\x_{q+1} }{\linspan} \underset{x \in \mathcal{X}_{\tilde{\x}_2}, w \in \R^n }{\linspan}  \left\{  
 \left(\partial \mathrm{softmax}(\x_{q+1}  \tilde{g}(\tilde{\x}, x))\x_{q+1}
\partial_{\tilde{\x}_2} g_2(\tilde{\x}_2, x)\right)^\top
 w \right\} \\
 &= \underset{\x_{q+1} }{\linspan} \underset{x \in \mathcal{X}_{\tilde{\x}_2}, w \in \R^n }{\linspan}  \left\{  
  \partial_{\tilde{\x}_2} g_2(\tilde{\x}_2, x)^\top  \x_{q+1}^\top  [\partial \mathrm{softmax}(\x_{q+1}  \tilde{g}(\tilde{\x}, x))]^\top
 w \right\} \\
 &=   \underset{x \in \mathcal{X}_{\tilde{\x}_2}}{\linspan}  \left\{  
 %\partial_{\tilde{\x}_2} g_2^\top (\tilde{\x}_2, x) \x_{q+1}^\top \partial \mathrm{softmax}^\top (\x_{q+1}  \tilde{g}(\tilde{\x}, x)) 
  \partial_{\tilde{\x}_2} g_2(\tilde{\x}_2, x)^\top   \underset{\x_{q+1}, w \in \R^n }{\linspan} \left\{ \x_{q+1}^\top  [\partial \mathrm{softmax}(\x_{q+1}  \tilde{g}(\tilde{\x}, x))]^\top
 w \right\} \right\} \\
 &{
=\underset{w' \in \R^m }{\linspan} \underset{x \in \mathcal{X}_{\tilde{\x}_2}}{\linspan}  \{  \partial_{\tilde{\x}_2}{g}_2^\top (\tilde{\x}_2; x)  w' \}
 =:
 \mathcal{W}^{{g_2,\ell}}_{\tilde{\x}_2}, }
\end{align*}
In the last line we used that
\eq{ \label{eq:supp}
\underset{\x_{q+1}, w \in \R^n}{\linspan} \{ \x_{q+1}^\top [\partial \mathrm{softmax}(\x_{q+1} \tilde{g}(\tilde{\x}, x))]^\top w \} = \R^m,
}
a property that we now show. Indeed, given any $y \in \R^m$ consider $\theta_{q+1} := e_1 y^\top$, where $e_1 \in \R^n$ the first canonical vector. Simple calculus yields $\x_{q+1}^\top [\partial \mathrm{softmax}(\x_{q+1} \tilde{g}(\tilde{\x}, x))]^\top = y z^\top$, where  
\[
z^\top = \left(
    \frac{\exp(\lambda) (n-1)}{ (\exp(\lambda) + n- 1)^2},\ 
    - \frac{\exp(\lambda)}{(\exp(\lambda) + n- 1)^2},\ 
    \cdots,\ 
     - \frac{\exp(\lambda)}{(\exp(\lambda) + n- 1)^2}
\right) \neq 0,\quad \text{with}\ \lambda \coloneqq \langle y, \tilde{g}(\tilde{\x}, x) \rangle.
\]
With $w := z/\|z\|_2^2$, we get 
$ \x_{q+1}^\top [\partial \mathrm{softmax}(\x_{q+1} \tilde{g}(\tilde{\x}, x))]^\top w = y$. Since this holds for any choice of $y$, we get \eqref{eq:supp}.

Finally, we can use the exact same previous proof by using \Cref{lemma:density} and conclude the proof.

%For this consider $\x_2 \coloneqq \x_l \in \Theta$.
\textbf{C. We finally prove \eqref{eq:firststep4}.}
By \Cref{lemma:identity}, one can choose for all layer $p \neq l$ with $p \leq q$ some parameters $\theta_p$ such that for any $x \in \R^m$ one has $g_{\x_p}^p (x) = x$. Thus with $\tilde{\x}_1 := (\x_{l+1}, \cdots, \x_q) \in \tilde{\Theta}_1$ and $\tilde{\x}_3 = (\x_1, \cdots, \x_{l-1}) \in \tilde{\Theta}_3$, one has $g_1(\tilde{\x}_1, x) = x$ and $g_3(\tilde{\x}_3, x) = x$ for any $x \in \R^m$. Finally one has $g(\x, x) = g_2(\tilde{\x}_2, x)$ for all $x \in \R^m$ which concludes the proof.
\end{proof}

\section{Additional figures} \label{app:figures}
We illustrate the notion of blocks overlapping or not a residual connexion in \Cref{fig:1}-\Cref{fig:2}, and display experimental results on conserved functions during ResNet training in \Cref{fig:resnetfull} and during Transformer training in \Cref{fig:transformer}.
\begin{figure}[h]
    \centering
    \includegraphics[trim=20 300 20 50, clip, width=0.45\textwidth]{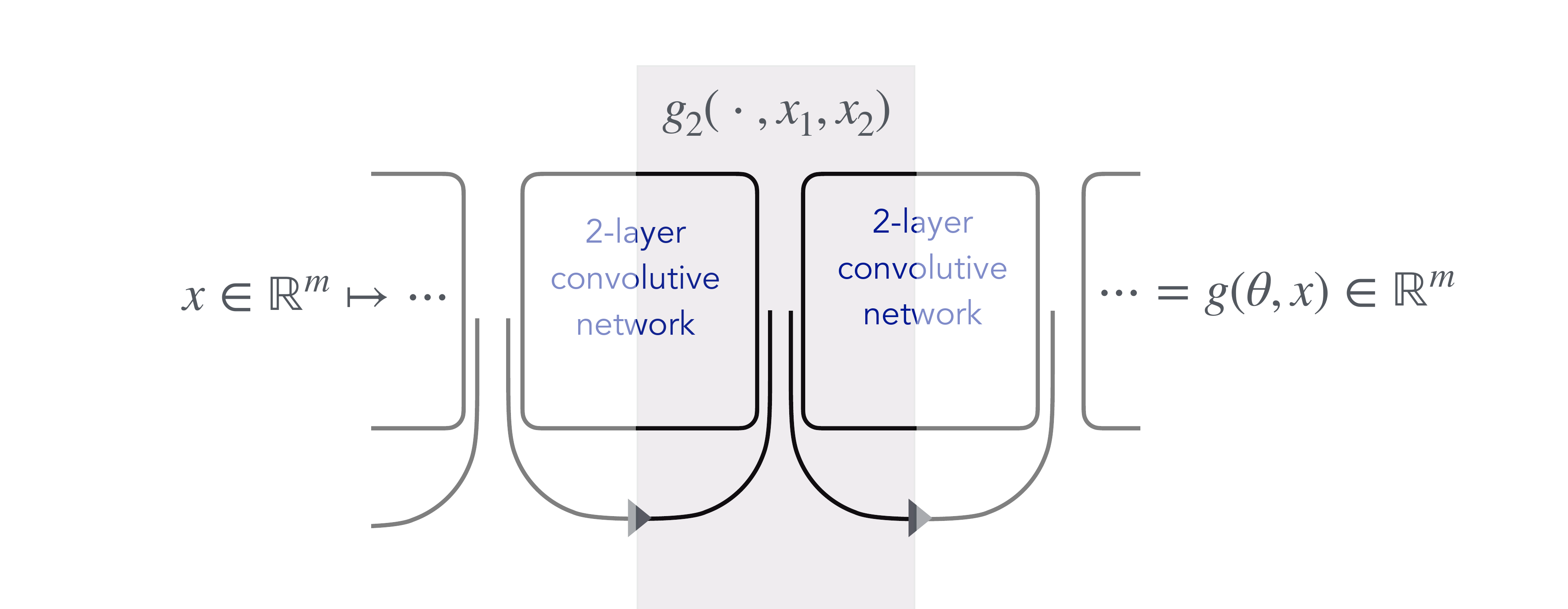}
    \caption{Block overlapping a residual connection}
    \label{fig:2}
\end{figure}

\begin{figure}[h] 
    \centering
    \includegraphics[trim=20 300 20 50, clip, width=0.45\textwidth]{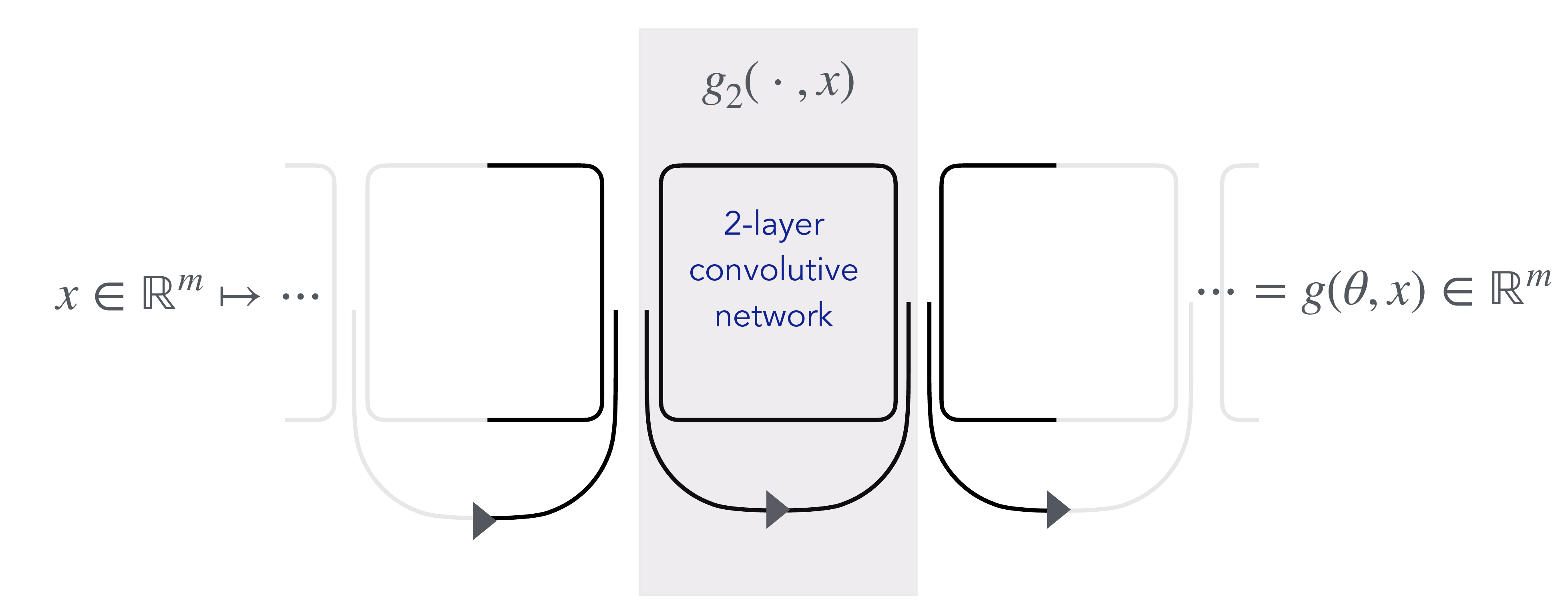}
    \caption{Block sharing a residual connection}
    \label{fig:1}
\end{figure}
\begin{figure}[h]
    \centering
    \includegraphics[width=0.47 \textwidth]{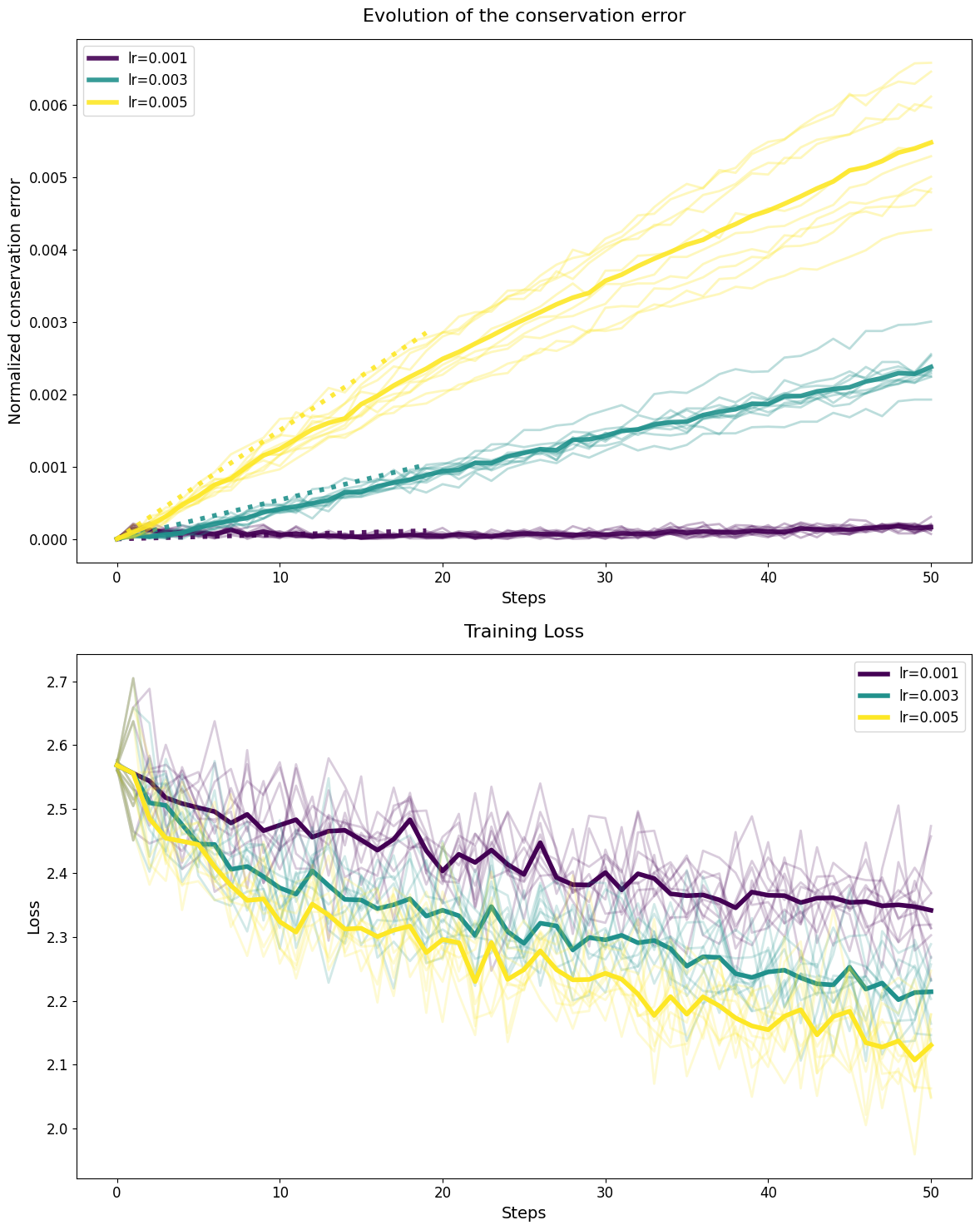}
    \caption{Tracking a conserved function and the loss during ResNet-18 training on CIFAR-10.
     For each learning rate between 1e-3 and 5e-3, we train 10 models for 50 steps using SGD without momentum or weight decay, with 10 different random seeds. For each configuration, we record both the loss evolution (bottom) and the evolution of the conservation error $\left| \frac{h(\x_k)-h(\x_0)}{h(\x_0)} \right|$ (top) with $h_j$ defined in \eqref{eq:ConsLawConvolutive}, where $\x_T$ represents the parameters associated with the first residual block. The dotted lines show the theoretical slopes derived from bound \eqref{eq:boundsgd}, which scale quadratically with $\tau$ as $C \tau^2$. The empirical results confirm that the function is approximately conserved and that the slope coefficient maintains proportionality with $\tau^2$.
     {For non-conserved functions, the evolution is in $\tau$ rather than $\tau^2$, since the first-order term in the Taylor expansion (used in the proof of \Cref{prop:errorbound}) does not vanish in that case.}
    }
    \label{fig:resnetfull}
\end{figure}

\begin{figure}
    \centering
    \includegraphics[trim=1 1 1 1, clip, width=0.7\textwidth]{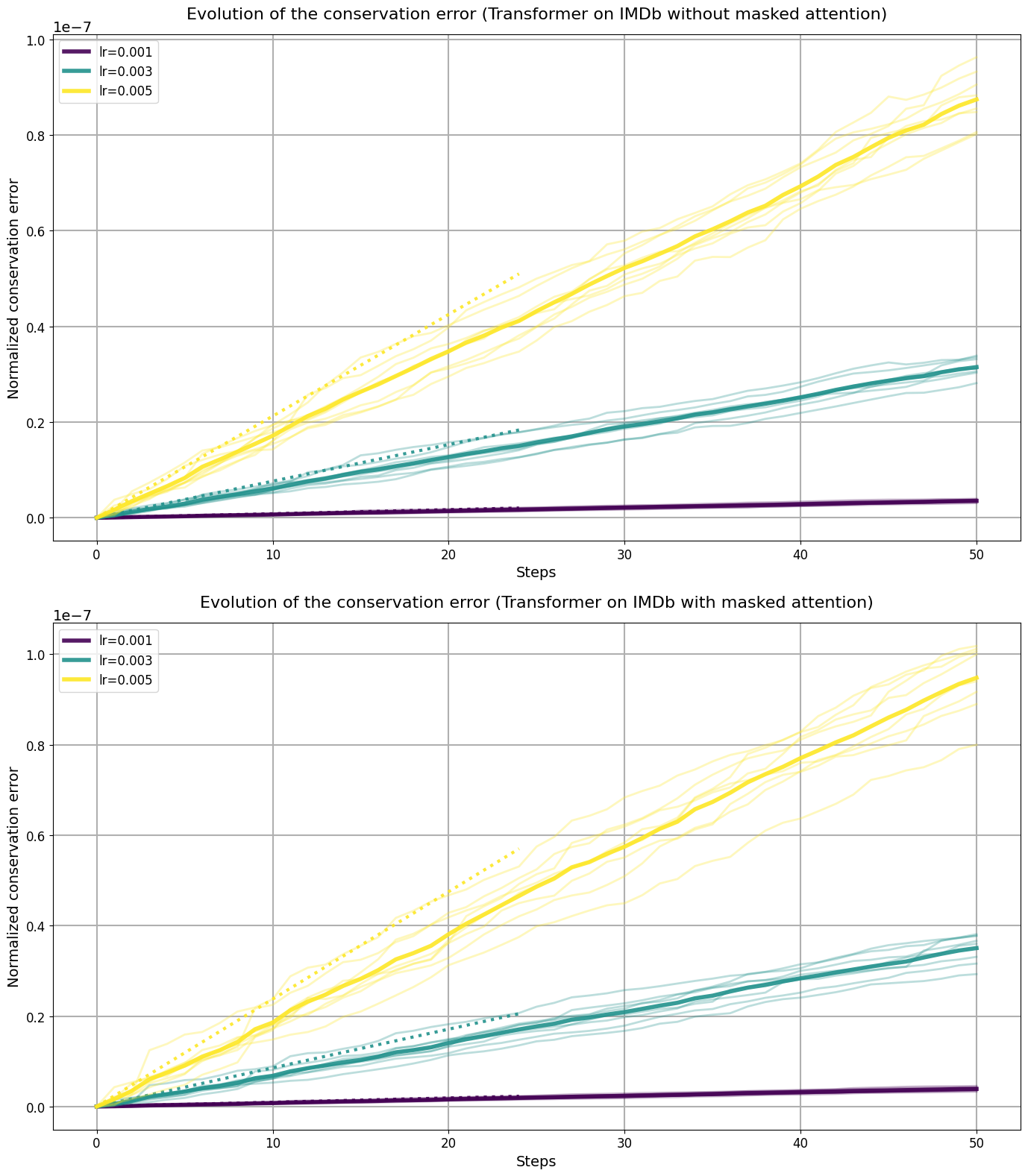}
    \caption{Tracking a conserved function and the loss during a Transformer training on IMDb dataset.
     For each learning rate between 1e-3 and 5e-3, we train 8 models for 50 steps using SGD without momentum or weight decay, with 8 different random seeds. For each configuration, we record both the evolution of the Frobenius norm of the conserved matrix identified in \Cref{coro:multihead}, specifically examining the query and key matrices from the first attention head in the first layer where masking is applied (bottom) or not (top). The dotted lines show the theoretical slopes derived from bound \eqref{eq:boundsgd}, which scale quadratically with $\tau$ as $C \tau^2$. The empirical results confirm that the function is approximately conserved and that the slope coefficient maintains proportionality with $\tau^2$.}
    \label{fig:transformer}
\end{figure}

\section{Proof of \Cref{thm:nocl} {: no block conservation laws for blocks overlapping a residual connexion}} \label{app:nocl}

\thmnocl*

{While \Cref{thm:nocl} is stated in the main text for the specific case of ResNets with residual blocks that involve a $2$-layer ReLU network with matrices $U,V$ that are either dense or with a convolutive structure, we actually prove a more general version where these matrices have the generic structure associated to \eqref{eq:convolutiveNN_multicanal_app0}, with $C_1,C_2$ satisfying \Cref{as:injectivitymatrices} and two additional hypotheses}
on the matrices $P_i$ and $Q_i$ from \Cref{as:injectivitymatrices}.
\begin{assumption}
\label{as:injectivitymatricesextended}
First we assume that $\bigcap \mathrm{range}(Q_i) \neq \{ 0 \}$,
where $Q_i$ is defined in \Cref{as:injectivitymatrices}.
\end{assumption}
 {\bf Link with the convolutive case.}  \Cref{as:injectivitymatricesextended} (as well as \Cref{as:injectivitymatrices}) is satisfied for the matrices $Q_i$ defined just after \Cref{as:injectivitymatrices}, that are associated to the convolutive model  \eqref{eq:convolutiveNN_multicanal} when $n_v = p$. Indeed in this setting, all $Q_i$ correspond to full circular shift operators, and notably $\mathbf{1} = Q_i \mathbf{1}$, where $\mathbf{1}$ denotes the all-ones vector.
 %\textcolor{red}{\bf RG: @Sibylle: peux-tu expliquer pourquoi stp ?},

\begin{proof}[Proof of~\Cref{thm:nocl}]

Here {$\tilde{\x}_2 = \theta_T = (V^{l+1},U^l)$} %$\x_2$ 
corresponds to two consecutive parameter blocks before and after a skip connection and \RG{that} %thus $g$ 
can be {written as the composition of} %\sout{compose with}
$g_1$, $g_2$ and $g_3$ where %\RG{RG: $i \to l$?}
\begin{align}
\begin{pmatrix}
    y_1\\y_2
\end{pmatrix}
&=
g_2\left((V^{l+1}, U^{l}); \begin{pmatrix}
    x_1 \\ x_2
\end{pmatrix}\right) \coloneqq \begin{pmatrix}
    V^{l+1} \\ I_d 
\end{pmatrix}  \begin{pmatrix}
     U^{l} & I_d 
\end{pmatrix} \begin{pmatrix}
    \sigma \\
    id
\end{pmatrix} \begin{pmatrix}
    x_1 \\ x_2
\end{pmatrix},\label{eq:DefG2}\\
\begin{pmatrix}
    x_1\\x_2
\end{pmatrix}&=
g_3({\tilde{\x}_3},%\x_3,
x) = \begin{pmatrix}
    V^{l} \\ I_d 
\end{pmatrix}  
(g^{l-1}_{\theta_{l-1}} \circ \ldots \circ g^1_{\theta_1})(x),
\label{eq:DefG3}\\
\text{and}\  
g_1\left(
\tilde{\x}_1, %\x_1;
\begin{pmatrix}
    y_1 \\ y_2
\end{pmatrix}\right) &= 
{(g^q_{\theta_q} \circ \ldots g^{l+2}_{\theta_{l+2}})}
\left(
\begin{pmatrix}
     U^{l+1} & I_d 
\end{pmatrix} \begin{pmatrix}
    \sigma \\
    id
\end{pmatrix}  \begin{pmatrix}
    y_1 \\ y_2
\end{pmatrix}\right).\label{eq:DefG1}
\end{align}
{The parameters $\tilde{\x}_1,\tilde{\x}_3$ gather all relevant parameters involved in the definitions of $g_1$ and $g_3$.}
We define $d_1$, $d_2$ such that $V^{l+1} \in \R^{d_1 \times n}, U^{l} \in \R^{n \times d_2}$ 
{and will painlessly alternate between matrix and vector representations  (in $\RR^{nd_i}$) of such parameters. The core of the proof is to show that
\begin{align} 
R_{\theta_T}(\Wfspace{g_2,\ell}) \coloneqq
    &\underset{
    \eta = (\tilde{\x}_1,\tilde{\x}_3) \in \Theta_{T^c}(\x_T) }{\linspan} 
    \ \underset{(x,w) \in {\mathcal{X}_{(\x_T,{\eta})} \times  \R^n
    }
    }{\linspan} \{\partial_{\x_T}
    g^\top ((\x_T,\eta); 
    x) w \}  = \RR^{ (d_1+ d_2)n} 
    \label{eq:MainPropNoLaw}
\end{align}
with $\ell$ the Euclidean loss.}
{By \Cref{prop:caractprojloi}} this will imply as claimed that the only conservation laws $h$ of $g$ {with respect to the Euclidean loss} that only depend on {$\x_T$} %$\x_2$ 
are the constant ones.

{To prove \eqref{eq:MainPropNoLaw} we proceed in two steps, proving separately
\begin{align}
    \label{eq:MainPropNoLaw1}
 \{0_{d_1n}\} \times \RR^{d_2n} \subseteq    R_{\theta_T}(\Wfspace{g_2,\ell})\\
    \label{eq:MainPropNoLaw2}
    \RR^{d_1n} \times \{0_{d_2n}\} \subseteq    R_{\theta_T}(\Wfspace{g_2,\ell}).
\end{align}
By the definition of $R_{\theta_T}(\Wfspace{g_2,\ell}) \subseteq \R^{(d_1+d_2)n}$ as a linear span, this indeed implies \eqref{eq:MainPropNoLaw}.
%will yield the conclusion.
}

As a warmup, let us characterize $\Theta_{T^c}(\theta_T) \coloneqq \{\eta \in \RR^{T^c}: (\x_T,\eta) \in \Theta\}$ as defined in \eqref{eq:complementary}, where we recall that $\Theta = \Theta_{q} \times \cdots \times \Theta_{1} $ with the $\Theta_i$ defined in \Cref{lemma:density}, and {where $\Theta_{l+1}$ is also such that $V^{l+1}$ has all its rows that define distinct hyperplanes.} One then has $\Theta_{T^c}(\theta_T) = \tilde{\Theta}_1 \times \tilde{\Theta}_3$, where 
\begin{itemize}
    \item $\tilde{\Theta}_1 \coloneqq 
    \Theta_q \times \ldots \times \Theta_{l+2} \times \hat{\Theta}_{l+1}
    %\hat{\Theta}_{l+1} \times \Theta_{l+2} \times \cdots \times \Theta_q
    $ with
$
\hat{\Theta}_{l+1} \coloneqq \{ U^{l+1} \in \R^{n \times d_1}: (U^{l+1}, V^{l+1}) \in \Theta_{l+1} \};
$ 
\item $\tilde{\Theta}_3 \coloneqq 
\hat{\Theta}_{l} \times \Theta_{l-1} \times \ldots \times {\Theta}_{1}
%\Theta_{1} \times \cdots \times \Theta_{l-1} \times \hat{\Theta}_{l}
$ with
$
\hat{\Theta}_{l} \coloneqq \{ V^{l} \in \R^{d_2 \times n}: (U^{l}, V^{l}) \in \Theta_{l} \}.
$ 
\end{itemize}

It will also be useful to denote $\overline{\Theta}_1$ (resp. $\overline{\Theta}_3$) the set of all $\tilde{\x}_1$ (resp. of all $\tilde{\x}_3$) such that  for each $k \geq l+2$ (resp. all $k \leq l-1$) we have: a) $U^{k} = 0$ (by taking all $u_{k', j}^{(k)}$ equal to zero), so that $(U^k, V^k)$ satisfies \Cref{as:open}
; and b) $V^k$ satisfies \Cref{as:rownonnull} of \Cref{lemma:density} (possible by taking  $v_{j, i}^{(k)}$ non equal to zero as the matrices $Q_i$ are injective).\\
For every $\tilde{\x}_1 \in \overline{\Theta}_1$ and $\tilde{\x}_3 \in \overline{\Theta}_3$ we have
%then for that specific $\tilde{\x}_1$, one has:
\begin{align}
g_1 \left(\tilde{\x}_1; \begin{pmatrix}
    y_1 \\ y_2
\end{pmatrix}\right) &= U^{l+1} \sigma (y_1) + y_2.\label{eq:G1OnBar1}\\
g_3 \left(\tilde{\x}_3; x \right) &= \begin{pmatrix}
   V^{l} x \\ x
\end{pmatrix}.\label{eq:G3OnBar3}
\end{align}

With the above notations, we highlight that if $\tilde{\theta}_1 \in \overline{\Theta}_1$ is such that $U^{l+1} \in \hat{\Theta}_{l+1}$ then $\tilde{\theta}_1 \in \tilde{\Theta}_1$, and if $\tilde{\theta}_3 \in \overline{\Theta}_3$ is such that $V^{l} \in \hat{\Theta}_{l}$ then $\tilde{\theta}_3 \in\tilde{\Theta}_3$. In particular such a choice of $\tilde{\theta}_1$ and $\tilde{\theta}_3$ implies that $\eta = (\tilde{\theta}_1, \tilde{\theta}_3) \in \Theta_{T^c} (\x_T)$.
{In the rest of the proof we consider $\tilde{\x}_1,\tilde{\x}_3$ with these properties, this will enable us to leverage \eqref{eq:G1OnBar1}-\eqref{eq:G3OnBar3}.}

\textit{\bf 1st step: {proof of \eqref{eq:MainPropNoLaw1}.}}

In that part, {we consider $\tilde{\x}_1 \in \overline{\Theta}_1$ such that 
 $U^{l+1} = 0$, and thus (cf \eqref{eq:resblocresnet}-\eqref{eq:mlpbloc}-\eqref{eq:attbloc}
) we have $g^{l+1}_{\theta_{l+1}}=\mathtt{id}$ and the block parameter $\theta_{l+1} = (U^{l+1}, V^{l+1})$  satisfies \Cref{as:open} {of \Cref{lemma:density}} (i.e. $U^{l+1} \in \hat{\Theta}_{l+1}$ and thus $\tilde{\x}_1 \in \tilde{\Theta}_1$).} 
{Since $U^{l+1}=0$, combining~\eqref{eq:G1OnBar1}-\eqref{eq:G3OnBar3} with the expression of $g_2$ \eqref{eq:DefG2} we have}
\begin{equation}
g (\x; x) = U^{l} \sigma( V^{l} x) + x.\label{eq:GOverlappingCase1}
\end{equation}

Since the above expression of $g$ is independent of $V^{l+1}$, and $\theta_T = \tilde{\theta}_2 = (V^{l+1},U^l) \in \RR^{d_1 \times n} \times \RR^{n \times d_2}$, for any $(H, K) \in \RR^{d_1 \times n} \times \RR^{n \times d_2 }$, any $w \in \RR^n$ and any $x \in \R^n$, at $\theta = (\tilde{\theta}_1,\tilde{\theta}_2,\tilde{\theta}_3) = (\x_T,\x_{T^c})$ we have:
\[
\langle \partial_{\x_T} g(\x;x) \cdot (H, K), w \rangle = \langle K \sigma (V^{l} x), w \rangle = \langle (H, K), (0, w \sigma(V^{l} x)^\top ) \rangle.
\]
that is to say 
\[
\partial_{\x_T} g(\x;x)^\top w = (\begin{smallmatrix}
0\\
\mathtt{vec}(w \sigma(V^{l} x)^\top)
\end{smallmatrix}).
\]
%))$
{In light of the definition~\eqref{eq:MainPropNoLaw} of $R_{\theta_T}(\Wfspace{g_2,\ell})$, since $\eta = (\tilde{\x}_1,\tilde{\x}_3) \in \Theta_{T^c}(\x_T)$ for each of the parameters $\tilde{\x}_1,\tilde{\x}_3$ considered in this part of the proof, a sufficient condition to establish \eqref{eq:MainPropNoLaw1} is thus
\[
 \underset{
    V^{l} \in \hat{\Theta}_{l}, w \in \R^n, x \in {\mathcal{X}_{\x}}
    }{\linspan} \ \mathtt{vec}(w \sigma(V^{l} x)^\top) = \R^{d_2 n}
\]
or equivalently (by simple linear algebra)
\[
   \underset{
V^{l} \in \hat{\Theta}_{l}, 
x \in \Xspace
     }{\linspan} \ 
\sigma(V^{l} x) = 
\R^{d_2}.
\]
For each $V^l \in \hat{\Theta}_l$, by definition the corresponding  $\x = (\tilde{\x}_1,\x_T,\tilde{\x}_3)$ belongs to $\Theta$ hence by \Cref{lemma:density} $\Xspace$ is dense. Since $x \mapsto \sigma(V^lx)$ is continuous, the above condition is also equivalent to 
\begin{equation}
      \underset{
V^{l} \in \hat{\Theta}_{l}, 
x \in \R^m
     }{\linspan} \ 
\sigma(V^{l} x) = 
\R^{d_2}\label{eq:QuasiLast1}
\end{equation}
To prove~\eqref{eq:QuasiLast1} we simply exhibit for each canonical vector $e_i \in \R^{d_2}$ a matrix $V^{l} \in \hat{\Theta}_l$ and $x \in \R^m$ such that $\sigma(V^{l}x) = e_{i}$. For this simply consider $\lambda>0$, $x = \lambda e_1 \in \R^m$, $V$ a matrix with its first column equal to $e_i$, and $V^l = V/\lambda$. If $\lambda >0$ is large enough then one can easily check that $(U^l,V^l) \in \Theta_l$, hence $V^l \in \hat{\Theta}_l$ as claimed.}

\textit{\bf 2d step: {proof of \eqref{eq:MainPropNoLaw2}.}}
We denote $v_1, \cdots, v_{d_1}$ the $d_1$ rows of the matrix $V^{l+1} \in \RR^{d_1  \times n}$, and for each $j = 1, \cdots, d_1$ we denote
\[
\mathcal{H}_j  \coloneqq \{  x \in \R^n: v_j^\top x = 0 \}.
\]
Combining~\eqref{eq:G1OnBar1}-\eqref{eq:G3OnBar3} with the expression of $g_2$ \eqref{eq:DefG2} we have for any $V^{l}$: 
\begin{equation}
    g(\x; x) = U^{l+1} \sigma (V^{l+1} x ) + x,
    \quad 
    \text{for each}\ x\ \text{such that}\  \sigma(V^{l} x) = 0.
    \label{eq:GOverlappingCase2}
\end{equation}
As in the first step, since this expression of $g$ is independent of $U^l$, and as $\theta_T = \tilde{\theta}_2 = (V^{l+1},U^l) \in \RR^{d_1 \times n} \times \RR^{n \times d_2}$, 
we obtain that for any $V^l$ and any $w \in \RR^n$, at $\theta = (\tilde{\theta}_1,\tilde{\theta}_2,\tilde{\theta}_3) = (\x_T,\x_{T^c})$, denoting $D(x) := \diag((\mathbf{1}_{\langle v_j,x\rangle >0})_{j})$,
%we obtain with similar computations that:
\begin{equation} \label{eq:diff}
    [\partial_{\x_T} g(\x; x)]^\top w = (\begin{smallmatrix}
 \mathtt{vec}(D(x){ U^{l+1}}^\top w x^\top) \\ 0
\end{smallmatrix}),
\quad \text{for each}\ x \in \R^n - \cup \mathcal{H}_j\ \text{such that}\  \sigma(V^{l} x) = 0. 
\end{equation}
The claim \eqref{eq:MainPropNoLaw2} is a direct consequence of the following two inclusions that we prove below
\begin{align} \label{eq:MainPropNoLaw2-1}
\underset{\tilde{\x}_1 \in \tilde{\Theta}_1}{\linspan}\underset{
 \stackrel{w \in \R^n}{x \in (\R^n - \cup \mathcal{H}_j)}}{\linspan} \left\{  (\begin{smallmatrix}
 \mathtt{vec}(D(x){ U^{l+1}}^\top w x^\top) \\ 0
\end{smallmatrix}) \right\} 
&\subseteq R_{\theta_T}(\Wfspace{g_2,\ell})\\
\label{eq:MainPropNoLaw2-2}
  \RR^{d_1n} \times \{0_{d_2n}\} &\subseteq
\underset{\tilde{\x}_1 \in \tilde{\Theta}_1}{\linspan}\underset{
 \stackrel{w \times \R^n}{x \in (\R^n - \cup \mathcal{H}_j)}}{\linspan} \left\{  (\begin{smallmatrix}
 \mathtt{vec}(D(x){ U^{l+1}}^\top w x^\top) \\ 0
\end{smallmatrix}) \right\}
\end{align}

{\em Proof of~\eqref{eq:MainPropNoLaw2-1}.} 
The main difference between the left-hand-side of \eqref{eq:MainPropNoLaw2-1} and the definition of $R_{\x_T}$ in \eqref{eq:MainPropNoLaw} is that in \eqref{eq:MainPropNoLaw2-1} $x$ can be freely chosen in the set $\R^n-\cup_j \mathcal{H}_j$ independently of $\tilde{\x}_1$ (and of $\tilde{\x}_3$). To prove \eqref{eq:MainPropNoLaw2-1} we exhibit below two vectors $\x^+, \x^- \in \Theta$ such that $\x^+_T = \x^-_T = \x_T$ and that for every $x \in \R^n-\cup_j \mathcal{H}_j$ and $w \in \R^n$, we either have
\begin{equation}\label{eq:closureMainPropNoLaw}
[\partial_{\x_T} g (\x^+; x)]^\top w = 
(\begin{smallmatrix}
 \mathtt{vec}(D(x){ U^{l+1}}^\top w x^\top) \\ 0
\end{smallmatrix})
\in \overline{\underset{x' \in \mathcal{X}_{\x^+}, w \in \R^n}{\linspan} \{[\partial_{\x_T} g (\x^+; x')]^\top w \}}
\end{equation}
or the equivalent with $\x^-$ instead of $\x^+$. By the closedness of finite-dimensional spaces and  \eqref{eq:MainPropNoLaw} we get %this will imply 
\[
(\begin{smallmatrix}
 \mathtt{vec}(D(x){ U^{l+1}}^\top w x^\top) \\ 0
\end{smallmatrix}) \in
 \underset{x' \in \mathcal{X}_{\x^+}, w \in \R^n}{\linspan} \{[\partial_{\x_T} g(\x^+; x')]^\top w \}  \subseteq R_{\theta_T}(\Wfspace{g_2,\ell}),
\] 
or of course the equivalent with $\x^-$ instead of $\x^+$, yielding the desired conclusion~\eqref{eq:MainPropNoLaw2-1}.

We now proceed to the construction of $\x^+, \x^-$. This is where we use \Cref{as:injectivitymatricesextended}: this enables us to consider an arbitrary nonzero $a_0 \in \bigcap \range Q_i \subseteq \R^{p}$  
%with $a_0 \neq 0$ (as $\bigcap \range Q_i \neq \{ 0 \}$ by hypothesis),
and to denote $a = \smallvec{ a_0 \\ \ldots \\ a_0 } \in  \R^{p c_0} = \R^n$ as well as 
\[
\mathcal{A}^+ \coloneqq \{  x' \in \R^n: a^\top x' < 0 \},
\quad
\mathcal{A}^0 \coloneqq \{  x' \in \R^n: a^\top x' = 0 \},
\quad\text{and}\quad
\mathcal{A}^- \coloneqq \{  x' \in \R^n: a^\top x' > 0 \}
\]
We construct $\x^+$ (resp. $\x^-$) so that \eqref{eq:closureMainPropNoLaw} or its equivalent with $\x^-$ holds for every $x$ in the intersection of $\R^n- \cup \mathcal{H}_j$ with $\mathcal{A}^+$, with $\mathcal{A}^0$, or with  $\mathcal{A}^-$. 
{Recall that we consider $\tilde{\x}_1 \in \overline{\Theta}_1$ such that $U^{l+1} \in \hat{\Theta}_1$, and that this implies $\tilde{\x}_1 \in \tilde{\Theta}_1$.} We consider ${\tilde{\theta}_3}^+ \in \overline{\Theta}_3$ 
%(resp. ${\tilde{\theta}_3}^- \in \overline{\Theta}_3$) 
such that $V_+^{l}$ %(resp. $V_-^{l}$)
has all its rows that are equal\footnote{{The  assumptions of \Cref{thm:nocl} the matrix forbid $V^{l+1}$ to have colinear rows, but there is no such constraint on $V^l$.}} to $\epsilon a$ -- in particular $V_+^{l}$ %(resp. $V_-^{l}$) 
satisfies \Cref{as:rownonnull} in the assumptions of \Cref{lemma:density}--
where $\epsilon>0$ is small enough
%. We assume that $a_0$ has its norm is small enough 
to ensure that  $(U^l, V_+^{l})\in \Theta_l$ %(resp. $(U^l, V_-^{l}) \in \Theta_l$)
and hence $V_+^{l} \in \hat{\Theta}_l$ and $\tilde{\x}_3^+ \in \tilde{\Theta}_3$. %  (resp. $V_-^{l} \in \hat{\Theta}_l$ and $\tilde{\x}_3^- \in \tilde{\Theta}_3$ ). 
We denote $\x^+ = (\tilde{\x}_1, \x_T, %\tilde{\x}_2, 
\tilde{\x}_3^+)$. The construction of $\x^-$ is similar, except that $V^l_-$ has all its rows equal to  $-\epsilon a$ for small enough $\epsilon>0$.

Considering $x \in (\R^n- \cup \mathcal{H}_j) \cap \mathcal{A}^+$ our goal is now to establish \eqref{eq:closureMainPropNoLaw}.
As $\overline{\mathcal{X}_{\x^+}} = \RR^{n}$ by \Cref{lemma:density} (as ${\x^+} \in \Theta$), there exists a sequence of vectors $x_N \in \mathcal{X}_{\x^+}$  such that $x_N \longrightarrow x$.
By construction $\mathcal{A}^+$ is open, and $\mathcal{X}_{\x^+} \subseteq \R^n - \cup \mathcal{H}_j$ (Indeed if $x \in \mathcal{X}_{\x^+}$, then $\x' \mapsto g(\x', x)$ is $\mathcal{C}^2$ in the neighborhood of $\x^+ := (\tilde{\x}_1, \x_T, \tilde{\x}_3)$, so in particular $\x_T' \mapsto g((\tilde{\x}_1, \x_T', \tilde{\x}_3), x) =  U^{l+1} \sigma ((V^{l+1})' x ) + x$ (by \eqref{eq:GOverlappingCase2}) is $\mathcal{C}^2$ in the neighborhood of $\x_T$.) hence for $N$ large enough we have $x_N \in (\R^n - \cup \mathcal{H}_j) \cap \mathcal{A}^+$. As $\mathcal{A}^+ \subseteq \{x': \sigma(V^l_+x') = 0\}$, the expression  \eqref{eq:diff} of $[\partial_{\x_T} g(\x^+; \cdot)]^\top w$ is valid on $(\R^n - \cup \mathcal{H}_j) \cap \mathcal{A}^+$. Since it is continuous ({because} $D(\cdot)$ is locally constant on $\R^n- \cup \mathcal{H}_j$) we get
\begin{equation}
(\begin{smallmatrix}
 \mathtt{vec}(D(x ){ U^{l+1}}^\top w x^\top) \\ 0
\end{smallmatrix})
= [\partial_{\x_T} g(\x^+; x)]^\top w
= \lim_{N \to \infty} [\partial_{\x_T} g(\x^+; x_N)]^\top w
\in \overline{\underset{x' \in \mathcal{X}_{\x^+}, w \in \R^n}{\linspan} \{[\partial_{\x_T} g(\x^+; x')]^\top w \}}\label{eq:ContinuityArgument}
\end{equation}
therefore establishing \eqref{eq:closureMainPropNoLaw} as claimed.
The same reasoning holds for $x \in (\R^n- \cup \mathcal{H}_j) \cap \mathcal{A}^-$ using $\x^-$ instead of $\x^+$.
{Finally for $x \in (\R^n- \cup \mathcal{H}_j) \cap \mathcal{A}^0$, one has in particular $V_-^l x = V_+^l x = 0$.
As we have seen, $\overline{ \mathcal{X}_{\x^+}} = \overline{\mathcal{X}_{\x^-}} = \RR^{n}$, hence there exists a sequence of vectors $x_N \in \mathcal{X}_{\x^+} \cup \mathcal{X}_{\x^-} $  such that $x_N \longrightarrow x$.
By extracting a subsequence if necessary we can assume that $\sigma(V_+^l x_N) = 0$ %(i.e. $x_N \in \mathcal{A}^+ \cup \mathcal{A}^0$) 
for every $N$ (or that $\sigma(V_-^l x_N) = 0$ for every $N$). Without loss of generality we assume $\sigma(V_+^l x_N) = 0$ for all $N$ (the other option is treated similarly). As proven above, we have $\mathcal{X}_{\x^+} \cup \mathcal{X}_{\x^-} \subseteq \R^n - \cup \mathcal{H}_j$, hence $x_N \in (\R^n - \cup \mathcal{H}_j) \cap \{ x': \sigma(V_+^l x') = 0 \}$ for every $N$. {Thus \eqref{eq:ContinuityArgument} remains valid and yields \eqref{eq:closureMainPropNoLaw}.}
}

{\em Proof of~\eqref{eq:MainPropNoLaw2-2}.}
First, observe that it is enough to prove that for each $k  \in \{1, \cdots, d_1 \}$ there exists $\tilde{\x}_1 \in \tilde{\Theta}_1$, $\x = (\tilde{\x}_1,\x_T,\tilde{\x}_3) \in \Theta$, and $w \in \R^n$ such that
\begin{equation}\label{eq:newgoal_finalstep}
\texttt{vec} \left( e_k x^\top, 0\right) \in \overline{\underset{x' \in \R^n- \cup_j \mathcal{H}_j}{\linspan} \{   \partial_{\x_T} g^\top (\x; x')  w  \}}, \quad \forall x \in \R^n
\end{equation}
Indeed, by~\eqref{eq:diff} and the closedness of finite-dimensional spaces  this implies  \eqref{eq:MainPropNoLaw2-2}.

To prove~\eqref{eq:newgoal_finalstep}, given an arbitrary $k \in \{1, \cdots, d_1 \}$, the core of the proof is to exhibit below some $\tilde{\x}_1 \in \overline{\Theta}_1$ such that the matrix $U^{l+1} \in \hat{\Theta}_{l+1}$ (recall that this implies $\tilde{\x}_1 \in \tilde{\Theta}_1$) has
{a nonzero $k$-th column},
%\textcolor{blue}{all nonzero-th one $k$-th column}, 
and that for every $w \in \R^n$
\eq{ \label{eq:newgoal_finalfinalstep}
\texttt{vec} \left( E_{k, k}  ({U^{l+1}})^\top w x^\top, 0\right) \in \overline{\underset{x' \in \R^n- \cup_j \mathcal{H}_j}{\linspan} \{   \partial_{\x_T} g^\top (\x; x')  w  \} },\quad\forall x \in \R^n.
}
Since the $k$-th column of $U^{l+1}$ is nonzero, there exists an index $\ell$ such that $U^{l+1}_{\ell,k} = \alpha \neq 0$. Setting $w \coloneqq e_\ell$ we have $E_{k,k} ({U^{l+1}})^\top w = \alpha e_k x^\top$, hence \eqref{eq:newgoal_finalfinalstep} implies  \eqref{eq:newgoal_finalstep}.

{
Given an arbitrary $k \in \{1, \cdots, d_1 \}$, we consider $k' = k \mod p_1$, by assumption \Cref{as:injectivitymatrices} one has $\text{range}(P_{k'}) \neq \{ 0 \}$, hence one can build $U^{l+1}$ {as in~\eqref{eq:overlineC1}} such that its $k$-th column is non equal to zero. To ensure that  $(U^{l+1}, V^{l+1})$ satisfies \Cref{as:open} (hence ${U^{l+1} \in \hat{\Theta}_{l+1}}$, and hence $\x \in \Theta$) we rescale $U^{l+1}$ such that its norm is small enough.
}

With this choice of $U^{l+1}$, to prove \eqref{eq:newgoal_finalstep} for every $x \in \R^n$  we first show it for $x$ living in 
{$\mathcal{H}'_k \coloneqq \mathcal{H}_k - \cup_{j \neq k} \mathcal{H}_j$ (note that this is a non empty set as the hyperplanes are pairwise distinct by definition of $\Theta$)} before proving it for $x$ in the complementary direction.
We denote
\[ 
\mathcal{A}_k^+ \coloneqq \{  x \in \R^n: v_k^\top x > 0 \},
\quad\text{and}\quad
\mathcal{A}_k^- \coloneqq \{  x \in \R^n: v_k^\top x < 0 \}
\]
where we recall that $v_1,\ldots,v_{d_1}$ are the rows of $V^{l+1}$.

\textbf{We first show that \eqref{eq:newgoal_finalstep} is satisfied on %$\mathcal{H}_k$
{$\mathcal{H}'_k$}
.}

Consider $x' \in  \mathcal{H}'_k$. %(possible as $\mathcal{H}'_k \neq \emptyset$).
Denote $B(c,\eta)$ the open Euclidean ball of radius $\eta>0$ centered at $c \in \R^n$. Given any $\eta > 0$, by continuity of $x \in \R^n \mapsto V^{l+1}x = (v_1^\top x, \cdots, v_{d_1}^\top x) \in \R^{d_1}$, there exists $x_\eta^+ \in B(x', \eta) \cap \mathcal{A}_k^+ $ and  $x_\eta^- \in B(x', \eta) \cap \mathcal{A}_k^-$ such that $ 1 = \text{sign}(v_k^\top x_\eta^{+}) \neq  \text{sign}(v_k^\top x_\eta^{-}) {=-1}$ while  for all $j \neq k$, $  \text{sign}(v_j^\top x_\eta^{\pm}) = \text{sign}(v_j^\top x') {\neq 0}$ (as the hyperplanes are pairwise distinct by definition of $\Theta$). It follows that $x_\eta^\pm \in \R^n- \cup \mathcal{H}_j$.
As a consequence, we have 
{$D(x^+_\eta)-D(x^-_\eta) = \diag(e_k)$ hence}
for every $w \in \R^n$:
\begin{align*}
\texttt{vec} \left( E_{k, k}   ({U^{l+1}})^\top w x'^\top, 0\right) 
&= 
 \texttt{vec} \left( % \lim_{\eta \to 0 }
 \left( D(x_\eta^+)- D(x_\eta^-) \right) ({U^{l+1}})^\top w x'^\top, 0\right) \\ 
 &= 
\lim_{\eta \to 0 } \texttt{vec} \left(   D(x_\eta^+) ({U^{l+1}})^\top w (x_\eta^+)^\top - D(x_\eta^-)  ({U^{l+1}})^\top w (x_\eta^-)^\top, 0\right) \\ 
% \text{ by construction of } U^{l+1} \\
            &=\lim_{\eta  \to 0}
             \left( \partial_{\x_T} g^\top (\x; x_\eta^+) w  -  \partial_{\x_T} g^\top (\x; x_\eta^-)  w \right) \in \overline{\underset{x \in \R^n- \cup \mathcal{H}_j}{\linspan} \{   \partial_{\x_T} g^\top (\x; x)  w  \} }.
\end{align*}
This establishes \eqref{eq:newgoal_finalstep} for any $x' \in \mathcal{H}'_k$.

\textbf{We finally show that \eqref{eq:newgoal_finalstep} is satisfied for any $x \coloneqq v_k$.}

With  $x' \in \mathcal{H}'_k$ as above the continuity of $x \in \R^n \mapsto V^{l+1}x = (v_1^\top x, \cdots, v_{d_1}^\top x) \in \R^{d_1}$ also implies the existence of $\gamma > 0$ such that
the vectors 
\[
x_{K} \coloneqq x' + \gamma K v_k,\quad K \in \{-2, -1, 1, 2 \}, 
\]
satisfy $v_k^\top x_K \neq 0$ while for all $j \neq k$, $\text{sign}(v_j^\top x_{K}) = \text{sign}(v_j^\top x')$ (as the hyperplanes are pairwise distinct by definition of $\Theta$), so that $x_K \in  \R^n- \cup \mathcal{H}_j$ and we similarly obtain
\begin{align*}
&\gamma\ \texttt{vec} \left( E_{k, k} (U^{l+1})^\top w v_k^\top, 0\right) \\
&= \gamma\ \texttt{vec} \left(  \underbrace{D(x_1)}_{= D(x_2)}   (U^{l+1})^\top w v_k ^\top 
-  \underbrace{D(x_{-1})}_{=D(x_{-2})}   (U^{l+1})^\top w v_k^\top, 0 \right) \\
&= \texttt{vec} \left(  D(x_2)   (U^{l+1})^\top w x_2^\top -  D(x_1)   (U^{l+1})^\top w x_1^\top
- \left( D(x_{-1})   (U^{l+1})^\top w x_{-1}^\top - D(x_{-2})    (U^{l+1})^\top w x_{-2}^\top \right), 0 \right)\\
&=[\partial_{\x_T} g (\x; x_2)]^\top  w -   [\partial_{\x_T} g^(\x; x_1)]\top   w -  \left\{[\partial_{\x_T} g(\x; x_{-1})]\top   w -   [\partial_{\x_T} g(\x; x_{-2})]\top  w\right\}
\\
%\text{ still by construction of } U^{l+1} 
 &\in \underset{x \in \R^n- \cup \mathcal{H}_j}{\linspan} \left\{   \partial_{\x_T} g^\top (\x; x)  w  \right\},
\end{align*}
which gives \eqref{eq:newgoal_finalstep}  for any $x \in \R v_k$. 

{We have thus proved that \eqref{eq:newgoal_finalstep} holds for any $x \in \mathcal{H'}_k \cup \R v_k$. Since $\R^n = \linspan(\mathcal{H'}_k \cup \R v_k)$ this establishes \eqref{eq:newgoal_finalstep} for every $x \in \R^n$. Since this holds for any $k$ this completes the proof of  \eqref{eq:MainPropNoLaw2-2} and therefore of the theorem.}
\end{proof}

\section{Proof of \Cref{prop:errorbound} {(approximate conservation of laws under discrete dynamics)}} \label{app:errorbound}

\errorbound*
\begin{proof}
Since \( h \) is a conservation law, {by \Cref{prop:newcharacter}} we have $\langle \nabla h(\theta_k), \nabla L_{Z_k}(\theta_k) \rangle = 0$ {for every $k$. Besides, a Taylor expansion yields}
\[
    h(\theta_{k+1}) - h(\theta_k) = \frac{\tau_k^2}{2} \, 
    {[\nabla L_{Z_k}(\theta_k)]^\top \partial^2 h(\xi) \nabla L_{Z_k}(\theta_k)}
\]
for some \( \xi \) in the segment \([\theta_k, \theta_{k+1}]\). Applying the bounds on \(\partial^2 h(\theta)\) and \(\nabla L_{Z_k}(\theta_n)\), we get
\[
    \mathbb{E} | h(\theta_{k+1}) - h(\theta_k) | \leq  \frac{\tau_k^2}{2} C_h C_L.
\]
Summing over \( k \) completes the proof.
\end{proof}

\section{Conservation laws for Adam Flow} \label{app:conservation_adam}
We recall that for Adam flow the space $\Wgx$ is defined by
\begin{align} \label{eq:wgadamfirst_app}
    \Wgx
    &\coloneqq \underset{Z = (x_i, y_i) \in ({\mathcal{X}}_\x \times \mathcal{Y})^N}{\linspan} \{  \textrm{sign} \left(\nabla \Loss_Z (\x)\right) \}.
    \end{align}
By directly adapting the results of \Cref{sec:clgf} with $\Wgx$ defined in \eqref{eq:wgadamfirst_app}, this leads to the direct corollary:
\begin{corollary} \label{coro:adam}
  Consider a loss $\loss(z,y)$ that satisfies \Cref{eq:conditionloss}.
Under Assumption \ref{as:main_assumption}, then for all $\x \in \Theta$:
\eq{ \label{eq:adamwg_withf}
 \Wgx  = \underset{w \in \Wfp }{\linspan} \{  \mathrm{sign} \left( \partial \phi(\x)^\top w \right) \}.
}
In particular if $\Wfp = \R^d$ one has
\eq{ \label{eq:adamwg}
 \Wgx  = \underset{w \in \R^d }{\linspan} \{  \mathrm{sign} \left( \partial \phi(\x)^\top w \right) \}.
}
\end{corollary}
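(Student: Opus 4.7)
The plan is to mirror the derivation of \Cref{prop:3} (which treats the GF case via the reparametrization $\phi$), simply inserting the $\mathrm{sign}$ operator at the right place. Starting from \eqref{eq:wgadamfirst_app} and the chain rule under \Cref{as:main_assumption}, every dataset gradient satisfies $\nabla L_Z(\theta) = \partial\phi(\theta)^\top w_Z$ with
\[
w_Z \coloneqq \tfrac{1}{N}\sum_i \partial f^{x_i}(\phi(\theta))^\top \nabla_z\ell(g(\theta,x_i),y_i) \in \Wfp.
\]
This single identity drives both inclusions.

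The forward inclusion $\Wgx \subseteq \linspan_{w\in\Wfp}\{\mathrm{sign}(\partial\phi(\theta)^\top w)\}$ is immediate: for any dataset $Z$ one has $\mathrm{sign}(\nabla L_Z(\theta)) = \mathrm{sign}(\partial\phi(\theta)^\top w_Z)$ with $w_Z\in\Wfp$, so the generating family of the LHS is a subset of that of the RHS, and the inclusion passes to the linear spans.

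For the reverse inclusion, the crucial observation is that although the $w_Z$ obtained from datasets are a priori restricted to convex combinations (with uniform weights $1/N$) of generators $\partial f^{x}(\phi(\theta))^\top v$, the hypothesis $\mathcal{V}_\ell(z) \equiv \mathcal{V}_\ell$ (a linear subspace independent of $z$) suffices to reach every $w\in\Wfp$ up to a positive scalar. Concretely, given $w = \sum_{j=1}^N \beta_j\, \partial f^{x_j}(\phi(\theta))^\top v_j \in \Wfp$ supplied by \Cref{coro:criteriawithf}, absorb the real coefficients into $\tilde v_j \coloneqq \beta_j v_j \in \mathcal{V}_\ell$ (valid because $\mathcal{V}_\ell$ is a linear subspace), then select $y_j$ with $\nabla_z \ell(g(\theta,x_j),y_j)=\tilde v_j$ (possible since $\mathcal{V}_\ell$ does not depend on $z$). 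The resulting dataset $Z=((x_j,y_j))_{j=1}^N$ satisfies $\nabla L_Z(\theta) = \frac{1}{N}\partial\phi(\theta)^\top w$, so positive-scalar invariance of sign gives $\mathrm{sign}(\partial\phi(\theta)^\top w) = \mathrm{sign}(\nabla L_Z(\theta)) \in \Wgx$. The specialization \eqref{eq:adamwg} to $\Wfp = \R^d$ is then a direct substitution.

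The main obstacle is precisely this reverse direction: since $\mathrm{sign}$ is nonlinear, one cannot invoke linearity of $\partial\phi(\theta)^\top$ as in \Cref{prop:3}. The resolution, which is the novel ingredient here, hinges on the interplay between the linearity of $\mathcal{V}_\ell$ (absorbing real coefficients into gradient targets) and the $z$-independence of $\mathcal{V}_\ell(z)$ (permitting free choice of labels). Once these jointly materialize $w$ as a genuine dataset gradient up to a harmless factor of $1/N$, the claim reduces to the trivial scalar-invariance of $\mathrm{sign}$.
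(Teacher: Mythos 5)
Your forward inclusion is fine, and so is the overall strategy of trying to realize $\partial\phi(\x)^\top w$, up to a positive scalar, as a genuine dataset gradient. The genuine gap is in the reverse inclusion, at the step ``select $y_j$ with $\nabla_z \loss(g(\x,x_j),y_j)=\tilde v_j$ (possible since $\LossSpace$ does not depend on $z$)''. \Cref{eq:conditionloss} defines $\LossSpace(z)$ as the \emph{span} of the realizable gradients $\{\nabla_z\loss(z,y):y\in\mathcal{Y}\}$, and only assumes this span is independent of $z$; it does not say that every element of $\LossSpace$ is itself a realizable gradient. Your absorption step $\tilde v_j \coloneqq \beta_j v_j$ is exactly where you leave the realizable set: for the Kullback--Leibler loss used in this paper, $\nabla_z\loss(z,y)=\mathbf{1}-y/z$ (entrywise, with $y$ nonnegative and $z$ positive), so the realizable gradients at $z$ are $\{v : v_i\le 1\ \forall i\}$, a proper subset of $\R^n=\LossSpace$ that no choice of label can escape; a $\tilde v_j$ with a coordinate larger than $1$ is never matched. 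In the gradient-flow setting this distinction is invisible because $\partial\phi(\x)^\top$ is linear and linear maps commute with spans (this is how \Cref{coro:criteria} and \Cref{prop:3} are obtained); here $\mathrm{sign}$ destroys precisely that linearity, so realizability --- not just spanning --- is what the argument needs.

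Moreover this gap cannot be closed from the stated hypotheses alone, because the equality \eqref{eq:adamwg_withf} is in fact false under only \Cref{eq:conditionloss} and \Cref{as:main_assumption}: take $n=1$, $\loss(z,y)=e^{z-y}$ (then $\nabla_z\loss=e^{z-y}>0$ and $\LossSpace=\R$ for every $z$), $g(\x,x)=\x_1(1+x^2)+\x_2(1+x^4)$ with $\phi=\id$. Every dataset gradient is a sum of positive multiples of the positive vectors $(1+x_i^2,1+x_i^4)^\top$, so $\mathrm{sign}(\nabla \Loss_Z(\x))=(1,1)^\top$ always and the left-hand side of \eqref{eq:adamwg_withf} is one-dimensional, whereas $\Wfp=\R^2$ and the right-hand side is all of $\R^2$. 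What rescues the corollary for the losses the paper actually uses (square, KL) is an additional property that your proof should have isolated and invoked: the realizable gradient set is star-shaped around $0$ and absorbing in $\LossSpace$, i.e.\ $0$ is realizable (take $y=z$) and for every $v\in\LossSpace$ there is $\varepsilon>0$ such that $\varepsilon' v$ is realizable for all $0<\varepsilon'\le\varepsilon$. Granting this, your argument goes through with one more use of the positive-scale invariance of $\mathrm{sign}$: shrink $w$ by a common small $\varepsilon>0$ so that every $\varepsilon\tilde v_j$ is realizable, and pad the dataset with zero-gradient points. (A secondary issue: you write the spanning representation of $w$ with exactly $N$ terms; reconciling the number of spanning terms with the dataset size requires either datasets of arbitrary size or precisely this zero-padding, which again needs $0$ to be a realizable gradient.)
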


Thus by \Cref{thm:neuripsminimal} and \Cref{coro:adam} one directly obtains:

\begin{theorem} \label{thm:neuripsminimal_app}
Under \Cref{eq:conditionloss}, if $\LossSpace = \R^n$, then considering $\Theta = \R^D$ and $\phi( \theta) \coloneqq  U V^\top$ for linear neural networks, one has:
$\Wfp = \R^d$ and $ \Wgx = \underset{w \in \R^d }{\linspan} \{  \text{sign} \left( \partial \phi(\x)^\top w \right) \}$ for Adam flows.
\end{theorem}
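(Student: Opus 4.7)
The statement factors as two assertions: (a) $\Wfp = \R^d$ and (b) $\Wgx = \linspan_{w \in \R^d}\{\mathrm{sign}(\partial\phi(\theta)^\top w)\}$ for the Adam flow. My plan is to observe that (a) has nothing to do with the training dynamics and is already supplied by \Cref{thm:neuripsminimal}, while (b) then reduces to the ``in particular'' clause of \Cref{coro:adam}. So there is really nothing to grind: the theorem is a packaging result, and the proof is a two-line chain of citations, provided one first checks that \Cref{as:main_assumption} is in force.

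\textbf{Step 1 (reduction to $\Wfp = \R^d$).} I would first verify \Cref{as:main_assumption} for the factorization $g(\theta,x) = f(\phi(\theta),x)$ with $\phi(\theta) = UV^\top$ and $f(a,x) = ax$: since $f$ is linear in $a$ (hence $\mathcal{C}^\infty$ in $a$) and $\phi$ is polynomial, the factorization is valid globally on $\Theta = \R^D$, in particular on a neighborhood of any $\theta_0$, and $\mathcal{X}_{\theta_0} = \R^m$ for every $\theta_0$. Now the space $\Wfp$ defined in~\eqref{eq:1} depends only on the factorization $(f,\phi)$, the data domain $\mathcal{X}_\theta$, and the loss subspace $\LossSpace$; it does \emph{not} depend on the choice of optimizer (GF vs.\ Adam flow). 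Consequently, the identity $\Wfp = \R^d$ established in the linear case of \Cref{thm:neuripsminimal} transfers verbatim to our setting, yielding assertion (a).

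\textbf{Step 2 (deducing the form of $\Wgx$).} With (a) in hand, I would apply \Cref{coro:adam}, which under \Cref{eq:conditionloss} and \Cref{as:main_assumption} gives
\[
\Wgx \;=\; \underset{w \in \Wfp}{\linspan}\,\bigl\{\mathrm{sign}(\partial\phi(\theta)^\top w)\bigr\}.
\]
Substituting $\Wfp = \R^d$ from Step~1 into this identity is exactly the ``in particular'' clause of \Cref{coro:adam} and produces assertion (b).

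\textbf{Main (mild) obstacle.} Since both ingredients are already proved in the paper, there is no genuine technical difficulty. The only point that deserves care is the conceptual one flagged in Step~1: one must explicitly note that $\Wfp$ is defined purely from $(f,\phi,\mathcal{X}_\theta,\LossSpace)$, so that the identity $\Wfp = \R^d$ proved under the GF dynamics in \Cref{thm:neuripsminimal} is a statement about the architecture and loss alone, and is therefore legitimately reusable when the dynamics is replaced by the Adam flow~\eqref{eq:continousadam}.
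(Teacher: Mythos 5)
Your proposal is correct and matches the paper's own (one-line) argument, which likewise obtains the theorem directly by combining \Cref{thm:neuripsminimal} (for $\Wfp = \R^d$) with \Cref{coro:adam}. The extra care you take in checking \Cref{as:main_assumption} and noting that $\Wfp$ is dynamics-independent is sound but not a departure from the paper's route.
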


Similarly, by applying \Cref{thm:oneattentionlayer} and \Cref{coro:adam} one directly obtains the following theorem for an attention layer:

\begin{restatable}{theorem}{thmoneattentionlayeradam} \label{thm:oneattentionlayerada^}
  Under \Cref{eq:conditionloss}, if $\LossSpace = \R^n$ {and $N \geq 2$} then  
  \[
  \Wfp = \R^d,\ \text{and}\  
  \Wgx = \underset{w \in \R^d }{\linspan} \{  \mathrm{sign} \left( \partial \phi(\x)^\top w \right) \},\ \forall \x \in \Theta_{\RG{\mathtt{att}}},
  \]
  %, one has:.
  for an attention layer and for Adam flows, where we recall that the reparametrization $\phi$ is defined by $\phi( \x) = (\phi_1, \phi_2)$ with $\phi_1 = Q^\top K$ and $\phi_2 = V^\top O$.
  
  Moreover as the parametrization $\phi_1$ and $\phi_2$ are separable, one has under the same assumptions:
   \[
  \Wgx = \underset{w = (w_1, w_2) \in \R^d }{\linspan} \left\{ \begin{pmatrix}
       \mathrm{sign} \left( \partial \phi_1(Q, K)^\top w_1 \right)  \\
       \mathrm{sign} \left( \partial \phi_2(V, O)^\top w_2 \right) 
  \end{pmatrix}\right\},\ \forall \x \in \Theta_{\RG{\mathtt{att}}}.
  \]
  %,
\end{restatable}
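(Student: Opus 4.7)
The plan is to reduce this statement to a straightforward combination of results already proved in the paper, rather than redoing the difficult rank-like computation. The object $\Wfp$ from \Cref{prop:3} is defined purely in terms of $f$, $\phi$ and $\nabla_z\ell$; it makes no reference to which flow (gradient or Adam) is under consideration. Consequently the equality $\Wfp = \R^d$ established in \Cref{thm:oneattentionlayer} for the attention layer under the hypotheses $\LossSpace = \R^n$ and $N \geq 2$ can be invoked verbatim here. The only thing that changes between the two settings is how $\Wgx$ is built from $\Wfp$, and this is precisely what \Cref{coro:adam} addresses: under \Cref{as:main_assumption} one has $\Wgx = \linspan_{w \in \Wfp}\{\mathrm{sign}(\partial\phi(\x)^\top w)\}$, and when $\Wfp=\R^d$ this collapses to the announced expression. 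So the first two displayed identities follow immediately.

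For the separability refinement, I would exploit the fact that the two components of the reparametrization act on disjoint parameter blocks: $\phi_1 = Q^\top K$ depends only on $(Q,K)$ and $\phi_2 = V^\top O$ depends only on $(V,O)$. After identifying matrices with their vectorizations and writing $d = d_1 + d_2$ with $d_1,d_2$ the dimensions of the codomains of $\phi_1,\phi_2$, the Jacobian of $\phi$ is block-diagonal with respect to the splits $\x = ((Q,K),(V,O))$ and $w = (w_1,w_2) \in \R^{d_1}\times\R^{d_2}$. Explicitly,
\[
\partial\phi(\x)^\top w \;=\;
\begin{pmatrix}
\partial\phi_1(Q,K)^\top w_1 \\
\partial\phi_2(V,O)^\top w_2
\end{pmatrix}.
\]
Since $\mathrm{sign}$ is applied coordinate-wise, it commutes with this block decomposition, yielding the claimed componentwise expression for every vector in the span.

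The only minor point to be careful about is that coordinate-wise $\mathrm{sign}$ is not a linear operation, so passing from the generic formula \eqref{eq:adamwg} to the block-separated form is not a matter of decomposing a linear span but simply of describing each generator. That observation is immediate once the block-diagonal structure of $\partial\phi$ is written down, and is where the separability of the parametrization is genuinely used. There is no real obstacle: the heart of the work has already been done in \Cref{thm:oneattentionlayer}, and the present statement is a direct consequence of \Cref{coro:adam} together with the product structure of $\phi$.
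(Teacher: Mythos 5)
Your proposal is correct and follows exactly the paper's route: the paper derives this theorem directly from \Cref{thm:oneattentionlayer} (giving $\Wfp = \R^d$, which is flow-independent) combined with \Cref{coro:adam}, and justifies the second display by the separability of $\phi = (\phi_1,\phi_2)$, i.e.\ the block-diagonal structure of $\partial\phi$ that you write out explicitly. Your added remark that the coordinate-wise $\mathrm{sign}$ respects the block decomposition of each generator (rather than any linear decomposition of the span) is precisely the observation the paper leaves implicit.
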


\paragraph{Numerical experiments} 
The code provided in \href{https://github.com/sibyllema/Conservation-laws-for-ResNets-and-Transformers}{our GitHub repository} numerically investigates the dimension of the space $\Wgx$ defined in \eqref{eq:adamwg}, i.e. the space spanned by the sign vectors of gradients $\partial \phi(\x)^\top z$, where $\phi(U, V) = U V^\top$ and $\x = (U, V) \in \R^{n \times r} \times \R^{m \times r}$. For various choices of $n, m, r$, the code: 1) generates random parameter matrices $U$ and $V$; 2) samples random vectors $z \in \mathbb{R}^{n \times m}$; 3) computes the gradients of $\phi$ with respect to $U$ and $V$; and 4) collects the sign patterns of the projected gradients $\partial \phi(\x)^\top z$. It then estimates the dimension of the linear span of these sign vectors by calculating the rank of the resulting sign matrix. {This provides a lower bound on the dimension of the sign-gradient space $\Wgx$. }In all tested configurations, {this lower bound}
%the dimension of this sign-gradient space $\Wgx$ 
consistently equals the total parameter dimension $D = (n + m) r$, indicating that the sign vectors span the full parameter space, expect in the case $n = m = r = 1$ where in that case the dimension of the space is equal to $1$. Consequently, this numerical observation suggests that there are no conservation laws for the mapping $\phi(U, V)$, except in the case where $n = m = r = 1$. In the latter case, there is indeed exactly one independent conservation law given by: $\x \mapsto |U|-|V|$ as we now show.

\paragraph{The special case $n =m=r = 1$.}
We show that $h: \x= (u, v) \in \R^2 \mapsto |u|-|v|$ is a conservation law for $g$ for the Adam flow \eqref{eq:continousadam}.

By characterization of conservation laws we only need to show that for any $\x \in (\R_*)^2$, 
$$
\nabla h(\x) \perp \underset{w \in \R }{\linspan} \{  \text{sign} \left( \partial \phi(\x)^\top w \right) \} =  \R \text{sign} \left( \begin{pmatrix}
     v \\ u
\end{pmatrix} \right),
$$
which is always true as $\nabla h (\theta) = \begin{pmatrix}
    \text{sign} (u) \\
    -  \text{sign} (u)
\end{pmatrix}$.

%%%%%%%%%%%%%%%%%%%%%%%%%%%%%%%%%%%%%%%%%%%%%%%%%%%%%%%%%%%%%%%%%%%%%%%%%%%%%%%
%%%%%%%%%%%%%%%%%%%%%%%%%%%%%%%%%%%%%%%%%%%%%%%%%%%%%%%%%%%%%%%%%%%%%%%%%%%%%%%

\end{document}